\documentclass[sigconf]{acmart}
\usepackage{tikz}
\usetikzlibrary{positioning}

\usepackage{booktabs}  
\usepackage{siunitx}   
\usepackage{xcolor}    
\usepackage{tikz}
\usetikzlibrary{positioning, shadows, decorations.pathreplacing, backgrounds}
\usepackage{booktabs}
\usepackage{siunitx}
\usepackage{colortbl}
\usetikzlibrary{positioning, arrows.meta}

\newcommand{\Tp}{\Pi}

\newcommand{\push}{\mathcal{T}}

\usepackage{tikz}
\usetikzlibrary{patterns}
\usepackage{amsmath}
\usepackage[multiple]{footmisc}
\usepackage{booktabs}
\usepackage{amsthm}
\usepackage{dsfont}

\usepackage[utf8]{inputenc}
\usepackage{balance}
\usepackage{siunitx}
\usepackage{multirow}
\pagenumbering{arabic}
\usepackage{graphicx}
\usepackage{listings}
\usepackage{commath}
\usepackage{epstopdf}
\usepackage{color}
\usepackage{url}
\usepackage{caption}
\usepackage{alltt}
\usepackage{bbm}
\usepackage{mathrsfs}
\usepackage{float}
\usepackage{subcaption}
\captionsetup[subfigure]{labelformat=brace}
\usepackage{graphicx,fancyvrb}
\usepackage[linesnumbered,ruled,vlined]{algorithm2e}
\usepackage{algpseudocode}

\usepackage{mathtools}
\usepackage{capt-of}
\usepackage{colortbl}
\usepackage{xcolor}
\usepackage{xfrac}
\usepackage{footnote}
 \usepackage{enumitem}
\usepackage{array}
\usepackage{arydshln}
\usepackage{cleveref}
\setlength\dashlinedash{0.2pt}
\setlength\dashlinegap{1.5pt}
\setlength\arrayrulewidth{0.3pt}

\definecolor{mygreen}{rgb}{0,0.6,0}
\definecolor{myred}{rgb}{0.6,0,0}
\definecolor{mygray}{rgb}{0.5,0.5,0.5}
\definecolor{mymauve}{rgb}{0.58,0,0.82}
\definecolor{myblue}{rgb}{0,0,1}



\usepackage{listings}
\lstnewenvironment{VerbatimText}[1][]{
    
    \lstset{fancyvrb=true,basicstyle=\footnotesize,captionpos=b,xleftmargin=2em,#1}
}{}

\PassOptionsToPackage{hyphens}{url}\usepackage{hyperref}


\newcommand{\Dom}{\mathcal}
\newcommand{\bDom}{\mathbfcal}

\DeclareMathAlphabet\mathbfcal{OMS}{cmsy}{b}{n}

\newcommand{\KLD}{D_{{\mathrm {KL}}}}


\DeclareMathOperator*{\argmin}{argmin}

\definecolor{ReviewerOneColor}{RGB}{240, 110, 0}   
\definecolor{ReviewerTwoColor}{RGB}{0, 139, 109}   
\definecolor{ReviewerThreeColor}{RGB}{45, 85, 205}
\definecolor{MetaReviewerColor}{RGB}{142, 69, 133} 
\newcommand{\reviewerone}[1]{{#1}\vspace{2mm}}

\newcommand{\reviewerthree}[1]{{#1}\vspace{2mm}}

\newcommand{\nit}[1]{\textit{#1}}
\newcommand{\costColor}[1]{\purple{#1}}
\newcommand{\Simplex}[1]{\Delta_{#1}}
\newcommand{\prQ}{Q}
\newcommand{\coupleM}{\pi}
\newcommand{\pushT}{T}

\newcommand{\purple}[1]{\textcolor{purple}{#1}}
\newcommand{\sourceColor}[1]{\textcolor{black}{#1}}

\newcommand{\nComment}[1]{\Comment{{#1}}}
\newcommand{\Domain}[1]{\mathcal{{#1}}}
\newcommand{\pr}{P}

\newcommand{\st}[1]{\mb{#1}}
\newcommand{\vt}[1]{\textbf{#1}}
\newcommand{\mtx}[1]{\textbf{#1}}
\newcommand{\dom}[1]{\Dom #1}

\newcommand{\otmonge}{\ot_{\textit{Monge}}}
\newcommand{\ot}{\mathrm{OT}}

\newcommand{\KL}{\KLD}
\newcommand{\sinkhornCoeff}{\rho}
\newcommand{\car}{\texttt{Car}\xspace}

\newcommand{\boston}{\texttt{Boston}\xspace}

\newcommand{\adult}{\texttt{Adult}\xspace}
\newcommand{\compas}{\texttt{COMPAS}\xspace}

\newcommand{\relaxedOTCoeff}{\ensuremath{\lambda}}

\newcommand{\dist}{\textit{dist}}
\newcommand{\sys}{\textsc{OTClean}\xspace}
\newcommand{\mainAlg}{\textsc{FastOTClean}\xspace}

\newcommand{\mc}[1]{\mathcal{#1}}

\newcommand{\ignore}[1]{}

\definecolor{black}{rgb}{0,0,0}
\definecolor{grey}{rgb}{0.8,0.8,0.8}
\definecolor{red}{rgb}{1,0,0}
\definecolor{green}{rgb}{0,1,0}
\definecolor{darkgreen}{rgb}{0,0.5,0}
\definecolor{darkpurple}{rgb}{0.5,0,0.5}
\definecolor{darkdarkpurple}{rgb}{0.3,0,0.3}
\definecolor{blue}{rgb}{0,0,1}
\definecolor{shadegreen}{rgb}{0.95,1,0.95}
\definecolor{shadeblue}{rgb}{0.95,0.95,1}
\definecolor{shadered}{rgb}{1,0.85,0.85}
\definecolor{shadegrey}{rgb}{0.85,0.85,0.85}
\definecolor{oddRowGrey}{rgb}{0.80,0.80,0.80}
\definecolor{evenRowGrey}{rgb}{0.85,0.85,0.85}
\definecolor{lightpurple}{rgb}{0.88,1.0,1.0}

\newcommand{\babak}[1]{{\texttt{\color{red} Babak: [{#1}]}}}

\newcommand{\indep}{\mbox{$\perp\!\!\!\perp$}}






\newcommand{\db}{{D}}

\newcommand{\RNum}[1]{\uppercase\expandafter{\romannumeral #1\relax}}

\newcommand{\mb}[1]{{\mathbf{#1}}}

\newcommand{\proj}[1]{{\Pi}}
\newcommand{\sel}[1]{{\sigma}}

\newcommand{\cut}[1]{}
\newcommand{\eat}[1]{}

\usepackage[multiple]{footmisc}

 %




\SetKwInput{KwInput}{Input}
\SetKwInput{KwOutput}{Output}



\AtBeginDocument{%
  \providecommand\BibTeX{{%
    \normalfont B\kern-0.5em{\scshape i\kern-0.25em b}\kern-0.8em\TeX}}}

\setcopyright{none}
\settopmatter{printacmref=false} 
\renewcommand\footnotetextcopyrightpermission[1]{} 
\settopmatter{printfolios=false}


\title{\sys: Data Cleaning for Conditional Independence Violations using Optimal Transport}

\author{Alireza Pirhadi}
\affiliation{%
  \institution{University of Western Ontario}
  \city{London}
  \country{Canada}
}
\email{apirhadi@uwo.ca}

\author{Mohammad Hossein Moslemi}
\affiliation{%
  \institution{University of Western Ontario}
  \city{London}
  \country{Canada}
}
\email{mmoslem3@uwo.ca}

\author{Alexander Cloninger}
\affiliation{%
  \institution{University of California San Diego}
  \city{San Diego}
  \state{CA}
  \country{USA}
}
\email{acloninger@ucsd.edu}

\author{Mostafa Milani}
\affiliation{%
  \institution{University of Western Ontario}
  \city{London}
  \country{Canada}
}
\email{mostafa.milani@uwo.ca}

\author{Babak Salimi}
\affiliation{%
  \institution{University of California San Diego}
  \city{San Diego}
  \state{CA}
  \country{USA}
}
\email{bsalimi@ucsd.edu}

\begin{document}

\pagenumbering{Roman} 

\newpage
\pagenumbering{arabic}

\begin{abstract}
Ensuring Conditional Independence (CI) constraints is pivotal for the development of fair and trustworthy machine learning models. In this paper, we introduce \sys, a framework that harnesses optimal transport theory for data repair under CI constraints. Optimal transport theory provides a rigorous framework for measuring the discrepancy between probability distributions, thereby ensuring control over data utility. We formulate the data repair problem concerning CIs as a Quadratically Constrained Linear Program (QCLP) and propose an alternating method for its solution. However, this approach faces scalability issues due to the computational cost associated with computing optimal transport distances, such as the Wasserstein distance. To overcome these scalability challenges, we reframe our problem as a regularized optimization problem, enabling us to develop an iterative algorithm inspired by Sinkhorn's matrix scaling algorithm, which efficiently addresses high-dimensional and large-scale data. Through extensive experiments, we demonstrate the efficacy and efficiency of our proposed methods, showcasing their practical utility in real-world data cleaning and preprocessing tasks. Furthermore, we provide comparisons with traditional approaches, highlighting the superiority of our techniques in terms of preserving data utility while ensuring adherence to the desired CI constraints.
\end{abstract}

\maketitle

\section{Introduction} \label{sec:intro}

{\em Conditional Independence (CI)} plays a pivotal role in probability and statistics. At its core, a CI statement, represented as $(X \indep Y \mid Z)$, implies that when $Z$ is known, the knowledge of $X$ doesn't provide any further insight into $Y$, and vice versa. To illustrate, consider rainfall ($Z$) influencing both the wetness of grass ($X$) and the decision to use an umbrella ($Y$). If we're already aware that it rained, then determining that the grass is wet doesn't shed any additional light on a person's choice to carry an umbrella.
CI is foundational in numerous areas. It underpins causal reasoning and graphical models, serving as a cornerstone for efficient probabilistic inference~\cite{koller2009probabilistic,pearl2009causal}. In the realm of machine learning (ML), CI's significance spans across feature selection~\cite{koller1996toward}, algorithmic fairness~\cite{bozdag2013bias, torralba2011unbiased, hooker2021moving, hajian2016algorithmic, salimi2019interventional}, representation learning~\cite{pogodin2022efficient}, model interpretability~\cite{guyon2003introduction, arjovsky2019invariant, galhotra2021explaining}, transfer learning~\cite{rojas2018invariant}, and domain adaptation~\cite{magliacane2018domain}.

{\em Conditional Independence (CI)} in statistics can be analogized with integrity constraints in databases~\cite{wong2000implication}. Specifically, in the context of databases, dependencies such as Functional Dependencies (FDs), Conditional Functional Dependencies (CFDs), and Multivalued Dependencies (MVDs) encapsulate critical semantic and structural constraints. These constraints are imperative for maintaining data integrity in relational databases and play a pivotal role in tasks like data quality management and data cleaning~\cite{bertossi2006consistent,bohannon2006conditional,fan2022foundations}. In a parallel vein, CI represents key statistical constraints that are indispensable for ensuring the robustness and validity of datasets in domains like ML and statistical inference. To elucidate this analogy further, consider the following example.

\begin{example} \label{ex:intro} \em In this example, we underscore the significance of  maintaining and enforcing CI constraints in data pipelines as essential steps in constructing fair and and reliable ML models, illustrated within the contexts of medical diagnosis and job applications.
\vspace{-0.2cm}
\paragraph{\bf Medical Diagnosis} Consider a dataset used for predicting patient recovery from respiratory infections, consists of attributes such as patient demographics, including their ZIP code, health measurements, the bacterial {strain} causing the infection, the prescribed {antibiotic}, and the {recovery outcome}. Based on domain knowledge, one would expect that the {patient's ZIP code} should be independent of the {recovery outcome} given all causal factors that affect the patient's recovery, i.e., $(\text{ZIP code} \ \indep \ \text{Recovery} \mid \text{Causal factors})$. However, existing biases, such as certain ZIP codes having better healthcare access or particular residents' health behaviors, can introduce spurious associations. Additionally, data quality issues, including incorrect ZIP code entries or inaccurately recorded recovery outcomes, or even systematic data quality issues on other attributes that are distributed non-randomly for patients with different ZIP codes, can also violate this expected independence. Training a model on this dataset may lead to a model that picks up spurious correlations between recovery outcomes and ZIP codes rather than the actual causal factors, affecting the model's performance during deployment. Furthermore, simply dropping ZIP code and not using it for training ML models does not resolve the issue if the constraint is violated due to data quality issues on the selected features. In that case, the performance of the model during deployment becomes different for different subpopulations with different ZIP codes, leading to potential geographic biases.

\vspace{-0.2cm}
\paragraph{\bf Job Application} Consider a dataset used for making hiring decisions. This dataset consists of attributes from applicants' CVs and insights from interviews, encompassing variables such as hobby, hometown, previous companies worked at, university attended, project experiences, and other qualifications. In an ideal scenario, factors considered extraneous, like hobby, university attended, and hometown, should be independent of the hiring decision when conditioned on the applicant's qualifications, i.e., ($ \text{Extraneous Factors}  \ \indep \ \text{Hiring Decision} \mid \text{Qualifications}$) However, this constraint can be violated in the dataset due to various reasons. Biases may emerge if, for example, a significant proportion of successful candidates in the dataset share hobbies perceived as technical or come from specific renowned hometowns. Data quality issues, such as inconsistent categorization of qualifications or historical biases in hiring practices, further compound the issue. These extraneous factors not only divert the model's focus from genuine qualifications but can also inadvertently introduce biases. When these factors correlate with sensitive attributes, such as race and gender, the resulting model may become profoundly unfair.
\end{example}


In this paper, \textbf{we address the problem of repairing a dataset with respect to CI constraints.} Given a dataset that violates a CI constraint due to data biases and data quality issues, our goal is to clean the data to ensure adherence to CI constraints while preserving data utility. Much research has been dedicated to computing optimal repairs for data dependencies, particularly functional dependencies and conditional functional dependencies~\cite{livshits2020computing, kolahi2009approximating, bohannon2006conditional, DBLP:conf/icdt/LivshitsK21}. However, the challenge of repairs concerning CI remains relatively unexplored. A significant contribution in this area is the work by Salimi et al.~\cite{salimi2019interventional}. Their study links CI to Multi-valued dependencies (MVDs) and provides methods to compute optimal repairs by {\bf minimizing the number of tuple deletion and insertion} to ensure consistency with an MVD~\cite{salimi2019interventional}.

A significant challenge in data cleaning for ML is how to ensure that these operations do not distort the inherent statistical properties of datasets and preserve data utility. This challenge becomes especially more noticeable when considering that, in this context, {\bf the significance of individual data tuples is secondary to the underlying distribution they collectively represent}~\cite{dasu2012statistical}. Achieving the goal of preserving these statistical properties requires a method to quantify the distance between the distributions of the original and repaired data. Traditional criteria in databases, such as subset minimality and minimum cardinality repair, often fall short in effectively addressing this requirement~\cite{bertossi2006consistent}. While various methods exist for measuring the distance between probability distributions, including information theoretic measures like Kullback-Leibler (KL) and Jensen-Shannon (JS) divergences~\cite{cover1999elements}, {\bf Optimal Transport (OT) metrics, such as the Wasserstein (or Earth Mover's) distance, have demonstrated their superiority in various ML tasks}~\cite{arjovsky2017wasserstein, frogner2015learning}. 

OT provides a metric for comparing probability distributions by determining the most efficient way to convert one distribution into another. This transformation is facilitated through the use of a  {\bf transport plan}, which is a probabilistic mapping that specifies how much mass is moved from each data point in one distribution to its corresponding point in the second distribution. This mapping is optimized according to a designated cost function. One distinctive feature of OT is its capability to transform a domain-specific metric between individual data points into a comprehensive metric between entire distributions~\cite{arjovsky2017wasserstein}. This adaptability empowers OT to preserve the topological and structural properties of the data that cannot be captured and maintained using other divergences and distances between distributions.

In our paper, we introduce \sys, a novel framework that leverages OT theory for data cleaning to enforce CI constraints. \sys addresses datasets that violate CI constraints by learning a {\em probabilistic data cleaner}. This cleaner probabilistically updates attribute values to ensure adherence to CI constraints. It finds an optimal repair, aiming to satisfy the CI constraint while minimizing the OT distance from the original dataset, which indicates minimal alteration to the data. This approach is versatile, allowing for user-defined metrics to tailor cleaning to specific needs and preserving data integrity, which is crucial for subsequent applications. Additionally, \sys's probabilistic mapping operates at the tuple level, making it well-suited for streaming environments and scenarios that require model retraining on newly acquired data.

A primary hurdle in employing OT in ML is its considerable computational cost. Specifically, for discrete data, OT necessitates solving a linear program. Techniques like the network simplex or interior point methods are frequently applied, but their computational intensity is significant for high-dimensional data. In fact, their cost scales as \(O(d^3 \log(d))\) when comparing histograms of dimension \(d\)~\cite{pele2009fast}. {\bf We demonstrate that using OT, the problem of repairing data under CI constraints can be formulated as a Quadratically Constrained Linear Program (QCLP)}~\cite{van1966programming,boyd2004convex}. Although this problem can be tackled using established optimization techniques, it is important to note that solving a QCLP is generally NP-hard, presenting challenges in terms of scalability and computational feasibility for high-dimensional datasets.

To address the scalability challenges, we propose the use of approximate algorithms for solving our repair problem efficiently. At the core of our approach is the Sinkhorn distance~\cite{cuturi2013sinkhorn}, an approximate OT metric that introduces entropy regularization, penalizing transport plans based on their entropy. This regularization intuitively smoothens the OT problem, making it more manageable. Importantly, it allows us to leverage Sinkhorn's matrix scaling algorithm~\cite{sinkhorn1964relationship}, which operates at speeds several orders of magnitude faster than conventional methods. Expanding on this, {\bf we formulate our repair problem as a regularized optimization problem that employs a relaxed version of OT along with entropic regularization}. This optimization problem remains non-convex; however, {\bf we have developed an alternating algorithm with guaranteed convergence.} Remarkably, our approach exhibits a substantial improvement in efficiency compared to the QCLP formulation, making it scalable to high-dimensional data.

To assess the effectiveness of our approach, we apply it to two distinct domains: algorithmic fairness~\cite{salimi2019interventional}, where CI constraints play a crucial role, and data cleaning, where the utilization of CI as a statistical constraint has proven to be beneficial~\cite{yan2020scoded}. Our experiments reveal that our techniques outperform the current state-of-the-art database repair methods that involve CI~\cite{salimi2019interventional}. In the realm of algorithmic fairness, {\bf our approach not only yields fairer algorithms but also maintains superior performance compared to baseline methods}. As for data cleaning, our findings demonstrate that {\bf enforcing CI constraints results in more accurate data representations, thereby helping prevent ML models from relying on spurious correlations.}  Furthermore, we have shown that our methods can complement existing data cleaning techniques and address their limitations by effectively removing spurious correlations.

\section{Background} \label{sec:background}

The notation used is summarized in Table~\ref{tab:symbols}. We use uppercase letters ($X$, $Y$, $Z$, $V$) to denote variables and lowercase letters ($x$, $y$, $z$, $v$) to represent their potential values. When referring to sets of variables or values, we use boldface notation ($\mathbf{X}$ or $\mathbf{x}$). The {\em support} or {\em domain} of a variable $\mathbf{V}$ is given by $\mathcal{V}$. We use $d_{\dom V}$ to refer to $|{\Dom V}|$, i.e., the size of ${\Dom V}$'s support. For any discrete random variable $X$, its probability distribution is represented by $P_X(x)$; in some contexts, we might simply use $P$, indicating the probability of $X$ assuming the value $x$. It's essential to note that such a probability distribution $P$ can be equivalently seen as a point in the {\em probability Simplex} $\Simplex{\mb V} = \{ \mb X \in \mathbb{R}^{d_{\Dom V}} \mid \forall v \in \mathcal{V}, \mb X_{v} \geq 0 \text{ and } \sum_{v \in \mathcal{V}} \mb X_{v} = 1 \}$, where, $\mb X_v$ is the probability assigned to value $v$. Intuitively, $\Simplex{\mb V}$  defines the set of all possible probability distributions over the finite domain $\mathcal{V}$.

Given a probability distribution $\pr \in \Simplex{\st{V}}$ over a set of variables $\mb V$, and considering non-empty and disjoint subsets $\mb X, \mb Y, \mb Z$ within $\mb V$, the distribution $P$ is said to be {\em consistent} with a {\em conditional independence (CI) constraint} $(\sigma: \mb Y \indep \mb X \mid \mb Z)$, denoted as $\pr \models \sigma$, if and only if, for all values $x \in \mc{X}$, $y \in \mc{Y}$, and $z \in \mc{Z}$, the condition 
$\pr_{\mb X,\mb Y \mid \mb Z}(x,y \mid z) = \pr_{\mb X\mid \mb Z}(x \mid z) \cdot \pr_{\mb Y \mid \mb Z}(y \mid z)
$ is satisfied. If the entire set $\st{V}$ is precisely the union of the subsets $\mb X, \mb Y,$ and $\mb Z$, i.e., $\st{V} = \mb X \cup \mb Y \cup \mb Z$, then the constraint $\sigma$ is termed as {\em saturated}.

When $\pr$ is {\em inconsistent} with the constraint $\sigma: Y \indep X \mid Z$, the {\em degree of inconsistency} of $\pr$, denoted $\delta_\sigma(\pr)$, 
can be quantified using the {\em conditional mutual information (CMI)}, denoted as $I(X; Y \mid Z)$, which measures the amount of information about $Y$ obtained by knowing $X$, given $Z$. Formally, 

{
\begin{align}
   I(X; Y \mid Z) &= \sum_{x \in \mc{X}, y \in \mc{Y}, z \in \mc{Z}} \pr(x,y,z) \log \left( \frac{\pr_{X,Y \mid Z}(x,y \mid z)}{\pr_{X\mid Z}(x \mid z) \pr_{Y \mid Z}(y \mid z)} \right) \nonumber \\ & = \KLD[\pr(X,Y,Z)\mid \pr(X, Z)\pr(Y\mid Z)]  \nonumber 
\end{align}
}

\noindent where $\KLD$ is the Kullback–Leibler divergence\footnote{The Kullback–Leibler divergence between two distribution $Q(X)$ and $P(X)$ is defined as: 
$ D_{\mathrm{KL}}(P\parallel Q) = \sum_{x \in \mathcal{X}} P(x) \log \left( \frac{P(x)}{Q(x)} \right) $.
}.

\noindent The probability distribution $\pr$ is consistent with the constraint  $\sigma: Y \indep X \mid Z$ if and only if $I(X; Y \mid Z) = 0$.

Given a dataset $\db = \{ \mb v_i \}_{i=1}^n $ consisting of i.i.d. samples drawn from a distribution $\pr \in \Simplex{\st{V}} $, each sample $\mb v_i $ corresponds to an element in the domain $\mc V $. The \textit{empirical distribution} $\pr^{\db} $ of the dataset $\db $ is defined as: $
\pr_{\mb V}^{\db}(\mb v) = \frac{1}{n} \sum_{i=1}^{n} \mathbb{I}(\mb v_i = \mb v),
$
where $\mathbb{I} $ is the indicator function that returns 1 if its argument is true and 0 otherwise. For each value $\mb v $ in the domain $\mc V $, $\pr_{\mb V}^{\db}(\mb v) $ computes the fraction of times $\mb v $ appears in the dataset $\db $. This empirical distribution provides an estimate of the true underlying distribution $\pr $ from which the samples in $\db $ were drawn. Given a conditional independence constraint $\sigma: \mb Y \indep \mb X \mid \mb Z $, we say $\db $ is consistent with $\sigma $ if the empirical distribution $\pr^{\db} $ associated with $\db $ is consistent with it. This is also denoted as $D \models \sigma $.

\begin{table}
  \centering
  \caption{Summary of notation and symbols.}
  \vspace{-3mm}
  \label{tab:symbols}
  \setlength\tabcolsep{2 pt}
 \begin{tabular}{l l}
  \toprule
    Symbol              & Description \\
    \midrule
    $X,Y,Z,V$   & Variables\\
    $\mb X, \mb Y, \mb Z, \mb V$  & Sets of variables \\ 
    $\Dom X$ & Domain of a variable $X$\\
    $d_{\Dom X}$ & Size of the domain of a variable $X$\\ 
    $x \in  \Dom X $  & Their values \\      
    $\pr$ & Probability distributions\\
    $\Simplex{\mb V}$ & A probability simplex over a domain of variables $\mb V$  \\
    $\mb p \in \Simplex{\mb V}$ & A probability vector \\
    $\coupleM$ & Transport plan\\
    $\sigma: (X \indep Y \mid Z)$ & A CI constraint\\
    $\delta_\sigma(\pr)$ & Degree of inconsistency of $\pr$ to a CI constraint $\sigma$  \\
    $c,\mtx{C}$ & Cost function and cost matrix\\
    \bottomrule
    \end{tabular}
\end{table}

\subsection{Background on Optimal Transport}\label{sec:ot}

This section provides an overview of optimal transport, serving as the foundational theory for \sys. We further delve into Sinkhorn regularization and the concept of relaxed optimal transport, which underpin the approximate repair methods introduced in Section~\ref{sec:BCDEOT}.
\paragraph{\bf Monge problem:} {\em The Optimal Transport (OT) problem} seeks the most efficient way of transferring mass from a probability distribution $\pr$ to another while preserving the total mass. The OT problem's classical formulation is {\em the Monge problem} where the objective is to identify a {\em transport map} $\pushT$ that pushes a distribution $\pr\in\Simplex{\st{X}}$ forward to a distribution $\prQ\in\Simplex{\st{Y}}$ while minimizing the total cost of transporting mass. Formally, $\prQ$, known as {\em the pushforward} of $\pr$ under the transport map $\pushT$, is a new distribution defined as $\prQ(A) = \pr(\pushT^{-1}(A))$ for any $A \subseteq \mathcal{Y}$. In other words, the pushforward $\prQ$ characterizes the distribution of the images of $\pr$ under the map $\pushT$. The Monge problem can be formally defined as follows: Given two distributions $\pr$ and $\prQ$ with discrete supports $\Domain{X}$ and $\Domain{Y}$, respectively, and a cost function $c: \mathcal{X} \times \mathcal{Y} \rightarrow \mathbb{R}_{\geq 0}$, the goal is to find a transport map $\pushT: \mathcal{X} \rightarrow \mathcal{Y}$ that pushes forward $\pr$ to $\prQ$, such that the total cost of transporting mass is minimized, i.e.,

{
\begin{equation}
\otmonge(\pr, \prQ) = \argmin_{\pushT: \mathcal{X} \rightarrow \mathcal{Y}} \sum_{\vt{x}_i \in \mathcal{X}} c(\vt{x}_i, \pushT(\vt{x}_i)),
\end{equation}
}

where $\pushT$ is a transport map and $\pushT_{\#}\pr = \prQ$.


\paragraph{{\bf Kantorovich Formulation}} \reviewerone{The deterministic transport approach in Monge's problem might not always admit a solution. Specifically, there may be cases where finding a pushforward between two distinct probability distributions is not feasible. {To overcome this limitation, Kantorovich introduced a more flexible formulation by considering probabilistic transport methods. Unlike the deterministic approach, which requires a direct one-to-one mapping between elements, probabilistic transport allows for a more versatile mapping where elements from one distribution can be mapped to multiple elements in another distribution, reflecting real-world scenarios where such distributions cannot always be perfectly aligned.} This approach is operationalized through the concept of \emph{transport plans} or \emph{couplings}. Here, a coupling {refers to} a joint distribution, denoted as \( \coupleM \), over the product space \( \Domain{X} \times \Domain{Y} \). This coupling ensures that its marginals match the given distributions \( \pr \) and \( \prQ \), meaning \( \pr = \coupleM(\st{X}) \) and \( \prQ = \coupleM(\st{Y}) \). Denote \( \Pi(\pr, \prQ) \) as the space of all possible couplings. {In this context,} the \emph{primal Kantorovich formulation} of the OT problem is defined as follows:}

\vspace{-0.2cm}
{
\begin{equation}
\ot(\pr, \prQ) = \argmin_{\coupleM \in \Pi(\pr, \prQ)} \sum_{\vt{x}_i \in \mathcal{X}}\sum_{\vt{y}_j \in \mathcal{Y}} c(\vt{x}_i, \vt{y}_j) \coupleM(\vt{x}_i, \vt{y}_j).
\label{eq:otprimal}
\end{equation} 
}

\noindent \reviewerone{The goal of the OT plan \(\coupleM\) is to minimize the overall transport cost, as expressed in Equation~\ref{eq:otprimal}, while adhering to the probabilistic nature of the transport. When the cost \(c\) represents the Euclidean distance, the OT distance is recognized as the \emph{Wasserstein distance}.}

\begin{figure}[h]
\centering
\includegraphics[width=1.0\linewidth]{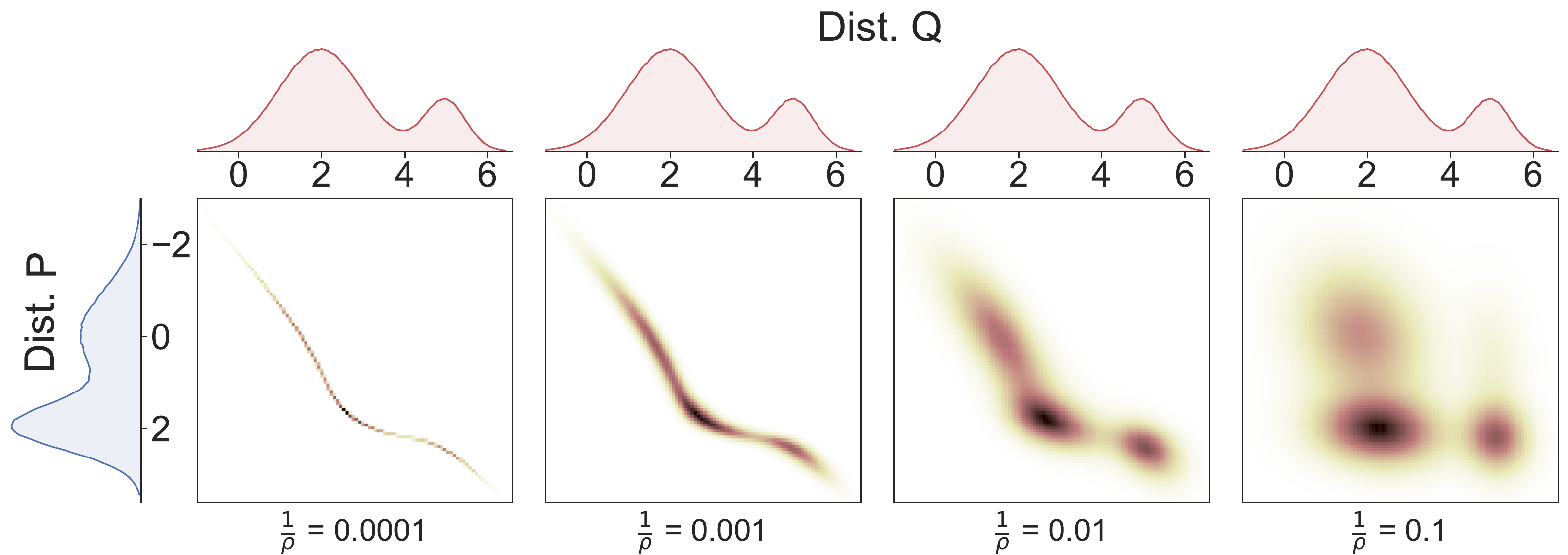}
    \caption{\reviewerone{The coefficient $1/\rho$ in regularized OT impacts the mapping between distributions $P$ and $Q$: higher coefficients (on the right) lead to smoother mappings and spread mass more evenly between $P$ and $Q$.}}
    \label{fig:sinkhorn}
\end{figure}

\paragraph{\bf Entropic Regularization:} \reviewerone{OT problems, as described by Equation~\ref{eq:otprimal}, essentially involve solving a linear program. The computational complexity of solving such a linear program \(O(n^3 \log n)\) using the network simplex, where \(n\) represents the number of variables or constraints~\cite{pele2009fast}. This complexity can become a significant challenge, especially for high-dimensional datasets. To mitigate this computational burden, entropic regularization has been introduced as an effective strategy~\cite{cuturi2013sinkhorn}. By incorporating an entropy term into the optimal transport formulation, the problem is transformed into a nonlinear but {\em smooth} optimization problem, which can be solved more efficiently. This adjustment not only reduces the complexity of the problem but also enables its solution using linear-time algorithms. In the case of entropic regularization, the added entropy term effectively spreads out the transport plan, preventing the concentration of mass in a few narrow pathways. This spreading leads to a more evenly distributed plan, reducing the presence of sharp peaks and troughs in the optimization landscape. As a result, the optimization problem becomes more regular, with a smoother surface that is easier to navigate using optimization algorithms.}

In more formal terms, the entropic OT is defined by:
{
\begin{align}
\argmin_{\coupleM\in \Pi(\pr, \prQ)} \sum_{\vt{x}_i \in \mathcal{X}}\sum_{\vt{y}_j \in \mathcal{Y}} c(\vt{x}_i,\vt{y}_j) \coupleM(\vt{x}_i,\vt{y}_j) - \frac{1}{\sinkhornCoeff} H(\coupleM).
\label{eq:entropic-otlp}
\end{align}
}
\noindent where \(H(\coupleM)\) is the entropic regularizer:
{
\begin{align*}
H(\coupleM) = -\sum_{\vt{x}_i \in \mathcal{X}}\sum_{\vt{y}_j \in \mathcal{Y}} \coupleM(\vt{x}_i,\vt{y}_j) \log(\coupleM(\vt{x}_i,\vt{y}_j))
\end{align*}
}
\noindent and \(1/\sinkhornCoeff\) is the {\em entropic regularization parameter}. A smaller value means that we emphasize the accuracy of the transport plan, while a larger value leans towards computational efficiency.

Importantly, the OT plan $\coupleM^*$, which solves the constrained optimization problem defined in~\eqref{eq:entropic-otlp}, manifests as a diagonal scaling of the matrix $\mtx{K} := e^{-\frac{\mtx{C}}{\sinkhornCoeff}}$. Specifically, it has been shown that the solution to~\eqref{eq:entropic-otlp} is unique and takes the form \(\coupleM^* = \text{diag}(\mathbf{u}) \cdot \mtx{K} \cdot \text{diag}(\mathbf{v})\), with \(\mathbf{u}\) and \(\mathbf{v}\) acting as scaling vectors. These scaling vectors are identified through an iterative process, which ensures that the resultant transport plan complies with marginal probability constraints.  The Sinkhorn Algorithm, crucial for this process, iteratively adjusts \(\mathbf{u}\) and \(\mathbf{v}\) to ensure that the resultant transport matrix, $\coupleM^*$, adheres to the given marginal constraints. \reviewerthree{Lines~4 and 5 of Algorithm~\ref{alg:sinkhorn} represent these adjustments. Specifically, \(\vt{u}\) and \(\vt{v}\) are updated iteratively to balance the rows and columns of $\mtx{K}$, ensuring that the marginals of the scaled coupling matrix $\coupleM$ closely match $\vt{p}$ and $\vt{q}$.}



{ 
\begin{algorithm}
\caption{Sinkhorn Algorithm}\label{alg:sinkhorn}
\KwInput{Probability distributions $P,Q$ and cost function $c$}
\KwOutput{A transport plan between $P$ and $Q$} $\vt{p}:=\nit{vector}(P);\vt{q}:=\nit{vector}(Q);\mtx{C}:=\nit{matrix}(c);$\label{ln:init-pqc}\\
$\vt{u}:=\mathbbm{1}_{d_\dom{X}};\vt{v}:=\mathbbm{1}_{d_\dom{Y}};\mtx{K}:= e^{-\frac{\mtx{C}}{\sinkhornCoeff}};$\label{ln:init-k} \nComment{Initialization}\\
    \While(\nComment{Sinkhorn iterations}){$\vt{u}$ and $\vt{v}$ are not converged}{ 
        $\vt{u} := \vt{p} \oslash (\mtx{K}\cdot \vt{v});$ \nComment{\(\oslash\): Element-wise division}\\
        $\vt{v} := \vt{q} \oslash (\mtx{K}\cdot \vt{u});$   \label{ln:sinkhorn-it-1}\\
    }
    $\coupleM := \textit{diag}(\vt{u})\cdot \mtx{K} \cdot \textit{diag}(\vt{v});$\label{ln:plan}\\
    \Return{$\coupleM;$}
\end{algorithm}
}


\begin{example}\label{ex:reg-ot}\em \reviewerone{Figure~\ref{fig:sinkhorn} presents the optimal transport between two Gaussian mixture model distributions, $P$ and $Q$. Each distribution is a mixture of two Gaussians, providing a basis for examining the effects of entropic regularization on transport plans. The leftmost graph in Figure~\ref{fig:sinkhorn} shows the original OT plan without entropic regularization. The optimal plan is more deterministic and sharp in mapping elements between the distributions. As we introduce and increase the entropic regularization coefficient, the subsequent transport plans become more spread out. This spread is visually observable in Figure~\ref{fig:sinkhorn}, where higher coefficients lead to transport plans that are less focused and more distributed across the space. This effect illustrates the principle of entropic regularization: a lower coefficient results in a transport plan that closely aligns specific elements of the distributions, whereas a higher coefficient allows for a broader, more generalized mapping. The intuition behind these transport plans can be understood by considering how the elements of one distribution, say ranging between $-2$ and $3$ in $P$, might be transported to another distribution $Q$ with values ranging between $0$ and $6$. Without regularization, the transport plan seeks to map these elements in a direct and specific manner. However, with entropic regularization, the mapping allows for the mass from one value in $P$ to be spread across the target distribution and to be transported to many values in $P$, thereby avoiding overly precise mappings that might not generalize well across different scenarios. This approach is particularly useful when dealing with high-dimensional data, where overly specific mappings can lead to overfitting and reduced model robustness.}
\end{example}

\paragraph{\bf Relaxed Optimal Transport:}
Relaxed OT, introduced in~\cite{frogner2015learning}, provides a loss function for supervised learning grounded in OT principles. Rather than relying on hard marginal constraints typical of entropic regularized OT, it adopts softer penalties, using regularization based on the Kullback-Leibler (KL) divergence. This approach leads to:

\vspace{-0.2cm}
{
\begin{align}
\argmin_{\coupleM \in \mc J} \sum_{\vt{x}_i \in \mathcal{X}} &\sum_{\vt{y}_j \in \mathcal{Y}} c(\vt{x}_i,\vt{y}_j) \coupleM(\vt{x}_i,\vt{y}_j) - \frac{1}{\sinkhornCoeff} H(\coupleM)\; +\nonumber\\
&\relaxedOTCoeff (\KL(\coupleM({Y}),\prQ) + \KL(\coupleM({X}),\pr)).
\label{eq:relax}
\end{align}
}

\noindent where $\relaxedOTCoeff$ is the relaxation regularization coefficient, and $\KL$ denotes the KL divergence between two probability distributions. Contrasting this with the entropic OT outlined in Equation~\ref{eq:entropic-otlp}, the transport plan $\coupleM$ in relaxed OT can be an element of $\mc J$, which includes all possible joint probability distributions over the product space $\Simplex{\st{X}} \times \Simplex{\st{Y}}$. It has been shown in~\cite{frogner2015learning} that Sinkhorn algorithm also works for the relaxed version of the entropic OT in Equation~\ref{eq:entropic-otlp} but with different update rules for $\vt{u}$ and $\st{v}$~\cite[Proposition 4.2]{frogner2015learning}:  

{\begin{align}
\vt{u} = (\vt{p}\oslash(\mtx{K}\cdot \vt{v}))^{\frac{\sinkhornCoeff\relaxedOTCoeff}{\sinkhornCoeff\relaxedOTCoeff+1}} \hspace{0.5cm}\text{and}\hspace{0.5cm} \vt{v} =  (\vt{q}\oslash(\mtx{K}^\top\cdot \vt{u}))^{\frac{\sinkhornCoeff\relaxedOTCoeff}{\sinkhornCoeff\relaxedOTCoeff+1}}    \label{eq:update-rules}
\end{align}
}

\section{Problem Definition}\label{sec:problem}


Given a database $\db$ that is inconsistent with a CI constraint $\sigma: (\mb X \indep \mb Y \mid \mb Z)$, our objective is to resolve this inconsistency by updating the attribute values of each datapoint in $\db$ to derive a repaired database $\hat{\db}$ which is consistent with $\sigma$. To ensure minimal distortion and maintain the utility of the data, we assume we are given a user-defined cost function that quantifies the cost of updating a datapoint (this cost function generalizes the minimality criteria in update-based data repair in databases~\cite{bertossi2006consistent}). Leveraging the principles of OT, our goal is to develop a data cleaner, envisioned as a transport map, that repairs $ \db $ at a minimum cost. Next, we define the problem of learning an optimal data cleaner for a CI constraint.

\begin{definition}[CI Data Cleaner] \label{df:repair} \em
Consider a database $ \db = \{ \mb v_i \}_{i=1}^n $ that violates a CI constraint $ \sigma $, i.e., $ \db \not \models \sigma $, and a user-defined cost function $ c: \mathcal{V} \times \mathcal{V} \rightarrow \mathbb{R}_{\geq 0} $ that assigns a cost to transforming or perturbing one tuple in 
$\mathcal{V}$ to another tuple in 
$\mathcal{V}$. The \textit{CI data cleaner} of $ \db $ with respect to $ \sigma $ is a transport map $ T^* : \mathcal{V} \rightarrow \mathcal{V} $ that transforms $ \db $ into a database $ \hat{\db} = T^*(\db) = \{ \hat{\mb v_i} = T^*(\mb v_i) \}_{i=1}^n $ such that $ \hat{\db} \models \sigma $ and has the minimum transportation cost, i.e., $ T^* $ is the solution to the following constrained optimization problem: {\begin{align}
    \arg\min_{T} \sum_{i=1}^{n} c(\mb v_i, T(\mb v_i)) & 
    & \text{s.t.} \quad T(\db) \models \sigma. \label{eq:detrepair}
\end{align}
}
\end{definition}

We illustrate an optimal data cleaner with an example: 

\begin{example} \label{ex:odcexexis}\em     Let's consider a database $\db_1 = \{(0, 0, 1)$, $(1, 0, 1)$, $(0, 1, 1)$, $(0, 1, 0)\}$ defined over binary variables $X$, $Y$, and $Z$. $\db_1$ violates the CI constraint $\sigma:Y \indep Z$ because the probability $P_{Y,Z}(1, 0)$ is $\frac{1}{4}$, which is not equivalent to the product of the marginal probabilities $P_{Y}(1) = \frac{2}{4}$ and $P_Z(0) = \frac{1}{4}$. Further, suppose cost is measured using Euclidean distance.  
    An optimal CI repair can be obtained using the transport map $T$, which maps $(0, 0, 1) \rightarrow (0, 0, 0)$ and other tuples to their current values. As a result, by updating one attribute value, $T$ transforms $\db_1$ into a repaired database 
    $\hat{\db_1} = \{(0, 0, 0), (1, 0, 1), (1, 1, 0), (0, 1, 1)\}$,
    which is consistent with $\sigma$. 
\end{example}


However, the CI data cleaner defined in Definition~\eqref{df:repair} might not lead to a minimum cost repair. This is especially true if $\db$ is a bag, which is typically the case with databases used for ML. These databases are either bags or projections onto a subset of features that yield a bag. We illustrate this with an example:

\begin{example}\label{ex:odcexnotexis}
\em Continuing with Example~\ref{ex:odcexexis}, now consider a database $\db_2 = \{(1, 0, 0), (1, 0, 1), (1, 1, 0), (1, 1, 0)\}$, which is now a bag, and is inconsistent with the constraint $Y \indep Z$. Similarly, $\hat{\db_2} = \{(1, 0, 0)$, $(1, 0, 1)$, $(1, 1, 0)$, $(1, 1, 1)\}$ is a minimum cost repair for $\db_2$, obtained by modifying only one attribute value. However, no transport map exists that can transport $\db_2$ into $\hat{\db_2}$ simply because $(1, 1, 0)$ cannot be mapped to both itself and $(1, 1, 1)$. Upon close examination, it becomes evident that no transport map can lead to a repair for $\db_2$ with cost 1. 
\end{example}



\paragraph{\bf Probabilistic Optimal Data Cleaner} As demonstrated in Example~\ref{ex:odcexnotexis}, the transport map defined in Definition~\ref{df:repair} does not always yield the minimum cost repair (although it can always produce a trivial repair by mapping every tuple to a single tuple, which completely distorts the distribution). Indeed, it's possible for the minimum cost repair to be outside the feasible region defined by the problem in Equation~\eqref{eq:detrepair}. Drawing from the Kantorovich relaxation of OT, we shift our approach to seeking a transport plan, or transport coupling, denoted as $\pi(\mb v', \mb v)$, as an alternative to a deterministic transport map $T$. Here, the marginal distribution $\pi(\mb v) = \pr^{\db}$ represents the empirical distribution of the database $\db$, and $\pi(\mb v')$ is the target distribution that is consistent with the CI constraint. This transport plan yields a probabilistic mapping, $\pi(\mb v' \mid \mb v)$, which probabilistically updates a data point $\mb v \in \db$ to $\mb v'$ following the mapping. The repaired database is then obtained by applying this mapping to $\db$, by sampling. In essence, Definition~\ref{df:repair} transitions into a problem where the aim is to (1) identify a transport plan $\pi(\mb v', \mb v)$ that pushforwards the distribution $\pi(\mb v) = \pr^{\db}$, i.e., the empirical distribution associated with $\db$ into one consistent with the CI constraints, and (2) among all distributions with the same support and consistent with the constraint, find the distribution $\pi(\mb v')$ with the minimum OT distance to $\pi(\mb v) = \pr^{\db}$. Formally, an {\em optimal probabilistic data cleaner} for CI constraint seeks to clean data using a probabilistic mapping $\pi( \mathbf{v}' \mid \mathbf{v})$ associated with a transport plan or probabilistic coupling $\pi( \mathbf{v}', \mathbf{v})$, obtained by solving the following optimization problem:  

\vspace{-0.3cm}
{
\begin{align}\hspace{-3mm}
    \arg\min_{\pi} \sum_{i=1}^{d_{\Dom V}} \sum_{j=1}^{d_{\Dom V}} c(\mathbf{v}_i, \mathbf{v}'_j) \pi(\mathbf{v}_i, \mathbf{v}'_j)
    & \ \ \text{s.t.} \ \ \pi(\mathbf{v}) = \pr^{\mathrm{\db}}, \; \pi(\mb v') \models \sigma. \label{def:potr}
\end{align}}
\noindent The feasible region of the optimization problem defined in Equation~\ref{def:potr} consists of all possible probability distributions that satisfy the constraint, hence including a distribution associated with a minimal cost repair. Therefore, one can find a mapping that transforms the empirical distribution of $\db$ into a consistent distribution with the minimum cost. Moreover, the optimal probabilistic mapping, derived from solving Equation~\ref{def:potr}, provides an approach for probabilistic data cleaning. For large datasets, samples drawn from this probabilistic cleaner will lead to a dataset \( \hat{\db} \) whose empirical distribution \( \pr^{\hat{\db}} \) closely aligns with the target distribution \( \pr(\mathbf{v'}) \), in line with the law of large numbers. Consequently, the resulting dataset is approximately consistent with the constraint. In ML applications, this level of approximation is generally adequate.

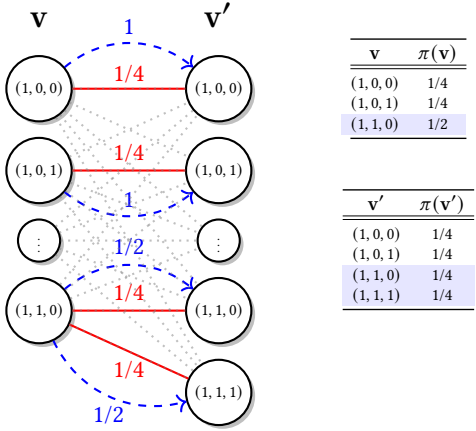
\begin{figure}
\begin{tikzpicture}[node distance=1.5cm, auto, thick, scale=0.7, every node/.style={scale=0.7}]

\tikzstyle{vertex}=[circle, fill=white, draw, minimum size=1cm, drop shadow={shadow xshift=0.05cm, shadow yshift=-0.05cm, opacity=0.5}]

\node[vertex] (a) {$(1, 0, 0)$};
\node[vertex] (b) [below=0.2cm of a] {$(1, 0, 1)$};
\node[vertex] (dots1) [below=0.2cm of b, scale=0.8] {$\vdots$};
\node[vertex] (c) [below=0.2cm of dots1] {$(1, 1, 0)$};

\node[vertex] (a') [right=1.5cm of a] {$(1, 0, 0)$};
\node[vertex] (b') [below=0.2cm of a'] {$(1, 0, 1)$};
\node[vertex] (dots2) [below=0.2cm of b', scale=0.8] {$\vdots$};
\node[vertex] (c') [below=0.2cm of dots2] {$(1, 1, 0)$};
\node[vertex] (d') [below=0.2cm of c'] {$(1, 1, 1)$};

\draw[-,red] (a) -- (a') node[midway,above,scale=1.5,yshift=-1pt] {1/4};

\draw[->,blue, dashed, bend left=40] (a) to node[midway,above,scale=1.5,yshift=0pt] {1} (a');

\draw[-,red] (b) -- (b') node[midway,above,scale=1.5,yshift=0pt] {1/4};

\draw[->,dashed, blue, bend left=-40] (b) to node[midway,above,scale=1.5,yshift=0pt] {1} (b');

\draw[-,red] (c) -- (c') node[midway,above,scale=1.5,yshift=0pt] {1/4};

\draw[->, dashed,blue, bend left=40] (c) to node[midway,above,scale=1.5,yshift=0pt] {1/2} (c');

\draw[-,red] (c) -- (d') node[midway,below,scale=1.5,yshift=0pt] {1/4};
\draw[->, dashed, blue, bend left=-40] (c) to node[midway,below,scale=1.5,yshift=0pt] {1/2} (d');

\foreach \source in {dots1, a, b, c} {
    \foreach \dest in {dots2, a', b', c', d'} {
        \ifx\source\dest\else
            \draw[-, dotted, gray, opacity=0.5] (\source) -- (\dest);
        \fi
    }
}


\node[scale=2, above=0.2cm of a] (v) {$\mathbf{v}$};
\node[scale=2, above=0.2cm of a'] (v'){$\mathbf{v'}$};

 \node[left=-4.5cm of a, anchor=west] (lhstable) {
\begin{tabular}{@{}ccS[table-format=1.2]@{}}
\toprule
\Large{$\mathbf{v}$} & \Large{$\pi(\mathbf{v})$} \\
\hline
\midrule
$(1, 0, 0)$ & 1/4 \\
$(1, 0, 1)$ & 1/4 \\
\rowcolor{blue!10}
$(1, 1, 0)$ & 1/2 \\
\bottomrule
\end{tabular}
};

\node[below=0.5cm of lhstable, anchor=north] (rhstable) {
\begin{tabular}{cc}
\toprule
\Large $\mathbf{v}'$ &  \Large $\pi(\mathbf{v}')$ \\
\hline
\midrule
$(1, 0, 0)$ & 1/4 \\
$(1, 0, 1)$ & 1/4 \\
\rowcolor{blue!10}
$(1, 1, 0)$ & 1/4 \\
\rowcolor{blue!10}
$(1, 1, 1)$ & 1/4 \\
\bottomrule
\end{tabular}
}; 
\end{tikzpicture}
\caption{Graphical representation of the plan $\pi(\mb v, \mb v')$ for $\db_2$. Nodes represent elements in $\mc V$. Labeled red edges indicate joint probabilities $\pi(\mb v, \mb v')$, while dashed directed edges depict the probabilistic mapping $\pi(\mb v \mid \mb v')$. Only nodes and edges with non-zero probabilities are shown for clarity.}
\label{fig:tpexample}
\end{figure}



\begin{example}\label{ex:odcexnotexis2} \em
Consider $\db_2 = \{(1, 0, 0)$, $(1, 0, 1)$, $(1, 1, 0)$, $(1, 1, 0)\}$ from Example~\ref{ex:odcexnotexis}. The probabilistic mapping $\pi(\mb v, \mb v')$ is graphically represented in Figure~\ref{fig:tpexample}, which depicts the bipartite graph constructed from the elements of the domain $\mc V$. Labeled red edges illustrate the joint probabilities $\pi(\mb v, \mb v')$, while dashed directed edges showcase the corresponding probabilistic mapping $\pi(\mb v \mid \mb v')$. The graph only includes nodes and edges for which $\pi(\mb v, \mb v')$ and $\pi(\mb v \mid \mb v')$ are non-zero to maintain clarity. It's evident that the marginal distribution $\pi(\mb v)$ displayed in Figure~\ref{fig:tpexample} matches the empirical distribution $\pr^{\db_2}$ associated to $\db_2$. Furthermore, $\pi(\mb v \mid \mb v')$ primarily maps all elements to themselves with a probability of 1. However, it transports half of the mass from $(1,1,0)$ to itself and the other half to $(1,1,1)$ to repair the constraint violation. This results in a distribution $\pi(\mb v')$ consistent with the constraint. Notably, the OT cost of this repair is $1/4$ since just $1/4$ of the mass with cost 1 transitions from $(1,1,0)$ to $(1,1,1)$. 

The mapping $\pi(\mb v \mid \mb v')$ can be employed to clean $\db_2$ probabilistically. Due to the limited sample size, this doesn't guarantee consistency. Still, for a larger database, the repaired database becomes representative of $\pi(\mb v')$ and hence becomes consistent with the constraint. To illustrate this, consider another database $\db_3$ echoing the tuples in $\db_2$, but each tuple is now replicated $n$ times. This mirrors the empirical distribution of $\db_2$ and still violates the constraint. In such a scenario, repairing $\db_3$ with $\pi(\mb v \mid \mb v')$ likely results in a consistent database. Probabilistically repairing the $2n$ instances of $(1, 1, 0)$ in $\db_3$ through the mapping $\pi(\mb v'\mid \mb v)$ can be interpreted as a sequence of $2n$ Bernoulli trials with a 1/2 probability. On average, this yields $n$ tuples of $(1, 1, 0)$ and $n$ tuples of $(1, 1, 1)$, ensuring consistency with the constraints.
\end{example}


\vspace{-3mm}
\paragraph{\bf Discussion on Complexity} Designing scalable algorithms to solve the optimization problem outlined in \eqref{def:potr} and subsequently computing optimal repairs for CI constraints presents significant challenges. A straightforward approach entails exploring the vast space of all distributions consistent with the CI, computing OT distance in relation to the empirical distribution of $\db$, and identifying the optimal solution. This method, however, is not feasible primarily due to the intractable nature of the space of consistent distributions. Furthermore, as discussed  in~\ref{sec:intro}, the computation of OT is computationally demanding. In our context, the transport plan involves \(d_{\Dom V}^2\) variables, thereby exacerbating the inherent complexity.

Although a detailed complexity analysis of the optimization problem~\ref{def:potr} is not addressed in this paper, it is worth noting that our problem is akin to the computation of minimum update-based repair (U-repair) for MVDs~\cite{bertossi2006consistent}. U-repair aims to identify a repair that necessitates the fewest attribute value modifications to enforce an MVD. Specifically, given a database \(D\) with attributes \(XYZ\) and an MVD \(X \twoheadrightarrow Y\), the decision problem is whether \(D\) has an optimal U-repair with no more than \(k\) modifications. This decision problem can be translated to our repair challenge by presuming a uniform distribution over \(D\), considering a cost function \(c(x, y, z, x', y', z')\) that enumerates the number of modifications required to obtain $(x', y', z')$ from $(x, y, z)$, and checking if \(D\) can achieve an optimal repair at a cost lesser than \(k\) given the conditional independence \(X \indep Y \mid Z\). Under the specified assumptions, it is easy to check \(D \models (X \indep Y \mid Z) \) if and only if \(D \models X \twoheadrightarrow Y\).  While there's extensive literature on the U-repair problem for Functional Dependencies~\cite{kolahi2009approximating,livshits2020computing}, to the best of our knowledge, it hasn't been studied for MVDs. 

\ignore{



%
%

\noindent In Equation~\ref{eq:repairproblem}, $\push$ is the set of possible pushforwards from the discrete space $\bDom V$ to itself, and $\dist$ is a function that measures the distance between two distributions, e.g., the family of {\em f-divergence} that generalizes the relative entropy (aka the Kullback-Leibler (KL) divergence)~\cite{csiszar1967information}, and  {\em optimal transport} measures 
such as the {\em Wasserstein distance}~\cite{villani2009optimal}. 

A repair $\Tilde{D}$ for $D$ w.r.t. $\Sigma$ is a database obtained from applying the pushforward $\Tilde{T}$ on $D$:

\vspace{-3mm}
\begin{align}\label{eq:repair}
\Tilde{D}=\{\Tilde{v}_j|\exists v_i \in D \wedge \Tilde{v}_j=\Tilde{T}(v_i)\}    
\end{align}

\noindent The idea of the repair in Equation~\ref{eq:repair} is as follows. Assuming the pushforward $\Tilde{T}$ converts the distribution $\pr$ to $\Tilde{P}$ that satisfies $\Sigma$, the repair $\Tilde{D}$, which is obtained from mapping data in $D$ using $\Tilde{T}$ will also respect the CI constraints.  



The {\em degree of inconsistency} of a distribution $\pr$ from a constraint $(\mb X \indep \mb Y \mid \mb Z)$ can be quantified by measuring the distance between $\pr_{X,Y,Z}(\mb x,  \mb y, \mb z )$ and  $\pr_{Y,Z}(\mb y, \mb z )\pr_{X\mid Z}(\mb x, \mb z )$, which is zero iff $(\mb X \indep \mb Y \mid_{\pr} \mb Z)$; for KL divergence, it is in correspondence with conditional mutual information~\cite{yeung1997framework}.

\babak{We should check whether the problem of finding optimal cardinality repair for saturated MVDs is in correspondent to our repair for a particular cost function.}
}
\section{Efficient Computation of Probabilistic Optimal Data Cleaner} \label{sec:methods}

In this section, we introduce efficient methods for computing the optimal data cleaner for CI constraints as described in~\eqref{def:potr}. In Section~\ref{sec:QCLP}, we formulate the problem as a Quadratically Constrained Linear Program (QCLP). This formulation allows for the derivation of an exact solution using existing efficient algorithms designed for QCLP. Subsequently, in Section~\ref{sec:BCDEOT}, we present an approximate version of the optimization problem in~\eqref{def:potr}. This approach facilitates the development of scalable and efficient solutions using iterative algorithms, particularly those based on Sinkhorn's matrix scaling.

\ignore{\begin{figure*}[h]
    \centering    \includegraphics[width=0.5\textwidth]{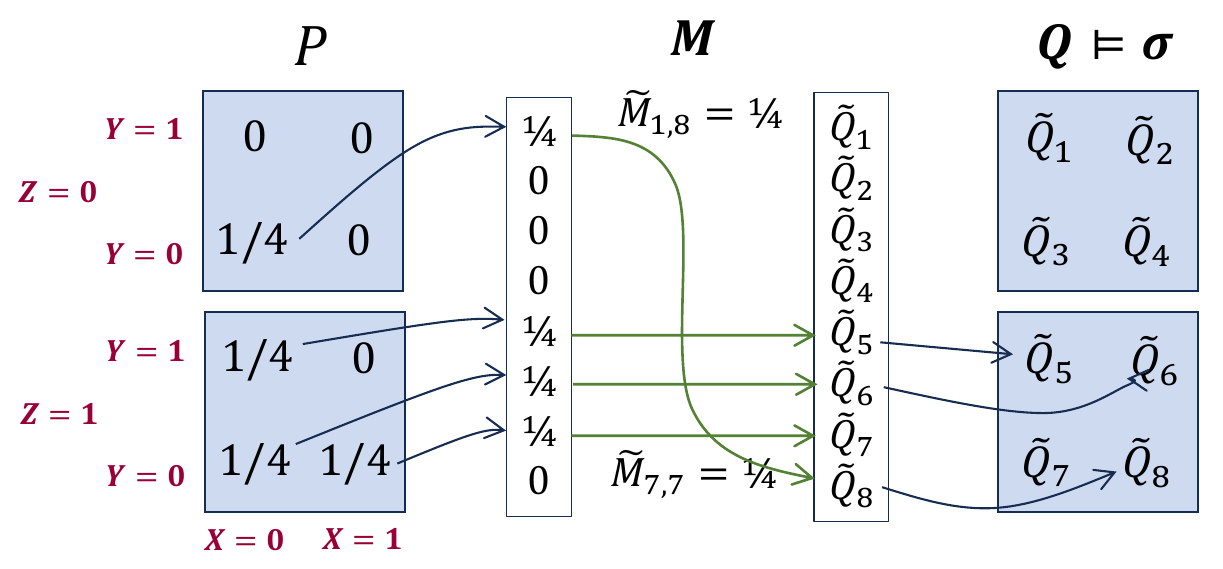}
    \caption{The QCLP formulation of the optimal data cleaner}
    \label{fig:qclp}
\end{figure*}}

\subsection{QCLP Formulation} \label{sec:QCLP} We present a QCLP designed to find an optimal data cleaner, as outlined in Section~\ref{sec:problem}. This program takes three inputs: a database $D$, a CI constraint $\sigma$, and a cost function $c$. We assume that $\sigma$ is a saturated CI constraint (i.e., it contains all attributes of $D$~cf.~\ref{sec:background}), with discussions on extending to unsaturated CI in Section~\ref{sec:unsat}.

To formulate the QCLP, we first describe the decision variables in the program, followed by an explanation of the constraints and the objective function. For clarity and better understanding, we use $D_2$ from Example~\ref{ex:odcexnotexis2} to demonstrate the QCLP formulation.

\begin{figure}
\begin{tikzpicture}[node distance=2cm, auto, thick, scale=0.7, every node/.style={scale=0.7}]

\node[align=center, anchor=west] at (-7,0)(pmatrix) {
    {\huge $\coupleM(\vt{v},\vt{v}')$}\\[10pt]
    \begin{tabular}{c@{}c}
      $\sourceColor{\pr^{D_2}(\vt{v})=\coupleM(\vt{v})\quad}$ & 
      ${\LARGE \begin{bmatrix}
        \tilde{\coupleM}_{1,1} & \cdots & \tilde{\coupleM}_{1,7} & \tilde{\coupleM}_{1,8} \\
        \tilde{\coupleM}_{2,1} & \cdots & \tilde{\coupleM}_{2,7} & \tilde{\coupleM}_{2,8} \\
        \tilde{\coupleM}_{3,1} & \cdots & \tilde{\coupleM}_{3,7} & \tilde{\coupleM}_{3,8}
      \end{bmatrix}}$ \\[5pt]
      & $\tilde{\prQ}(\vt{v}')=\coupleM(\vt{v}')$
    \end{tabular}
};

\node[anchor=north, right=0.5cm of pmatrix] (ptable) {
\begin{tabular}{cc}
\toprule
\Large $\mathbf{v}'$ &  \Large $\tilde{\prQ}=\pi(\mathbf{v}')$ \\
\hline
\midrule
$(0, 0, 0)$ & \Large $\tilde{\prQ}(0,0,0)$ \\
$\vdots$ & $\vdots$ \\
$(1, 1, 0)$ & \Large $\tilde{\prQ}(1,1,0)$ \\
$(1, 1, 1)$ & \Large $\tilde{\prQ}(1,1,1)$ \\
\bottomrule
\end{tabular}
}; 

\node[below=0.2cm of ptable, anchor=north] (validity) {
{\Large \begin{tabular}{l c}
\textbf{Validity constraints:} & \\
\multirow{2}{*}{$\quad\tilde{\coupleM}_{1,1} \ge 0, \tilde{\coupleM}_{1,2} \ge 0, \cdots, \tilde{\coupleM}_{3,8} \ge 0$} \\
\end{tabular}}
}; 

\node[below=0.0cm of pmatrix, anchor=north] (objectivef) {
{\Large \begin{tabular}{l c}
\textbf{Objective:} & \\
\multirow{2}{*}{$\quad\min_{\tilde{\coupleM}} (\costColor{1 \times} \tilde{\coupleM}_{1,1}+\costColor{2\times} \tilde{\coupleM}_{1,2}+$} \\
\multirow{2}{*}{$\hspace{1.3cm}\costColor{2\times} \tilde{\coupleM}_{2,1}+...+\costColor{1 \times}\tilde{\coupleM}_{3,8})$} \\
\end{tabular}}
}; 

\node[below=0.4cm of validity, anchor=north] (marginal) {
{\Large \begin{tabular}{l c}
\textbf{Marginal constraints:} & \\
\multirow{2}{*}{$\quad\tilde{\coupleM}_{1,1}+\tilde{\coupleM}_{1,2}+\cdots+\tilde{\coupleM}_{1,8}\sourceColor{=\frac{1}{4}}$} \\
\multirow{2}{*}{$\quad\tilde{\coupleM}_{2,1}+\tilde{\coupleM}_{2,2}+\cdots+\tilde{\coupleM}_{1,8}\sourceColor{=\frac{1}{4}}$} \\
\multirow{2}{*}{$\quad\tilde{\coupleM}_{3,1}+\tilde{\coupleM}_{3,2}+\cdots+\tilde{\coupleM}_{3,8}\sourceColor{=\frac{1}{2}}$} \\
\end{tabular}}
}; 

\node[below=0.4cm of objectivef, anchor=north](independenceC) {
{\large \begin{tabular}{l c}
\textbf{Independence constraints:} & \\
\multirow{2}{*}{$\tilde{Q}_{Y,Z}(0,0)=\tilde{Q}_{Y}(0)\times \tilde{Q}_{Z}(0)$} \\
\multirow{2}{*}{$\tilde{Q}_{Y,Z}(0,1)=\tilde{Q}_{Y}(0)\times \tilde{Q}_{Z}(1)$} \\
\multirow{2}{*}{$\tilde{Q}_{Y,Z}(1,0)=\tilde{Q}_{Y}(1)\times \tilde{Q}_{Z}(0)$} \\
\multirow{2}{*}{$\tilde{Q}_{Y,Z}(1,1)=\tilde{Q}_{Y}(1)\times \tilde{Q}_{Z}(1)$} \\
\end{tabular}}
}; 

\end{tikzpicture}
\caption{The QCLP for Example~\ref{ex:QCLP}. The top left is the transport plan defined by the decision variables. The top right is \( \tilde{\prQ} \) definitions. The rest are the objective and constraints. 
}
\label{fig:QCLPexample}
\end{figure}

\paragraph{\bf Decision Variables} In the QCLP, decision variables are represented as \( \tilde{\coupleM}_{i,j} \), where both \( i \) and \( j \) span from 1 up to \( d_{\Dom V} \) (reflecting the size of the support of \( {\Dom V} \)). These variables are the transport plan's probabilities representing the optimal data cleaning strategy. Since this plan has non-zero probabilities exclusively for the values present in \( D \)'s active domain, \( i \)'s range can be limited to the size of \( D \)'s active domain. The following example clarifies this.

\begin{example} \label{ex:QCLP} \em
In the QCLP for the optimal cleaner of \(D_2\) from Example~\ref{ex:odcexnotexis2}, the transport plan is defined by an \(8 \times 8\) variable matrix. However, given that \(D_2\) contains only three records, we use a \(3 \times 8\) decision variable matrix, with the remaining rows of the initial matrix being zero. These decision variables indicate possible modifications to the three records in \(D_2\), enabling them to align with any of the eight potential records in \(\hat{D}_2\). The QCLP considers all eight potential records in \(\hat{D}_2\), each associated with its distinct variable.\end{example}

\paragraph{\bf Constraints} The QCLP incorporates three types of constraints to encode the conditions in our data cleaner formulation in \eqref{def:potr}:

\begin{itemize}[leftmargin=10pt]
\item \textit{Validity Constraints:} These constraints, together with marginal constraints, ensure that $\tilde{\coupleM}$ makes a valid transport plan. Specifically, the decision variables must be non-negative real values:
{
\begin{align}
\tilde{\coupleM}_{i,j}\ge 0\;\;\forall i\in[1,d_{\dom{V}}],j \in [1,d_\mathcal{V}]\label{eq:validity}
\end{align}
}
\item \textit{Marginal Constraints:} These constraints are included to guarantee that the marginals of the transport plan, as described by $\tilde{\coupleM}$, align with $P^D$ (the empirical distribution of $D$):
{
\begin{align}
\sum_{j=1}^{d_\mathcal{V}} \tilde{\coupleM}_{i,j}\sourceColor{=\pr^D(\vt{v}_i)}\;\;\;\forall i \in [1,d_\mathcal{V}] \label{eq:marginal}
\end{align}
}
\item \textit{Independence Constraints:} These constraints are formulated to ensure that the probability distribution $\coupleM(\vt{v}')$ satisfies the CI constraint $\sigma: (X \indep Y \mid Z)$. To express these constraints, we introduce $\tilde{\prQ}$ as the marginal probability distribution obtained from the decision variables $\tilde{\coupleM}$. The independence constraints express the equation $\tilde{\prQ}_{X,Z}(x',z') \times \tilde{\prQ}_{Y,Z}(y',z') = \tilde{\prQ}(x',y',z') \times \tilde{\prQ}_{Z}(z')$ and guarantee the marginal probability distribution satisfies $\sigma$. We use the notation $\tilde{\prQ}$ instead of $\prQ$ to emphasize that the decision variables in $\tilde{\coupleM}$ specify the marginal probability distribution.
\end{itemize}

\paragraph{\bf Objective} The objective of the QCLP is to minimize the transport cost, which is represented as follows:
{
\begin{align}
\min_{\tilde{\coupleM}} \sum_{i=1}^{d_\mathcal{V}} \sum_{j=1}^{d_\mathcal{V}} {c(\vt{v}_i,\vt{v}j) \times} \tilde{\coupleM}_{i,j}\label{eq:objective}
\end{align}
}
In this expression, the transport cost is calculated by summing the product of the cost function $c(\vt{v}_i,\vt{v}j)$ and the decision variables $\tilde{\coupleM}_{i,j}$, over all elements in the set $\mathcal{V}$.

\begin{example} \em
Expanding on Example~\ref{ex:QCLP}, Figure~\ref{fig:QCLPexample} shows the constraints and objective present in the QCLP for $D_2$. Specifically, the validity constraints ensure that 24 decision variables are non-negative. The three marginal constraints verify the alignment of the marginal probability, as defined by the transport plan, with the probabilities of the three input records in $D_2$. The independence constraints ensure that the probability distribution specified by $\tilde{Q}$ satisfies $\sigma: X\indep Y \mid Z$. For example, four independence constraints in this example guarantee $\sigma:Y\indep Z$ holds for all possible values of $Y$ and $Z$. The first independence constraint is $\tilde{Q}_{Y,Z}(0,0)=\tilde{Q}_{Y}(0)\times \tilde{Q}_{Z}(0)$, where the marginals $\tilde{Q}_{Y,Z}(0,0), \tilde{Q}_{Y}(0)$, and $\tilde{Q}_{Z}(0)$ are defined as sums of decision variables in $\tilde{\coupleM}$. The costs in the objective are the Euclidean distance between the input records and their possible repair, e.g., the cost 1 in $\costColor{1\times}\tilde{\coupleM}_{1,1}$ is the Euclidean distance between $(1,0,0)$, as the first record in $D_2$, and $(0,0,0)$, as the first possible repair. Similarly 2 in $\costColor{2\times}\tilde{\coupleM}_{1,2}$ reflects the Euclidean distance between $(1,0,0)$ and $(0,0,1)$.
\end{example}



The above program is classified as a QCLP because, while the objective function and the validity and marginal constraints are linear with respect to the decision variables, the independence constraints are non-linear (quadratic). This is due to each side of the constraint consisting of a product of values in $\tilde{\prQ}$, that each is, in turn, a sum of the variables in $\tilde{\coupleM}$. QCLP represents a distinct subtype of Quadratically Constrained Quadratic Programs (QCQPs) or Second-Order Cone Programs (SOCPs) that feature quadratic constraints and objectives. Addressing a QCLP is a non-convex optimization problem and is NP-hard~\cite{van1966programming,boyd2004convex}. Diverse, efficient methodologies, including sequential quadratic programming, augmented Lagrangian, interior-point, and active set, have been employed to derive sub-optimal solutions for such programs~\cite{boyd2004convex}. 

We implemented an alternating algorithm to compute the optimal repair by solving the QCLP program. This method iteratively transforms the quadratic independence constraints into linear ones, similar to the Alternating Direction Method of Multipliers (ADMM)~\cite{boyd2011distributed}. The process begins with initial variable estimates for \(\tilde{\coupleM}\), ensuring the marginal distribution \(\tilde{\prQ}\) satisfies \(\sigma\). These initial values can be derived from the marginal probabilities of \(\pr^D\). In each iteration, we partition the variables in \(\tilde{\coupleM}\) into two subsets. We substitute the variables with their current estimates for the first subset, effectively linearizing the constraints. This transformation allows us to treat the second subset as variables within a linear program. In subsequent iterations, we alternate roles: treating variables of the second subset as constants and updating the first subset's values by solving a distinct linear program. This alternating process continues until the variables stabilize, indicating convergence. We have omitted the algorithm's specifics for brevity. The algorithm's convergence proof is similar to that of ADMM as presented in~\cite{boyd2011distributed}.

\subsubsection{Analysis of the QCLP Solution} \label{sec:qclp-a}



The QCLP formulation, though convergent, encounters scalability challenges. Specifically, in each iteration, it necessitates solving an OT problem which is structured as a linear program. The computational complexity of determining the OT scales as \(O(d^3 \log(d))\) when comparing histograms of dimension \(d\)~\cite{pele2009fast}. In the following section, we introduce an alternative formulation that mitigates this scalability issue and obviates the need for solving a linear program.


\subsection{Fast Approximation via Relaxed OT using Sinkhorn Iterations}
\label{sec:BCDEOT}

In this section, we present an approximate algorithm for computing optimal repairs by casting the problem into a regularized optimization. This approach integrates the CI constraint and the constraint on marginals as regularizers, drawing inspiration from the relaxed optimal transport discussed in Section~\ref{sec:background}. Specifically, we formulate the problem of computing the optimal cleaner in \eqref{def:potr} as the following regularized optimization problem:
{
\begin{align}
\argmin_{\coupleM \in \Tp,\prQ \in \Simplex{\st{V}}} &\sum_{\vt{v}_i \in \mathcal{V}} \sum_{\vt{v}'_j \in \mathcal{V}} c(\vt{v}_i,\vt{v}'_j) \coupleM(\vt{v}_i,\vt{v}'_j) - \frac{1}{\sinkhornCoeff} H(\coupleM)\; +\nonumber\\
&\relaxedOTCoeff (\KL(\coupleM(\st{v}'),\prQ) + \KL(\coupleM(\st{v}),\pr^D)) + \mu\;\delta_\sigma(\prQ),
\label{eq:relax-opt-clean}
\end{align}
}
\noindent In the above formulation, \(P^D\) denotes the empirical distribution of the dataset \(D\). The target distribution, represented by \(Q\), functions as a decision variable, while \(\coupleM\) is the transport plan. The regularization term \(\KL(\coupleM(\st{v}'),\prQ) + \KL(\coupleM(\st{v}),\pr^D)\) penalizes the objective when there are deviations of its marginals \(\coupleM(\st{v})\) and \(\coupleM(\st{v}')\) from \(\pr^D\) and \(\prQ\), respectively. Additionally, the CI constraint, represented by \(\sigma\), is imposed on \(Q\) through the regularization term \(\delta_\sigma(\prQ)\) within the objective (recall from Section~\ref{sec:intro} that  $\delta_\sigma(Q)=\KLD[Q(X,Y,Z)\mid Q(X, Z) Q(Y\mid Z)]$). This term measures the degree of inconsistency of \(Q\) in relation to \(\sigma\) by utilizing the conditional mutual information, as discussed in Section~\ref{sec:background}.  This method is in contrast from the hard constraints used in the QCLP formulation Section~\ref{sec:QCLP}.  The hyperparameters \(\lambda\) and \(\mu\) serve as regularization coefficients, adjusting for discrepancies from the marginals and the degree of inconsistency in the target distribution \(Q\). The methodology for tuning these hyperparameters is discussed in Section~\ref{sec:experiment}.

 Intuitively, the optimization problem aims to find a distribution \(Q\) that aligns closely with the empirical distribution \(P^D\) while being consistent with the imposed constraint. The relaxed OT distance serves as a measure of this alignment, and the objective is to minimize this distance, ensuring that \(Q\) is a faithful representation of \(P^D\) that simultaneously satisfies the constraint.




The inclusion of the CI constraint term makes our new formulation non-convex. We address this non-convexity with an alternating algorithm, \mainAlg. Before we detail \mainAlg in Algorithm~\ref{alg:repair}, we describe its main idea. In this algorithm, we sequentially focus on either the transport plan \(\coupleM\) or the resulting distribution \(\prQ\), optimizing one while holding the other constant. Initially, we can set \(\prQ\) to a distribution that meets the CI constraint \(\sigma\). With this fixed value, our objective becomes a convex function, which we solve using the Sinkhorn matrix scaling algorithm discussed in Section~\ref{sec:background}. When we alternate, our goal becomes minimizing the divergence between \(Q\) and \(\coupleM(\st{v}')\). In this stage, \(\prQ\) must also align with the CI constraint \(\sigma\). 

To address this problem, we adopt an alternating minimization strategy. Initiating with an initial guess for \(Q\), the algorithm first determines the optimal transport plan \(\pi (\mb v, \mb v')\) between \(P^D\) and \(Q\) through Sinkhorn iterations. In the subsequent iteration, a new \(Q\) is constructed based on the target distribution of \(\pi\), denoted \(\pi(\mb v')\). Specifically, this \(Q\) is identified to be proximate to \(\pi(\mb v')\) based on the KL divergence while also ensuring it either approximately or strictly satisfies the independence constraint. In subsequent iterations, the transport plan is recalibrated with respect to the revised \(Q\). Hence, the procedure can be viewed as a two-layered iterative process where the outer loop identifies a relaxed OT map, and the inner loop refines the target distribution of this map to enforce the constraint. The core intuition behind this approach is twofold. Firstly, the outer loop endeavors to determine a transport plan that maps the empirical distribution of data to a target distribution proximate to \(Q\), influenced by the regularization coefficient; its primary objective is to minimize the transport cost. Conversely, the inner loop evaluates the target distribution derived from the outer mapping and formulates a distribution in close alignment with it, ensuring adherence to the constraint. In essence, while the outer loop emphasizes on minimizing the transportation cost, the inner loop focuses on enforcing independence constraints.

The inner loop of this alternating algorithm, which reconstructs \(Q\) based on  \(\pi(\mb v')\) to satisfy the CI constraint, can be interpreted as a rank-one non-negative matrix factorization (as highlighted in Capuchin~\cite{salimi2019interventional}). Specifically, when dealing with conditional mutual information, the problem aligns with non-negative matrix factorization using the KL divergence objective, which is inherently non-convex but is typically addressed using alternating algorithms (for approximate enforcement of a CI constraint, one can use approximate matrix factorization techniques~\cite{finesso2004approximate}). For a specific value \(z \in {\Dom Z}\), we aim to determine matrices \(\mathbf{W}_z\) of size \(d_X \times 1\) and \(\mathbf{H}_{z}\) of size \(d_Y \times 1\). These matrices represent the joint and conditional distributions \(Q(X', Z'=z)\) and \(Q(Y' \mid Z'=z)\). They are chosen to minimize the divergence \(\KLD(\pi(X',Y',Z'=z) \mid \mathbf{W}_z \cdot \mathbf{H}_{z}^T)\). While the \(\KLD\) is convex with respect to either \(\mb W_z\) or \(\mb H_z\), it is not jointly convex for the pair \((\mb{W}_z, \mb{H}_z)\).  Established alternating methods, along with their associated update rules from the matrix factorization domain, such as those highlighted by Lee~\cite{lee2000algorithms}, can be employed. Starting with a random setup, these methods update \(\mathbf{W}_z\) and \(\mathbf{H}_z\) until they converge. The final matrices help us shape a new \(Q\) that satisfies the independence constraint.




We outline the algorithm to solve the optimization problem in~\eqref{eq:relax-opt-clean}, denoted by \mainAlg, in Algorithm~\ref{alg:repair}. It begins by setting initial values for the vectors \(\vt{p}\), \(\vt{q}\), and the cost matrix \(\mtx{C}\) (see Lines~\ref{ln:init-main} to \ref{ln:init-q}). The vector \(\vt{q}\) is set up to represent probabilities in a distribution satisfying \( \sigma \), which serves as a first guess for the resulting distributions \( Q \). The vectors \(\vt{u}\) and \(\vt{v}\), and the matrix \(\mtx{K}\) are then prepared for Sinkhorn iterations (Line~\ref{ln:init-v-k}). The Sinkhorn method find a plan $\coupleM$ between our original \(\vt{p}\) and the estimate \(\vt{q}\) by updating \(\vt{u}\) and \(\vt{v}\) until they stabilize (Line~\ref{ln:sinkhorn-it}). See Section~\ref{sec:background} on checking convergence. After this, the algorithm computes the transport plan \( \coupleM \) (Line~\ref{ln:pi}) and shifts its focus to reconstructing \(\vt{q}\). The reconstruction step (Line~\ref{ln:block-end}) employed an alternating algorithm as described before to update  \(\vt{q}\).


{ 
\begin{algorithm}
\caption{\mainAlg: Fast Computation of Probabilistic Data Cleaner for Conditional Independence}\label{alg:repair}
\KwInput{Database $D$, cost function $c$, and CI constraint $\sigma: X\indep Y \mid Z$}
\KwOutput{Transport plan (probabilistic data cleaner) $\coupleM$}
$\vt{p}:= \textit{vector}(P^{D});$  $\mtx{C}:= \textit{matrix}(c);$\label{ln:init-main}\\

 Randomly initialize  $\vt{q}$  \nComment{An initial guess for $Q$}\label{ln:init-q}\\ 
$\vt{u}:=\mathbbm{1}_{d_\dom{X}};\vt{v}:=\mathbbm{1}_{d_\dom{Y}};\mtx{K}:= e^{-\frac{\mtx{C}}{\sinkhornCoeff}};$ \nComment{Sinkhorn Initialization} \label{ln:init-v-k}\\
\While(\nComment{Sinkhorn iterations}){{\em $\vt{q}$} is not converged}{
    
    \While{{\em $\vt{u}$} and {\em $\vt{v}$} are not converged}{ 
        $\vt{u} := (\vt{p}\oslash (\mtx{K}\cdot \vt{v}))^{\frac{\sinkhornCoeff\relaxedOTCoeff}{\sinkhornCoeff\relaxedOTCoeff+1}},\vt{v} := (\vt{q}\oslash (\mtx{K}\cdot \vt{u}))^{\frac{\sinkhornCoeff\relaxedOTCoeff}{\sinkhornCoeff\relaxedOTCoeff+1}};$  \label{ln:sinkhorn-it}
    }
    $\coupleM = \textit{diag}(\vt{u})\cdot \mtx{K} \cdot \textit{diag}(\vt{v});$\label{ln:pi}\\
   \For{each \(z \in {\Dom Z}\label{ln:block-start-ot}\)}{
   Initialize \(\mb W_z\), \(\mb H_z\) randomly. \\
   \While{ $\mb W_z$ and $\mb H_z$ are not converged} {
    \textbf{Update} $\mb W_z$ to minimize \(\KLD(\pi(X',Y',Z'=z) \mid \mathbf{W}_z \cdot \mathbf{H}_{z}^T)\) with $\mb H_z$ fixed \\
    \textbf{Update} $\mb H_z$ to minimize \(\KLD(\pi(X',Y',Z'=z) \mid \mathbf{W}_z \cdot \mathbf{H}_{z}^T)\) with $\mb W_z$ fixed}
}
    Construct $\vt{q}$ using $\mb W_z$s and $\mb H_z$s computed in the previous step
}
    \label {ln:block-end}
\Return{$\coupleM$;}
\end{algorithm}
}

\subsubsection{Analysis of the algorithm} 
We prove that the algorithm converges. In Section~\ref{sec:experiment}, we empirically demonstrate the inner workings and convergence properties of this algorithm. In Section~\ref{sec:opti}, we propose efficient strategies to further optimize this algorithm.  

\begin{theorem}\label{th:correct}
For the optimization problem outlined in Equation~\eqref{eq:relax-opt-clean}, Algorithm~\ref{alg:repair} converges.
\end{theorem}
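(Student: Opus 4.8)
The plan is to read Algorithm~\ref{alg:repair} as a two-block alternating minimization (block coordinate descent) applied to the objective of Equation~\eqref{eq:relax-opt-clean}, which I abbreviate as $F(\coupleM,\prQ)$, and then invoke the classical fact that a sequence of objective values that is simultaneously monotonically non-increasing and bounded below must converge. Concretely, each outer iteration consists of a $\coupleM$-update (the inner Sinkhorn loop, with $\prQ$ frozen) followed by a $\prQ$-update (the inner factorization loop, with $\coupleM$ frozen), and I would analyze each of these blocks separately.

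First I would establish \emph{monotonicity}. When $\prQ$ is held fixed, the term $\mu\,\delta_\sigma(\prQ)$ is constant and the remainder of $F$ is exactly the relaxed entropic OT problem of Equation~\eqref{eq:relax} with targets $\pr^D$ and $\prQ$; this sub-problem is strictly convex in $\coupleM$ (strictly convex negative entropy, plus a linear cost and convex KL marginal penalties), so the iterations with the update rules of Equation~\eqref{eq:update-rules} converge to its unique global minimizer by \cite[Proposition 4.2]{frogner2015learning}. Hence $F(\coupleM^{(t+1)},\prQ^{(t)})\le F(\coupleM^{(t)},\prQ^{(t)})$. When $\coupleM$ is held fixed, the $\prQ$-dependent part of $F$ is $\relaxedOTCoeff\,\KL(\coupleM(\st{v}'),\prQ)+\mu\,\delta_\sigma(\prQ)$; restricting $\prQ$ to distributions satisfying $\sigma$ makes $\delta_\sigma(\prQ)=0$, and such $\prQ$ are precisely those whose slice $\prQ(X,Y\mid Z{=}z)$ is rank one, so the update reduces to a per-slice rank-one non-negative matrix factorization under the KL objective. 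The multiplicative updates of \cite{lee2000algorithms} for this factorization are monotone non-increasing, yielding $F(\coupleM^{(t+1)},\prQ^{(t+1)})\le F(\coupleM^{(t+1)},\prQ^{(t)})$. Chaining the two inequalities gives $F(\coupleM^{(t+1)},\prQ^{(t+1)})\le F(\coupleM^{(t)},\prQ^{(t)})$.

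Next I would show that $F$ is \emph{bounded below}: the transport cost is non-negative since $c\ge 0$ and $\coupleM\ge 0$, both KL regularizers are non-negative, and $\delta_\sigma(\prQ)=I(X;Y\mid Z)\ge 0$. The only possibly negative contribution is $-\tfrac{1}{\sinkhornCoeff}H(\coupleM)$, but since $\coupleM$ is a probability vector over $\mathcal V\times\mathcal V$ we have $0\le H(\coupleM)\le \log(d_{\Dom V}^2)$, so this term is bounded below by $-\tfrac{1}{\sinkhornCoeff}\log(d_{\Dom V}^2)$; thus $F\ge -\tfrac{1}{\sinkhornCoeff}\log(d_{\Dom V}^2)>-\infty$. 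Combining this lower bound with the monotonicity above, the sequence $\{F(\coupleM^{(t)},\prQ^{(t)})\}_t$ is monotone non-increasing and bounded below, hence convergent, which is the asserted convergence.

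The step I expect to be the main obstacle is the monotonicity of the $\prQ$-block. The factorization sub-problem is itself non-convex, and the updates of \cite{lee2000algorithms} guarantee only a monotone decrease of the KL objective from their \emph{initialization}, not global optimality. Since Algorithm~\ref{alg:repair} initializes $\mathbf{W}_z,\mathbf{H}_z$ randomly rather than from the previous iterate $\prQ^{(t)}$, I would close the gap either by warm-starting the factorization from $\prQ^{(t)}$ (which is feasible, since it already satisfies $\sigma$, and therefore guarantees $\KL(\coupleM^{(t+1)}(\st v'),\prQ^{(t+1)})\le \KL(\coupleM^{(t+1)}(\st v'),\prQ^{(t)})$), or by arguing the inner factorization is run to a minimizer of its sub-problem; the approximate-CI variant is handled analogously with $\delta_\sigma(\prQ)$ kept as a soft term. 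A secondary point worth stating explicitly is that this argument establishes convergence of the objective values; upgrading it to convergence of the iterates $(\coupleM^{(t)},\prQ^{(t)})$ to a stationary point would additionally require compactness of the feasible sets (which holds, each being a probability simplex) together with a standard block-coordinate-descent stationarity argument.
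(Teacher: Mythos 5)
Your proposal is correct and follows the same two-block alternating-minimization skeleton as the paper's proof, but it closes the argument by a genuinely different and more elementary route. The paper, after observing that the $\coupleM$-block is smooth and strictly convex (so the Sinkhorn iterations converge per \cite{frogner2015learning}) and that the $\prQ$-block is convex in each factor separately with updates converging to a stationary point \cite{lee2000algorithms,hien2021algorithms}, appeals to Theorem~5.1 of \cite{tseng2001convergence} to conclude convergence to a coordinate-wise minimum. You instead prove the per-block descent inequalities directly and combine them with an explicit lower bound on the objective (non-negative cost, KL, and CMI terms, with $-\tfrac{1}{\sinkhornCoeff}H(\coupleM)\ge -\tfrac{1}{\sinkhornCoeff}\log(d_{\Dom V}^2)$), deducing convergence of the objective values by the monotone-plus-bounded-below argument. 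Your route buys two things. First, it is honest about what is established: objective-value convergence rather than iterate convergence, with the upgrade to stationarity correctly identified as a separate step needing compactness and a BCD argument. Second, it surfaces a real wrinkle the paper glosses over: Algorithm~\ref{alg:repair} re-initializes $\mathbf{W}_z,\mathbf{H}_z$ randomly in every outer iteration, so the descent inequality for the $\prQ$-block does not hold for the algorithm as literally written, and your warm-start fix (initialize the factorization from $\prQ^{(t)}$, which is feasible since it satisfies $\sigma$) is exactly what is needed to make the proof go through. Your caution also cuts against the paper's citation: Tseng's theorem presumes each block subproblem is solved to a per-block minimum under quasiconvexity-type hypotheses, whereas the $\prQ$-block here is a non-convex rank-one NMF solved only to a stationary point, so the paper's appeal yields at best convergence to a coordinate-wise stationary pair — making your weaker but airtight conclusion arguably the more defensible reading of the theorem. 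What the paper's route buys, if its hypotheses are granted, is a statement about the iterates themselves rather than just the objective sequence.
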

\begin{proof}
Algorithm~\ref{alg:repair} can be understood as an iterative optimization over one variable, either the transport plan \(\pi\) or the distribution \(Q\), while holding the other variable constant. When \(Q\) is fixed, optimization concerning the transport plan is smooth, differentiable, and strictly convex, ensuring that the Sinkhorn iterations converge, as established by~\cite{frogner2015learning}. Conversely, with a fixed \(\pi\), the inner problem breaks down into an objective function that remains strictly convex with respect to each matrix separately, and the adopted update rule ensures convergence to a stationary point, as elaborated in~\cite{hien2021algorithms}. This approach mirrors the Coordinate Descent method, where the objective function is convex for each individual coordinate. As per~\cite{tseng2001convergence}[theorem 5.1], this process guarantees convergence to a coordinate-wise minimum of the objective function.
\end{proof}

\section{Optimizations}
\label{sec:opti}

We applied several optimizations to improve \mainAlg that we briefly explain below and show their efficacy in Section~\ref{sec:experiment}. 


\vspace{-0.2cm}
\paragraph{\bf Default Optimization} We applied two straightforward yet effective optimizations: 1) Confining the transport plan's size to restrict mass movement solely within $D$'s active domain to $\Dom V$, excluding movement to the entire support. We explained this in the context of QCLP while defining decision variables in Section~\ref{sec:QCLP}. This restriction can be further narrowed down to allow mass movement within a more limited subset. 2) Rather than randomly initializing the target distribution $Q$ in \mainAlg, we initiated it with a distribution satisfying the CI constraint by applying Non-negative Matrix Factorization (NMF) to the empirical distribution of $D$, which our results demonstrated to aid faster convergence.
\vspace{-0.2cm}
\paragraph{\bf Warm Starting Sinkhorn} Convergence of the Sinkhorn iteration is a significant bottleneck in \mainAlg. We observe that our alternating algorithm, while it changes $Q$ in each iteration in which we fix the transport plan, only makes slight adjustments, implying that the transport plan should undergo minor changes in the next iteration. Therefore, instead of initializing the Sinkhorn scaling factors $\mb u$ and $\mb v$ with vectors of ones, adopting a warm starting approach by initializing them with the $\mb u$ and $\mb v$ from the previous iteration can significantly accelerate convergence. Our evaluation results indicate that this is a highly effective idea.

\vspace{-0.1cm}
\vspace{-0.1cm}\paragraph{\bf Unsaturated CI Constraints}\label{sec:unsat} So far, we assumed that \(\sigma: X\indep Y \mid Z\) represents a saturated CI constraint, implying \(\st{V}=\{X,Y,Z\}\). However, in many real-world scenarios, especially with high dimensional data, CI constraints may not be saturated.

For unsaturated constraints, we split \(\st{V}\), the set of attributes in the database \(D\), into two sets: \(\st{U}=\{X,Y,Z\}\) (the attributes in \(\sigma\)) and \(\st{W}=\st{V}\setminus \st{U}\) (those not in \(\sigma\)). A naive method is to compute a transport plan \(\coupleM\) of size \(d^2_{\Dom V}\), considering all attributes in \(\st{V}\), including \(\st{W}\). Adapting methods from Section~\ref{sec:methods} for this scenario is straightforward but computationally expensive with high-dimensional data.

A more efficient strategy is to run \mainAlg for the marginal distribution \(P^D_\st{U}\) instead of \(P^D\). This results in a smaller transport plan \(\coupleM_s\) of size \(d^2_{\Dom U}\) compared to \(\coupleM\). With \(\coupleM_s\), we construct \(\coupleM\) as follows: \(\coupleM(\vt{v},\vt{v}')=0\) if \(\vt{w}\neq \vt{w}'\), and \(\coupleM(\vt{v},\vt{v}')=\coupleM_s(\vt{u},\vt{u}') P_{\st{W} \mid \st{U}}(\vt{w}\mid \vt{u})\) otherwise. This ensures no additional transport cost for moving masses between different values of \(\st{W}\) as there is no mass moved for \(\vt{w}\neq \vt{w}'\). Thus, the cost associated with \(\coupleM\) is the same as \(\coupleM_s\), making it optimal if \(\coupleM_s\) is optimal. Note that this requires the cost function to satisfy some basic properties, such as the cost of \(\vt{u}\vt{w} \rightarrow \vt{u}'\vt{w}\) being equal to the cost of \(\vt{u} \rightarrow \vt{u}'\), which is satisfied by the Euclidean distance and other cost functions in our work. Additionally, the use of \(P_{\st{W} \mid \st{U}}(\vt{w}\mid \vt{u})\) ensures that \(\coupleM\) satisfies the marginal constraint \(P^D(\vt{v})=\coupleM(\vt{v})\). The resulting distribution \(Q\) from \(\coupleM\) satisfies \(\sigma\) as its marginal is \(Q_{\st{U}}\) which is known to satisfy \(\sigma\).


 \label{sec:unsat}

\ignore{Our proposed method for non-saturated case which there is an extra variable W that doesn't appear in CI constraint becomes costly when domain size of W increases. Under three assumptions on distance functions of (X,Y,Z,W) and (X,Y,Z) spaces, we can ignore the extra variable and find the repair in a smaller space and then bring it back to the actual space. Three assumptions state that for every value of w and w' these expressions must hold:
\begin{align}
\begin{split}
d_{X,Y,Z,W}((x_0,&y_0,z_0,w),(x_1,y_1,z_1,w)) =\\ &d_{X,Y,Z,W}((x_0,y_0,z_0,w'),(x_1,y_1,z_1,w'))
\end{split}
\end{align}
\begin{align}
\begin{split}
d_{X,Y,Z,W}((x_0,&y_0,z_0,w),(x_1,y_1,z_1,w')) \geq\\ &d_{X,Y,Z,W}((x_0,y_0,z_0,w),(x_1,y_1,z_1,w))
\end{split}
\end{align}
\begin{align}
\begin{split}
d_{X,Y,Z}((x_0,&y_0,z_0),(x_1,y_1,z_1)) =\\ &d_{X,Y,Z,W}((x_0,y_0,z_0,w),(x_1,y_1,z_1,w))
\end{split}
\end{align}For adjusting our approximate method to work on the space of (X,Y,Z) we just need to change the structure of our reconstructed distribution to the following form:
\begin{equation}
\forall z \in D_Z\ \ \ \ P'(x,y,z)=(p_{x,z}(z) \times p_{y|z}(z)^T)
\end{equation}
After getting the probability distribution of result on (X,Y,Z), any arbitrary distribution can be considered for W to reconstruct the probability distribution of result on (X,Y,Z,W). Proof is provided in the appendix.}
\section{EXPERIMENTS}
\label{sec:experiment}
In our experimental evaluation of \sys, we seek to answer the following research questions: {\bf Q1} How does the end-to-end performance of \sys in terms of algorithmic fairness compare with baseline approaches? (Section~\ref{sec:fairness-exp}) {\bf Q2}  In data cleaning tasks related to CIs, how does the performance of \sys compare with the baselines? (Section~\ref{sec:exp-cleaning})
{\bf Q3} How effective is \sys in determining optimal repairs? This encompasses evaluating its convergence behavior, runtime performance, and efficacy of the optimizations. (Section~\ref{sec:sys-exp})

{ 
\begin{table}
    \centering
     \resizebox{0.35\textwidth}{!}{
    \begin{tabular}{*{5}{l}} 
    \toprule
        \textbf{Dataset} &
        \textbf{\#tuples} & 
        \textbf{\#attr.} & 
        \textbf{avg. dom} & 
        \textbf{init. CMI} \\
    \midrule
        \adult &
        48,842   & 
        14   & 
        5.42 & 0.18770 \\
        \compas &
        10,000   & 
        12   & 
        2.4 & 0.05484 \\
        \car &
        1,728    & 
        6   & 
        3.67 & 0.03617 \\
        \boston &
        506    & 
        14   & 
        4.5 & 0.05983 \\
    \bottomrule
    \end{tabular}
     }
    \caption{Datasets characteristics}

    \label{tab:datasets}
\end{table}
}

\paragraph{\bf Datasets} We used four datasets. The \adult and \compas datasets highlight the fairness aspect of \sys's application, while the datasets \car and \boston showcase the efficacy of \sys in data cleaning tasks. Table~\ref{tab:datasets} provides an overview of these datasets.

\paragraph{\adult~{\em \cite{adultdataset}}} 
In the \adult dataset, or ``Census Income,'', each entry captures details like age, work class, education level, marital status, occupation, relationship status, race, gender, weekly working hours, and country of origin. The dataset's main objective is to predict if an individual earns over \$50K annually.

\paragraph{\compas~{\em \cite{compasdataset}}} 
The \compas dataset from the Broward County Sheriff's Office in Florida predicts the likelihood of an individual re-offending. Key attributes include age, gender, race, criminal history, risk scores, charge degree, and jail history. \compas is essential for studies focusing on the fairness implications of predictive policing.

\paragraph{\car~{\em \cite{cardataset}}} 
The Car Evaluation dataset evaluates cars based on attributes like buying price, maintenance cost, number of doors, person capacity, and safety. Cars are classified based on their overall condition into unacceptable, acceptable, good, or very good.

\paragraph{\boston~{\em \cite{bostondataset}}} 
The Boston Housing dataset provides insights into the housing market in Boston, Massachusetts. It covers attributes like crime rate, residential zoning, average room count, distance to employment centers, and median home value. It's frequently used for regression analysis in predicting housing prices.

\paragraph{\bf Baselines} We use baselines \ignore{in both fairness and data-cleaning applications} that we briefly review here. 

\paragraph{Algorithmic fairness} In the realm of algorithmic fairness, the objective is to guarantee that decision-making algorithms operate equitably, avoiding discrimination based on sensitive attributes like race or gender. While there are myriad definitions of fairness in the literature, this study primarily focuses on {\em interventional fairness}, as articulated in~\cite{salimi2019interventional}. This particular notion underscores the importance of enforcing conditional independence within data. Consider a sensitive attribute \(S\). Without loss of generality, let's assume \(S\) is binary where \(S=1\) denotes the protected (or sensitive) group and \(S=0\) the unprotected group. Further, consider a ML model with output \(\hat{Y}\) trained on a set of features \(\mb X\).  The notion of interventional fairness divides \(\mb X\) into two sets:  {\em admissible variables} \(\mb A\) and {\em inadmissible variables} \(\mb N\). Admissible variables are those where the effect of the sensitive attribute on the outcome, mediated by these variables, is considered fair. In~\cite{salimi2019interventional}, the extent to which a ML model deviates from this fairness standard is quantified using {\em the Ratio of Observational Discrimination (ROD)}, defined as:

{
\[ 
\text{ROD} = \frac{1}{|dom(A)|} \sum_{a \in A} \frac{P(\hat{Y}=1|S=0,a)P(\hat{Y}=0|S=1,a)}{P(\hat{Y}=0|S=0,a)P(\hat{Y}=1|S=1,a)}
\]
}
A ROD value of 1 signals the absence of any bias and is in correspondence to the conditional independence $(\hat{Y} \indep S \mid \mb A)$. In this paper, we employ the logarithm of the ROD for our analyses. A logarithmic ROD value of 0 is indicative of the absence of discrimination, while progressively higher values of the log ROD signify increasing levels of bias. The approach detailed in~\cite{salimi2019interventional} reduces the challenge of training a fair ML model to the task of enforcing a CI constraint on the training data. They introduced several methods in this context, which we adopt as baselines for our evaluations. Their methods fall into two categories: Methods based on matrix factorization and MaxSat methods. From the first category, the ``Cap(MF)'' factorizes each joint probability distribution of \(P^D\) for a fixed value of \(\st{Z}\) by minimizing Euclidean norm, while ``Cap(IC)'' does the factorization by using marginals of the initial distribution. They also propose a problem reduction of repairing w.r.t a CI constraint to solving a general CNF formula, and they solve it using their MaxSat method ``Cap(MS)''. We also included a naive baseline referred to as 'Dropped,' where the model is trained solely on admissible variables, which is sufficient for enforcing intervention fairness, as demonstrated in~\cite{salimi2019interventional}. 

\paragraph{Data Cleaning} \reviewerone{In our data cleaning evaluation, we assess the performance of \sys\ and compare it with various imputation and data cleaning methods. We consider five baselines for handling missing values: 1) Most frequent (MF) fills missing values with the most frequent values within the attribute, 2) k-nearest neighbors (kNN) identifies the most frequent values among neighboring data points for imputation, 3) GAIN uses Generative Adversarial Networks~\cite{yoon2018gain}, and 4) Hyperimputation is a method that integrates multiple imputation techniques, blending traditional iterative imputation with deep learning~\cite{jarrett2022hyperimpute}. We selected kNN and MF as basic, widely-used baselines. We compared \sys with GAIN since it is a leading imputation method and Hyperimpute since it is known for its ability to surpass various imputation techniques. We also use two baselines in scenarios with attribute noise: 1) using the dirty dataset as a simple baseline, and 2) Baran~\cite{mahdavi2020baran} as an advanced data cleaning method that utilizes comprehensive context information, including the value, co-occurring values, and attribute type, to generate correction candidates with high precision.}


\vspace{-0.3cm}
\subsection{Tuning \sys} \label{sec:tuning}

\paragraph{Cost function} We employ two cost functions in our experiments. The first function calculates the cost as the Euclidean distance between two records after normalizing their attributes by dividing them by their standard deviation. The second function utilizes a distance learned through MLKR (Metric Learning for Kernel Regression~\cite{weinberger2007metric}), a supervised metric learning technique that minimizes the leave-one-out regression error. We chose MLKR because it is widely used for distance learning and designed explicitly for supervised tasks like those in our settings. We label the results from the first cost function as \sys-C1, while the cost function using the learned distance is labeled as \sys-C2.



\vspace{-0.3cm}
\paragraph{Regularization Coeffients} Two tuning parameters of \mainAlg are \( \relaxedOTCoeff \) and \( \frac{1}{\sinkhornCoeff} \). As \( \relaxedOTCoeff \) and \( \sinkhornCoeff \) grow, our formulation of \sys gets closer to the OT distance, and \mainAlg gives better results. However, as their values grow, the cost of running \mainAlg increases due to slower convergence. To find parameter values that balance runtime and fast convergence, we perform a grid search for each dataset to tune \sys. \sys has another parameter, \( \mu \), that quantifies the dissatisfaction of the CI constraint. 


\subsection{Algorithmic Fairness} 
\label{sec:fairness-exp}

We evaluate the effectiveness of {\sys} within the domain of algorithmic fairness. 
To harness \sys\ for training interventionally fair algorithms, we utilize our probabilistic data cleaning approach to modify the data, ensuring its consistency with the CI constraint (\(S \indep \mb N \mid \mb A\)). This enforced independence ensures the sensitive attribute does not influence the inadmissible variable, except through \(\mb A\). If this independence is maintained, any valuable predictive information encapsulated within the inadmissible variables \(\mb N\) cannot be sourced from the sensitive attribute. The flexibility of our approach, underpinned by optimal transport, allows us to craft specific cost functions for probabilistic data cleaning to preserve as much predictive capability as possible. Specifically, we designed a cost function to modify the inadmissible variables and keep sensitive attributes and admissible variables unchanged, ensuring that while fairness is achieved, all relevant predictive information within \(\mb A\) is retained. Additionally, it ensures that any remaining predictive value within \(\mb N\) is not derived from the sensitive attribute \(S\).

We applied \sys\ to establish a probabilistic data cleaner for the training data. This cleaner was subsequently used to pre-process the dataset. The subsequent sections present evaluation results on the \adult and \compas datasets. Our evaluation metrics include cross-validated AUC and the mean ROD averaged over iterations derived from cross-validation outcomes. Besides ROD, we also assess other fairness measures, such as equality of odds and demographic parity. Notably, our approach incidentally enhances these fairness metrics as well. We also report other popular fairness measures, such as equality of odds— which requires that classifiers have equal false positive and false negative rates across protected groups—and demographic parity, which ensures that the decision outcome is independent of the protected attribute. 

Figure~\ref{fig:fairness} showcases our evaluation results for the \compas and \adult datasets. In the \adult dataset, the sensitive attribute is ``sex'', ``marital-status'' is inadmissible, and the admissible attributes include ``occupation'', ``education-num'', ``hours-per-week'', and ``age''. For \compas, we treat ``race'' as sensitive, ``age-cat'' and ``priors-count" as inadmissible, and ``charge-degree'' as admissible. Notably, \sys\ demonstrates superiority over the baseline, achieving models that are at least as fair, if not fairer, and exhibit an elevated AUC. This improvement can be attributed to our optimal transport-based approach, which empowers our method to retain considerable predictive value while rigorously enforcing fairness constraints. Furthermore, Figure~\ref{fig:fairness-measures} shows \sys's reasonable performance on other fairness notions, specifically Equality of Opportunity (EO) and Demographic Parity (DP). On both datasets, our methodology consistently surpasses the baseline in these respects. (Note: the result of ``Cap(MS)" is not plotted in Figure~\ref{fig:compas} as it achieved a constant AUC of 0.5 in all cross-validation iterations.)

\begin{figure}[h]
    \centering
    \begin{subfigure}{0.22\textwidth}
        \centering
        \includegraphics[width=\linewidth]{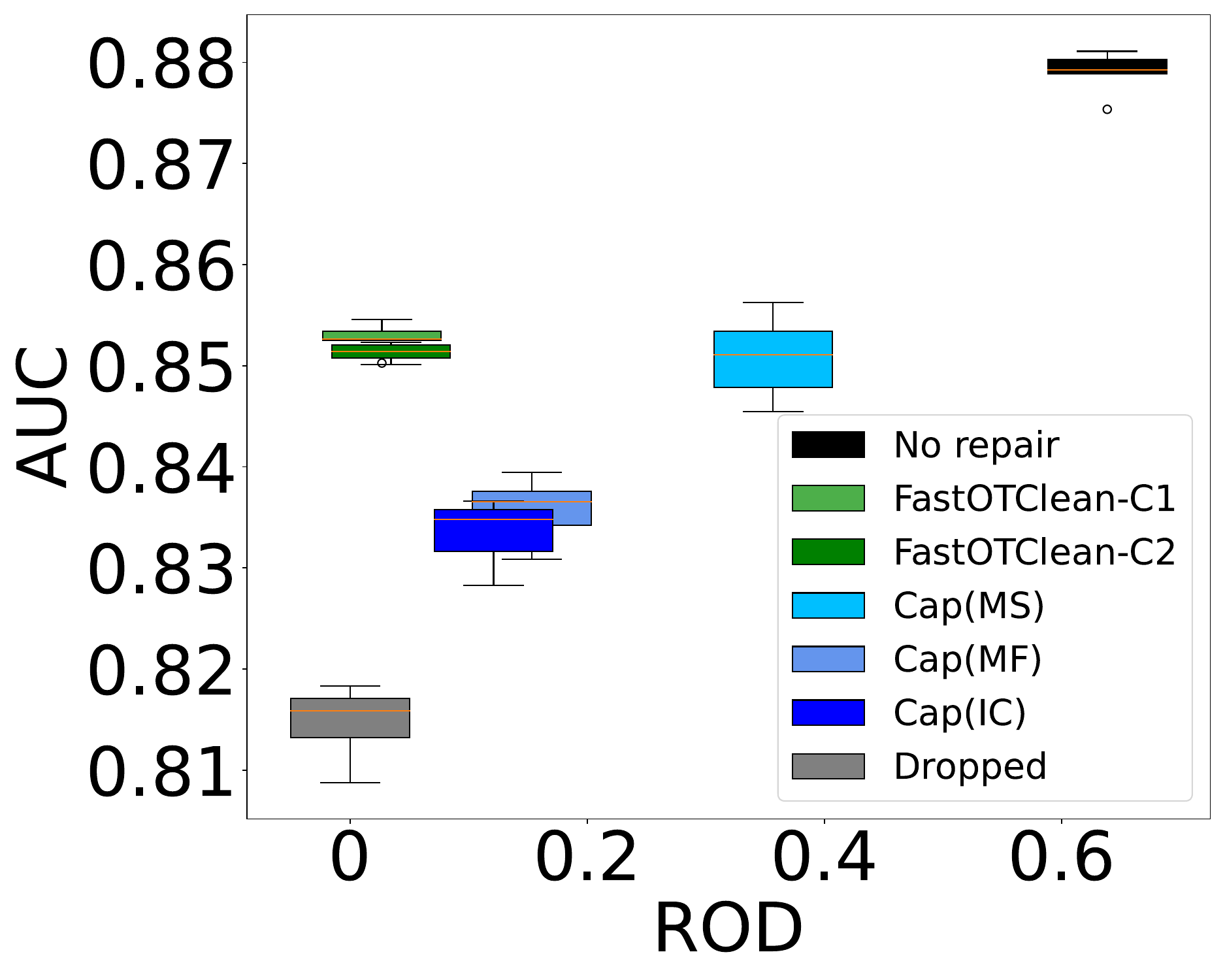}
        \caption{\adult}
        \label{fig:adult}
    \end{subfigure}
    \hfill
    \begin{subfigure}{0.22\textwidth}
        \centering
        \includegraphics[width=\linewidth]{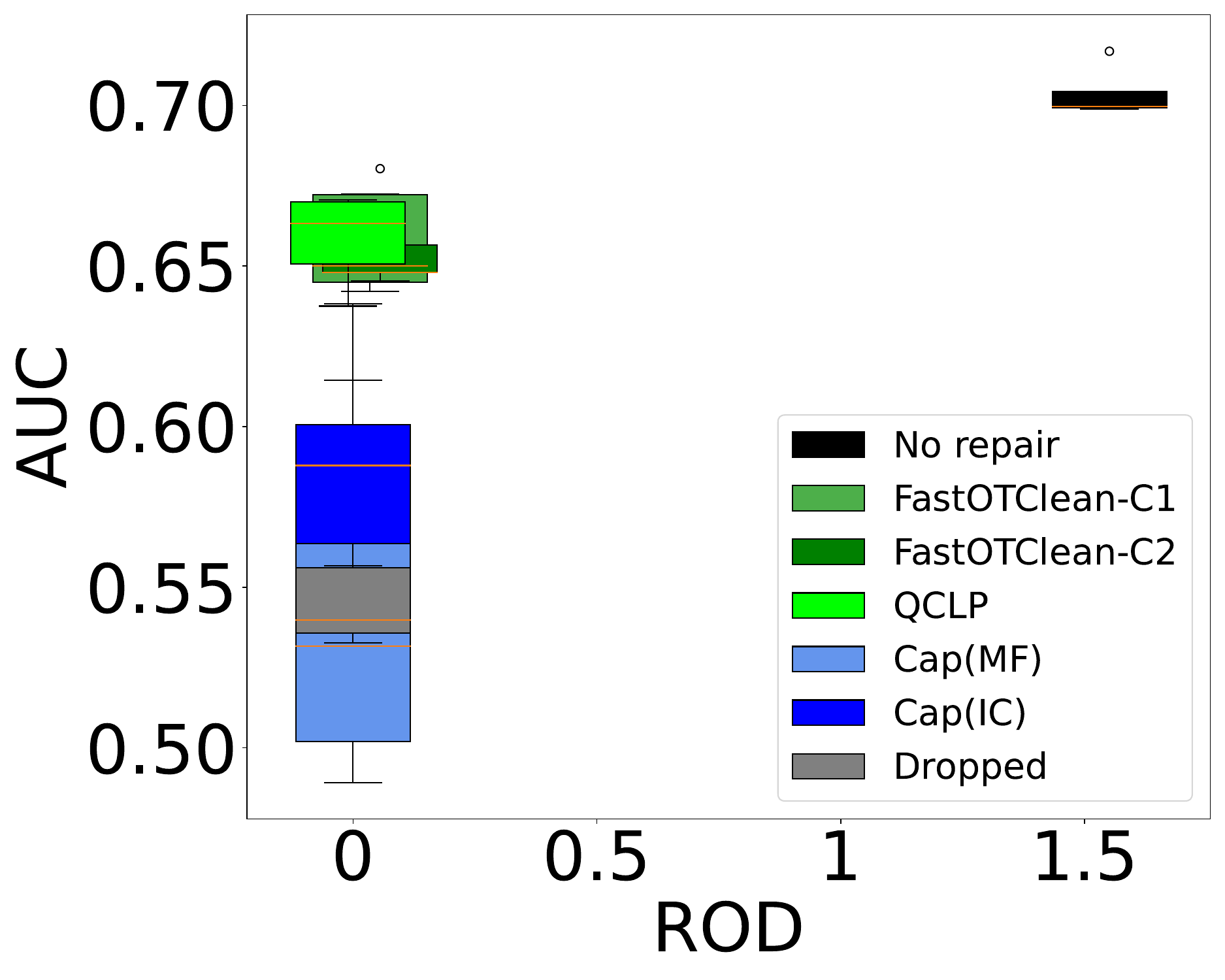}
        \caption{\compas}
        \label{fig:compas}
    \end{subfigure}
    \vspace{-3mm}
    \caption{\reviewerthree{Comparison of \sys's performance with the baselines showing higher AUC and lower ROD (bias)}}
    \label{fig:fairness}
\end{figure}

\begin{figure}[h]
    \centering
    \begin{subfigure}{0.22\textwidth}
        \centering
        \includegraphics[width=\linewidth]{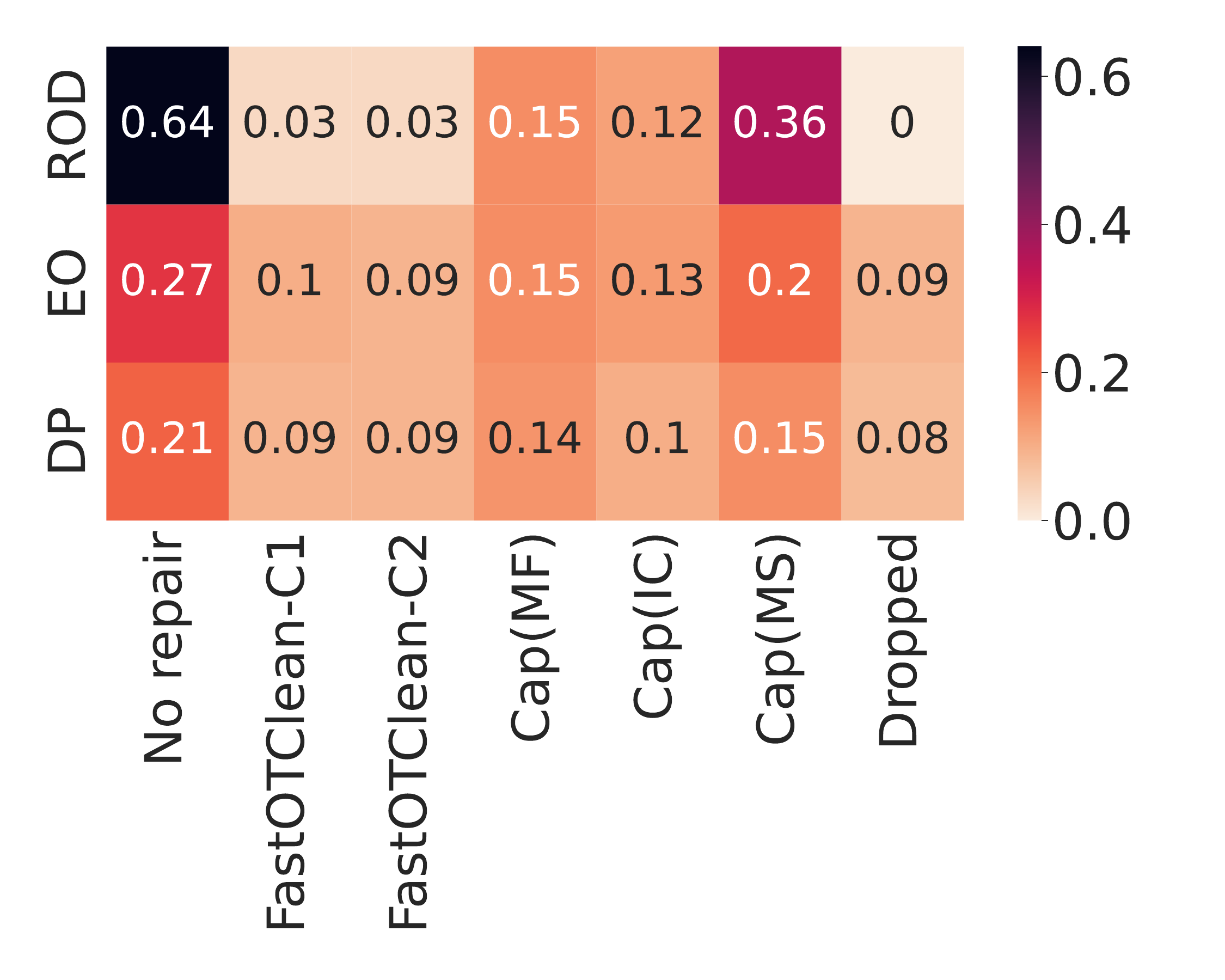}
        \caption{\adult}
        \label{fig:adult-fiarness}
    \end{subfigure}
    \hfill
    \begin{subfigure}{0.22\textwidth}
        \centering
        \includegraphics[width=\linewidth]{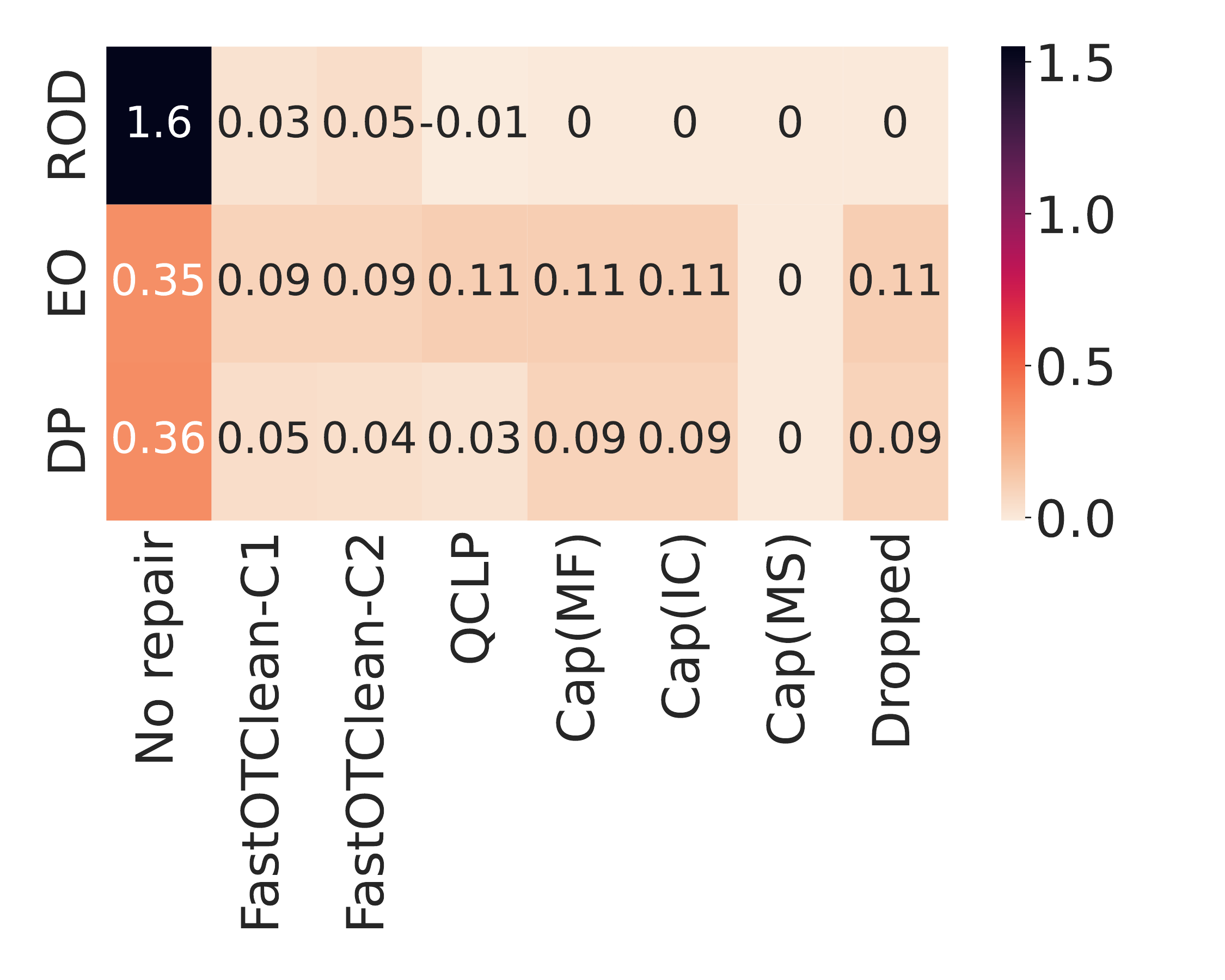}
        \caption{\compas}
        \label{fig:compas-fairness}
    \end{subfigure}
    \vspace{-3mm}
    \caption{\reviewerthree{Fairness metrics in \sys, indicating lower biases (ROD, EO, and DP) compared to baseline methods}}
    \label{fig:fairness-measures}
\end{figure}




\subsection{Data Cleaning}\label{sec:exp-cleaning}

To evaluate the performance of \sys in data cleaning, we conducted experiments using semi-synthetic datasets that featured two types of dirty data: {\em attribute noise} and {\em missing values}. These datasets were derived from the \car and \boston datasets. We used these datasets to train ML models for predicting the labels ``class'' (indicating the car's condition in the \car dataset) and ``medv'' (representing median house price in the \boston dataset), respectively. In each case, we introduced noise errors and missing values into the training data, while the original clean data served as the test set for assessing model generalization.  For \car, we considered the CI constraint (\( \text{doors} \indep \text{class} \,|\, \text{the remaining attributes} \)). This constraint implies that the number of car doors should not significantly impact the class label when considering other factors such as buying price and safety. For the \boston dataset, we examined the constraint (\( \text{B} \indep \text{medv} \,|\, \text{the remaining attributes} \)), which suggests that the ``B'' attribute (indicating the percentage of blacks per town) should not influence the ``medv'' label. Initially, these constraints approximately held in the original datasets. To introduce attribute noise, we deliberately added non-random noise that led to violations of the CI constraints. Additionally, we injected two types of missingness: missing at random (MAR) and missing not at random (MNAR).




\reviewerthree{We chose to use a semi-synthetic dataset, where we added errors to real-world data, to create both ``dirty'' datasets and their accurate ground truths. This was essential because it is difficult to find real datasets with both genuine errors and ground truth. A limitation of this approach is that the injected error patterns may not exactly replicate those in actual datasets. However, our cleaning system is designed to be effective regardless of the specific error types. It primarily targets fixing spurious correlations and reducing the impact of any differences in error patterns on our goals.}

\reviewerone{To create a dependency between two attributes through attribute noise, we introduce random noise into one based on the values of the other. For adding missing data, our approach depends on the type. In MAR scenarios, where an attribute's missingness is influenced by another attribute, we decide to add missing values based on the other attribute's values in the same record. In MNAR cases, where an attribute's missingness is affected by its own value and other attributes, we randomly select records and determine missingness based on these factors. This method systematically creates relationships between attributes, effectively incorporating noise and addressing different missing data situations.}

To assess the efficacy of \sys, we utilized the ``Dirty'' datasets to train various ML models, including logistic regression, random forest, SVM, and MLP, and reported results for the best-performing model. When dealing with missing values, we employed two imputation methods: most frequent values (MF) and kNN, as explained previously. The dirty model is labeled with the imputation method used for training the dataset. In all experiments, the models were tested on ground truth data (the data before adding noise or missing values), and the models trained on the ground truth were denoted as ``Clean.'' Additionally, we applied \sys\ to enforce the corresponding CI constraint before training the ML models. This step aimed to remove spurious correlations induced by violations of CI, which could lead to poor performance of the ML model.

\begin{figure}[h]
    \centering
    \begin{subfigure}{0.22\textwidth}
        \centering
    \includegraphics[width=\linewidth]{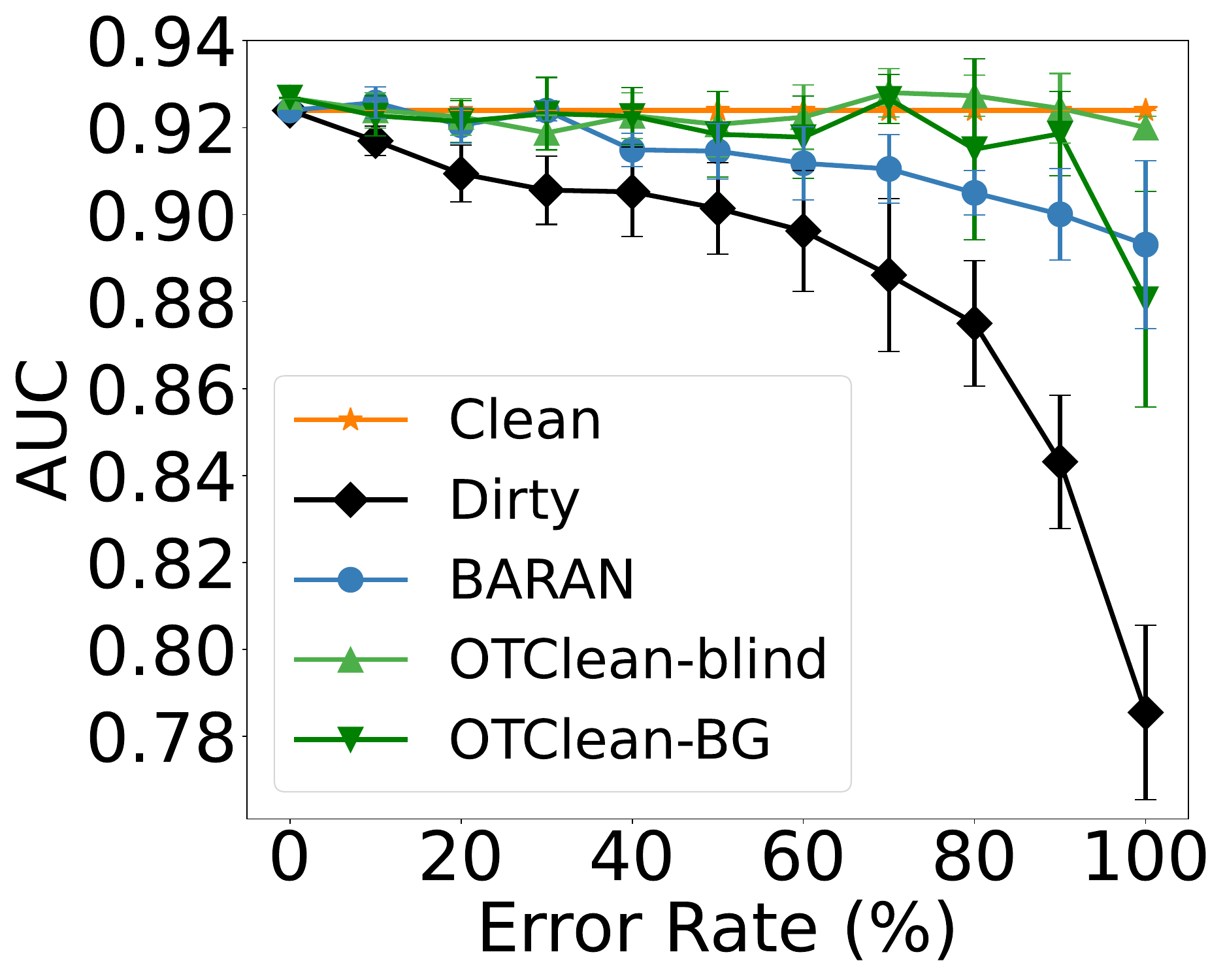}
        \caption{\car}
        \label{fig:car-auc-noise}
    \end{subfigure}
    \hfill
    \begin{subfigure}{0.22\textwidth}
        \centering
        \includegraphics[width=\linewidth]{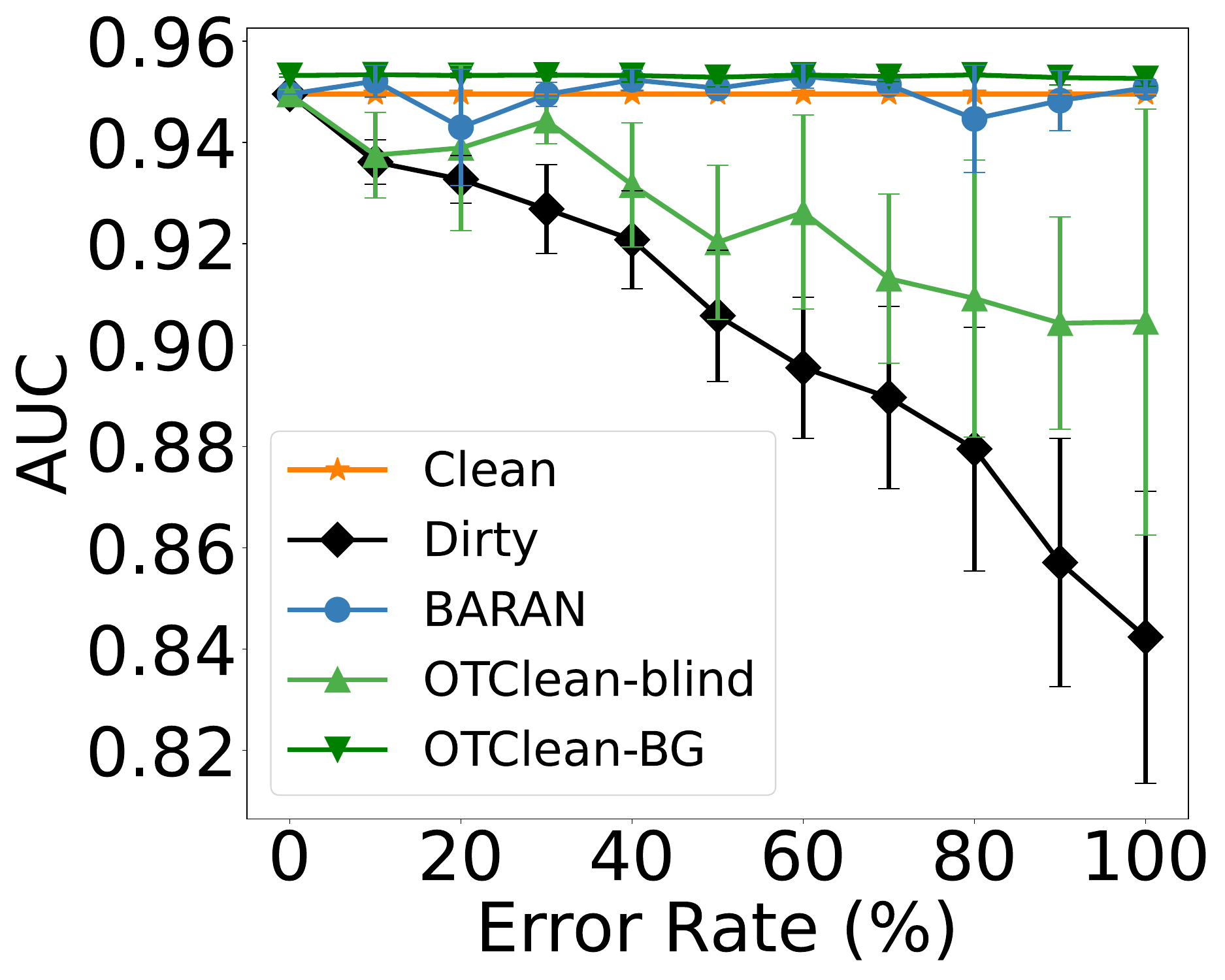}
        \caption{\boston}
        \label{fig:boston-auc-noise}
    \end{subfigure}
    \begin{subfigure}{0.22\textwidth}
        \centering
    \includegraphics[width=\linewidth]{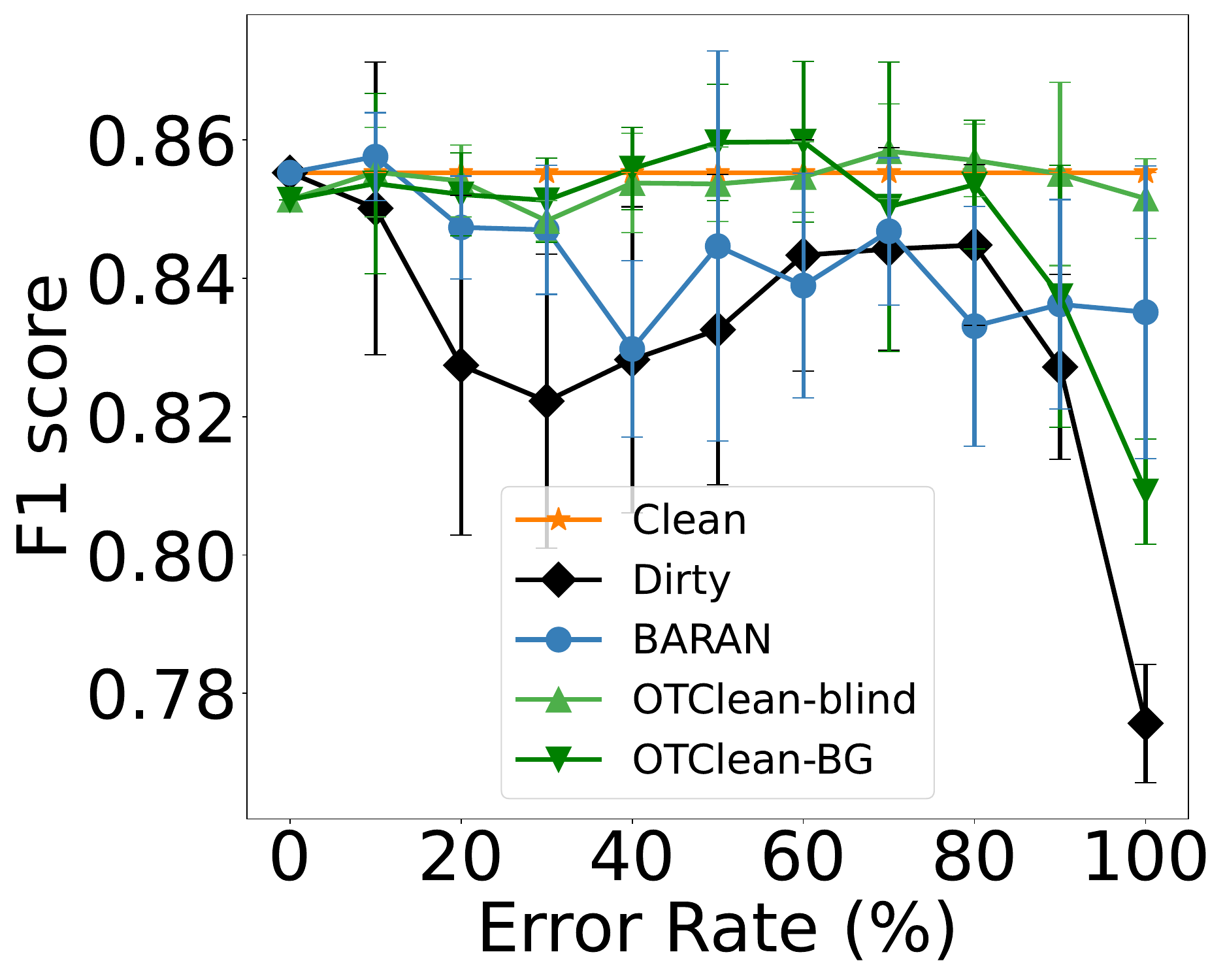}
        \caption{\car}
        \label{fig:car-f1-noise}
    \end{subfigure}
    \hfill
    \begin{subfigure}{0.22\textwidth}
        \centering
        \includegraphics[width=\linewidth]{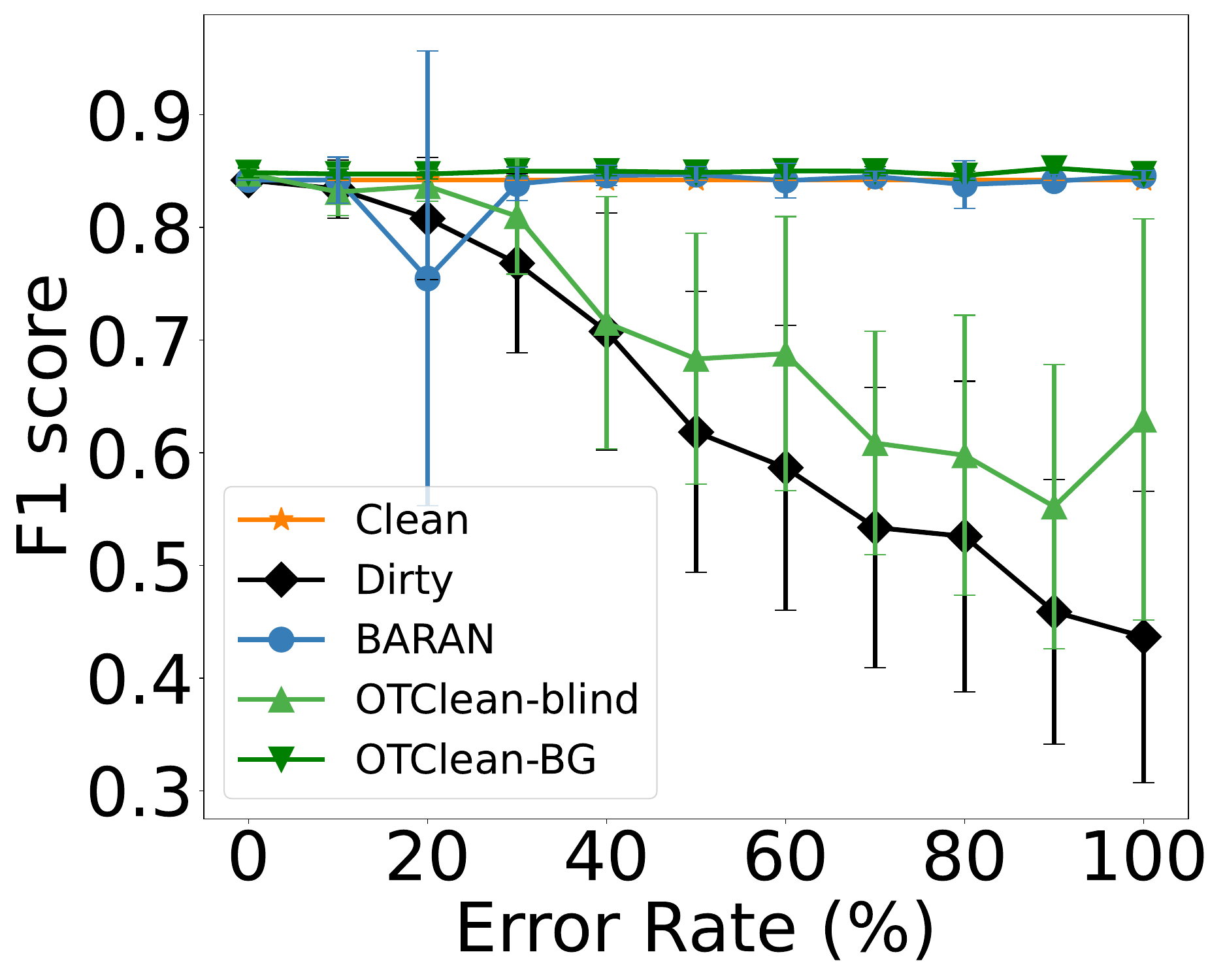}
        \caption{\boston}
        \label{fig:boston-f1-noise}
    \end{subfigure}
    \vspace{-3mm}
    \caption{Attribute noise}
    \label{fig:noise}
\end{figure}

\paragraph{\bf Attribute Noise}  

\reviewerone{Figure~\ref{fig:noise} shows our results for cleaning data with attribute noise. We compared the performance, in terms of AUC and F1-score, of models using ``Clean'' data, ``Dirty'' data, and data cleaned by \sys and Baran. Our cleaning algorithm only applies the CI constraint and does not need prior information about the noise type. However, it can also use knowledge about which attribute is noisy for repair. We tested \sys in two ways: ``blind'', without knowing the noisy attribute, and with background knowledge (BG), where the noisy attribute is identified. The figures show how accuracy changes with different levels of noise. As noise increases, the model trained on dirty data performs worse. In contrast, the model trained on \sys-cleaned data in both scenarios closely matches the ground truth model's behavior. This is because the dirty data model might learn false patterns not present in clean test data. However, using \sys to apply the CI constraint helps the model focus on the correct data patterns. While \sys improves accuracy in both the blind and BG-informed settings, using background knowledge generally leads to better performance than the blind approach and Baran.}

\begin{figure*}[h]\centering
    \begin{subfigure}{0.22\textwidth}
        \centering
        \includegraphics[width=\linewidth]{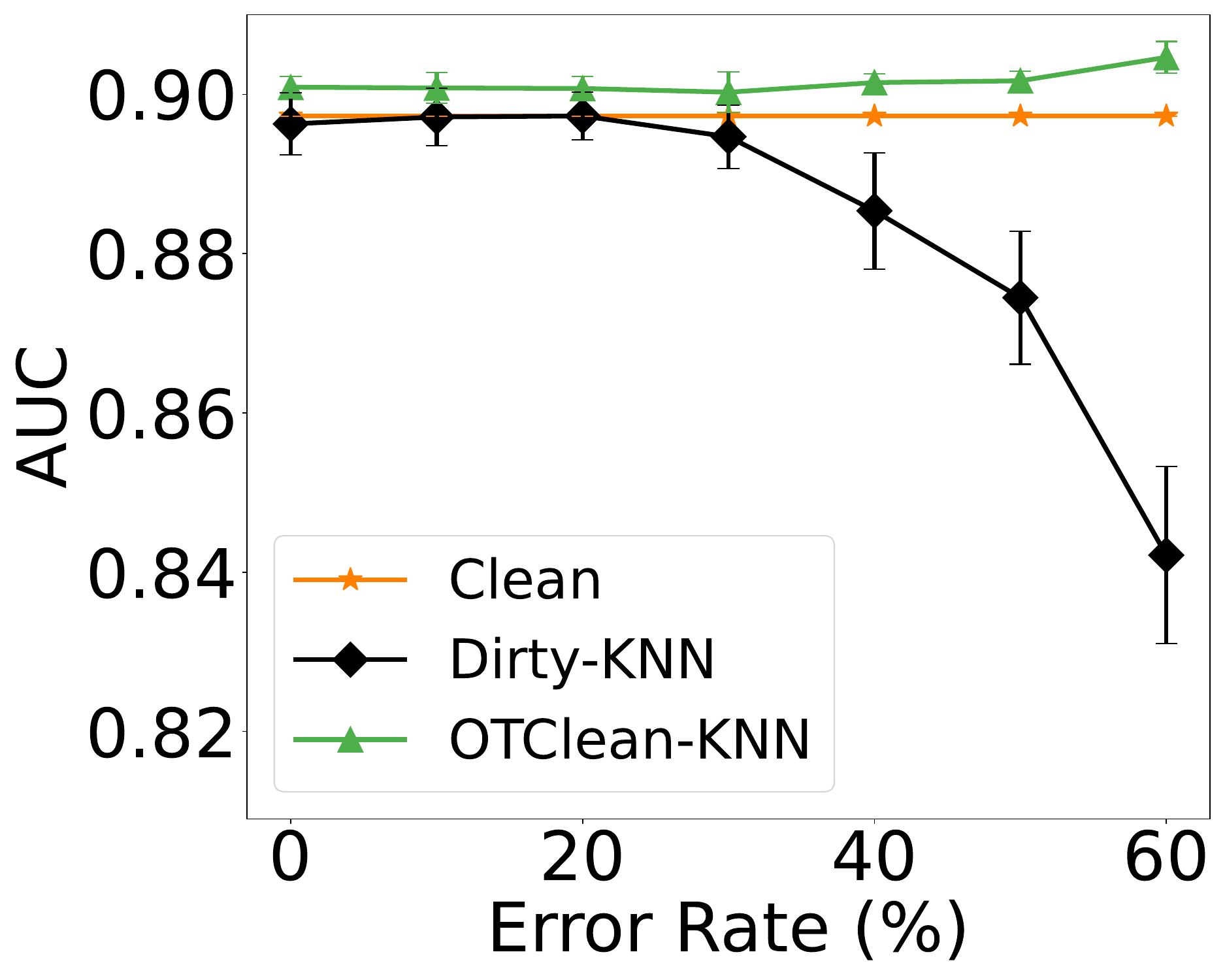}
        \caption{kNN Imputation}
        \label{fig:knn-mar-boston}
    \end{subfigure}
    \hfill
    \begin{subfigure}{0.22\textwidth}
        \centering
        \includegraphics[width=\linewidth]{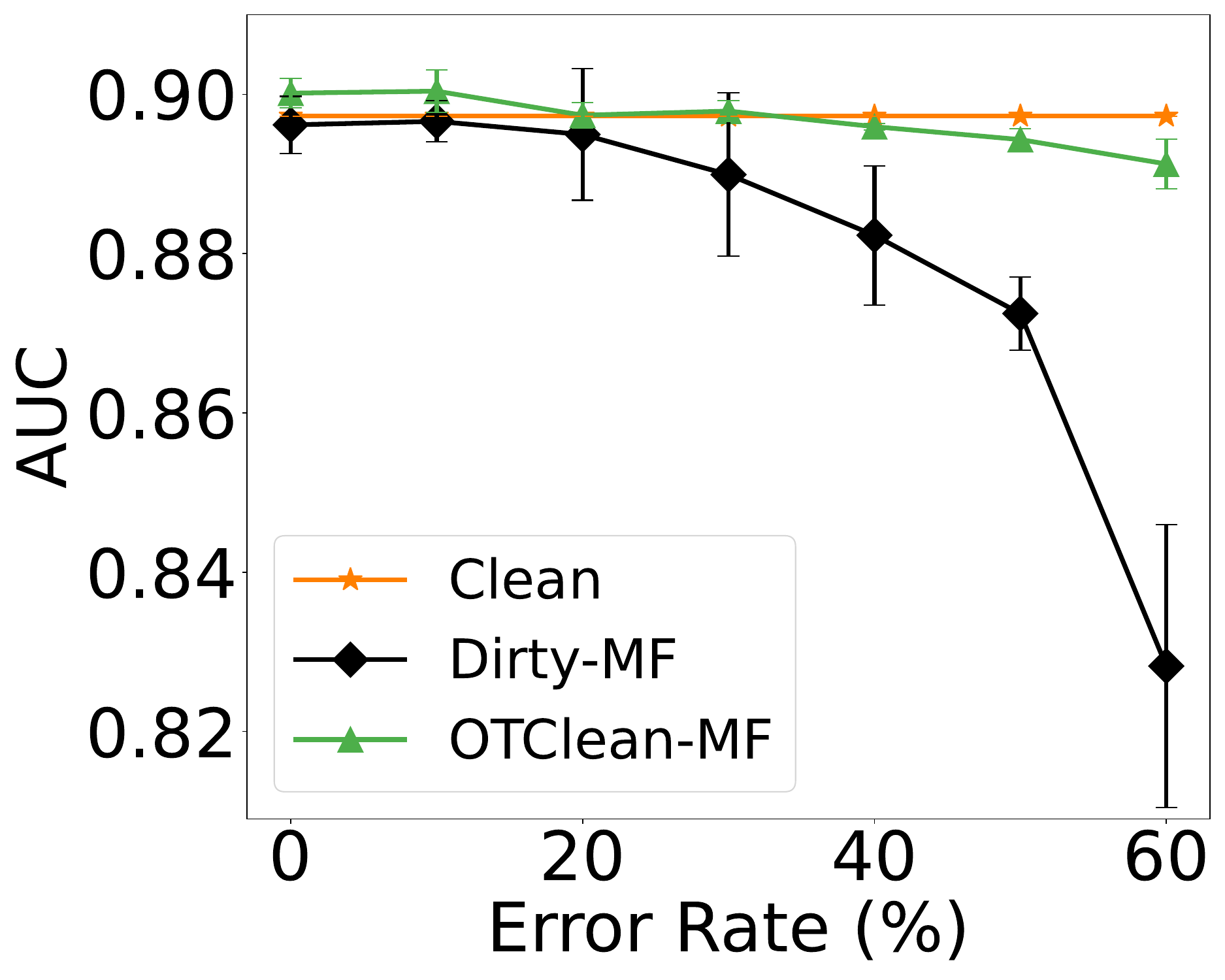}
        \caption{Most Frequent}
        \label{fig:mf-mar-boston}
    \end{subfigure}
    \hfill
    \begin{subfigure}{0.22\textwidth}
        \centering
        \includegraphics[width=\linewidth]{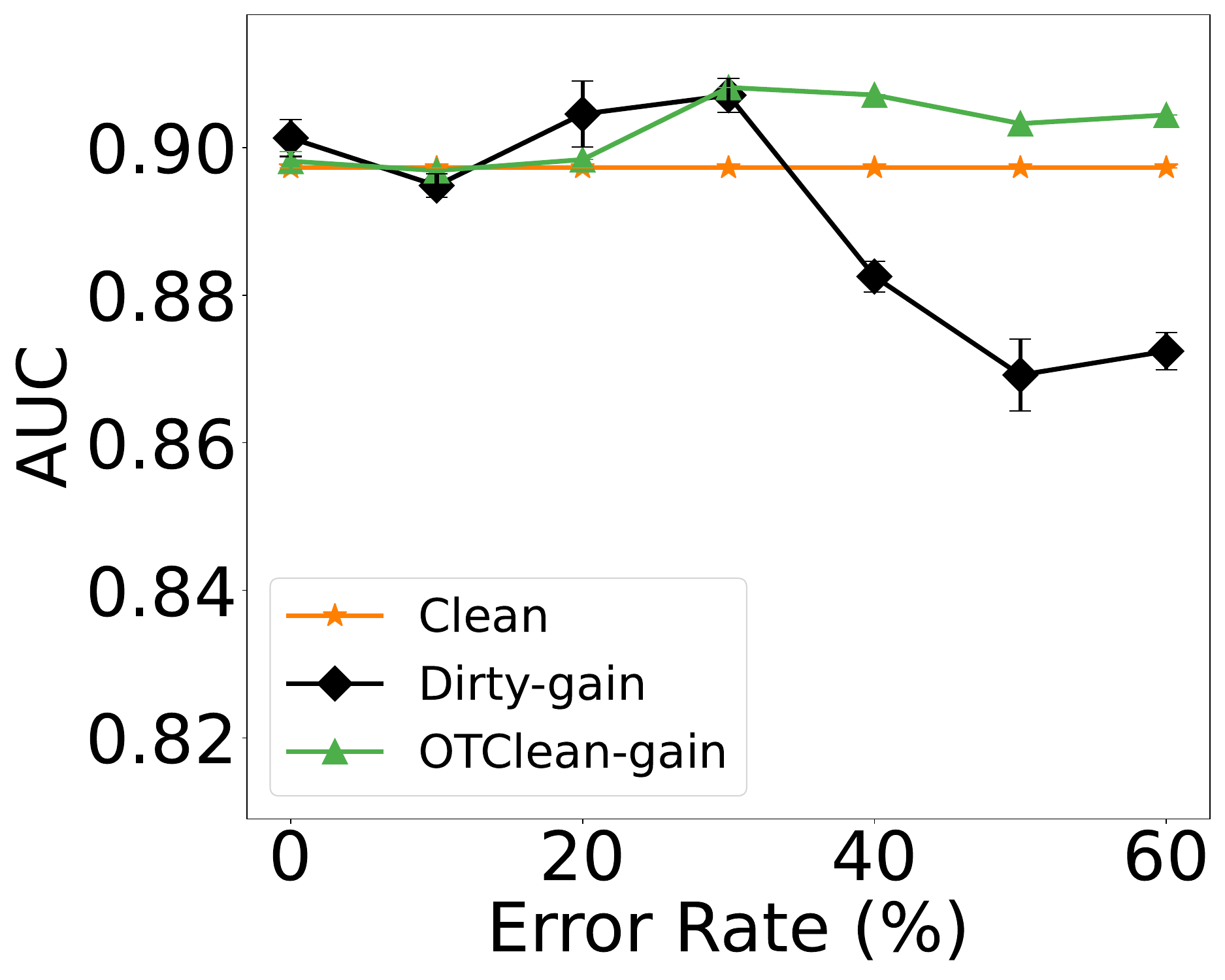}
        \caption{GAIN}
        \label{fig:gain-mar-boston}
    \end{subfigure}
    \hfill
    \begin{subfigure}{0.22\textwidth}
        \centering
        \includegraphics[width=\linewidth]{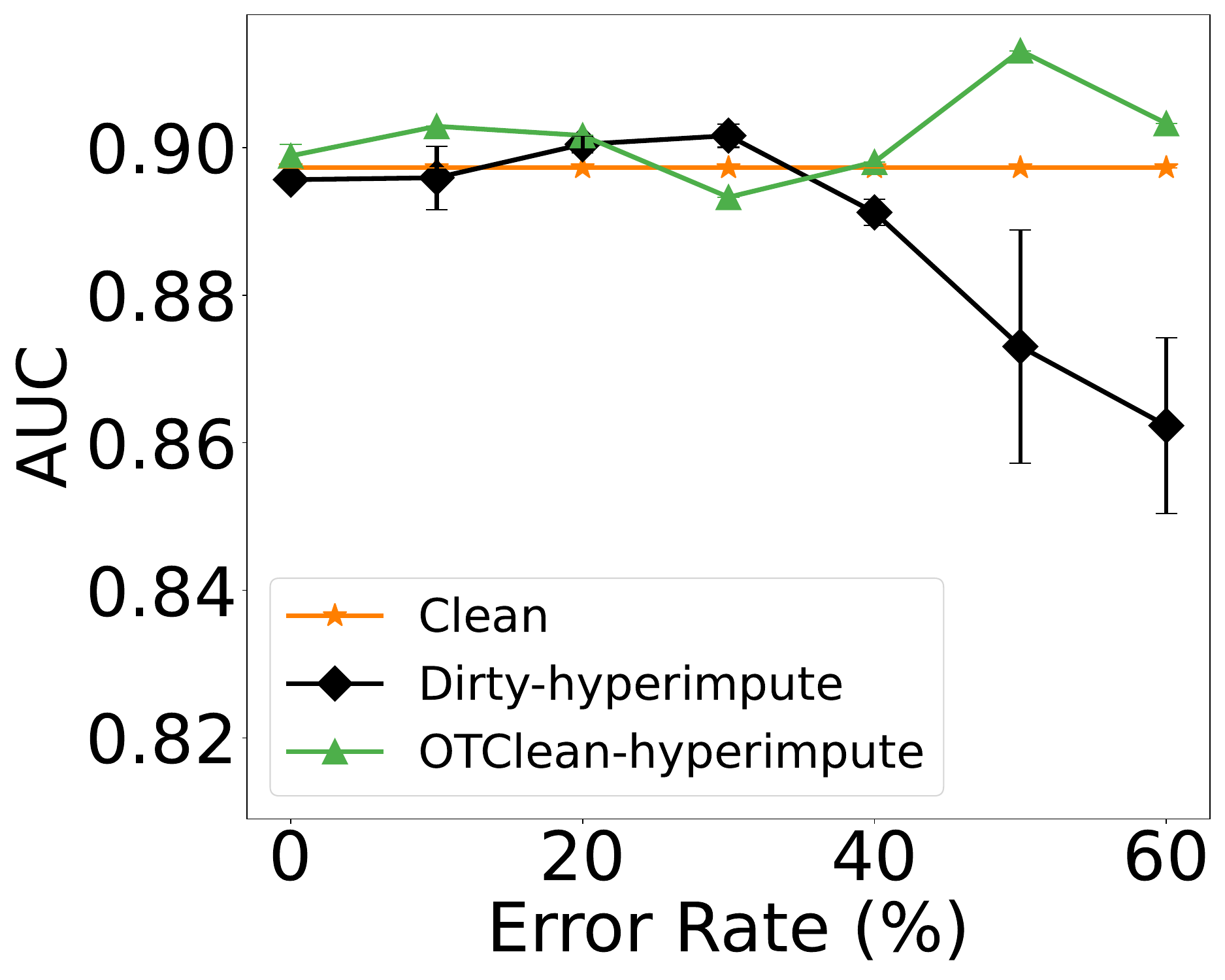}
        \caption{Hyperimpute}
        \label{fig:hyper-mar-boston}
    \end{subfigure}
    \vspace{-3mm}
    \caption{Missing at random (MAR) in \boston dataset}
    \label{fig:mar}
\end{figure*}
\begin{figure*}[h]
    \centering
    \begin{subfigure}{0.22\textwidth}
        \centering
        \includegraphics[width=\linewidth]{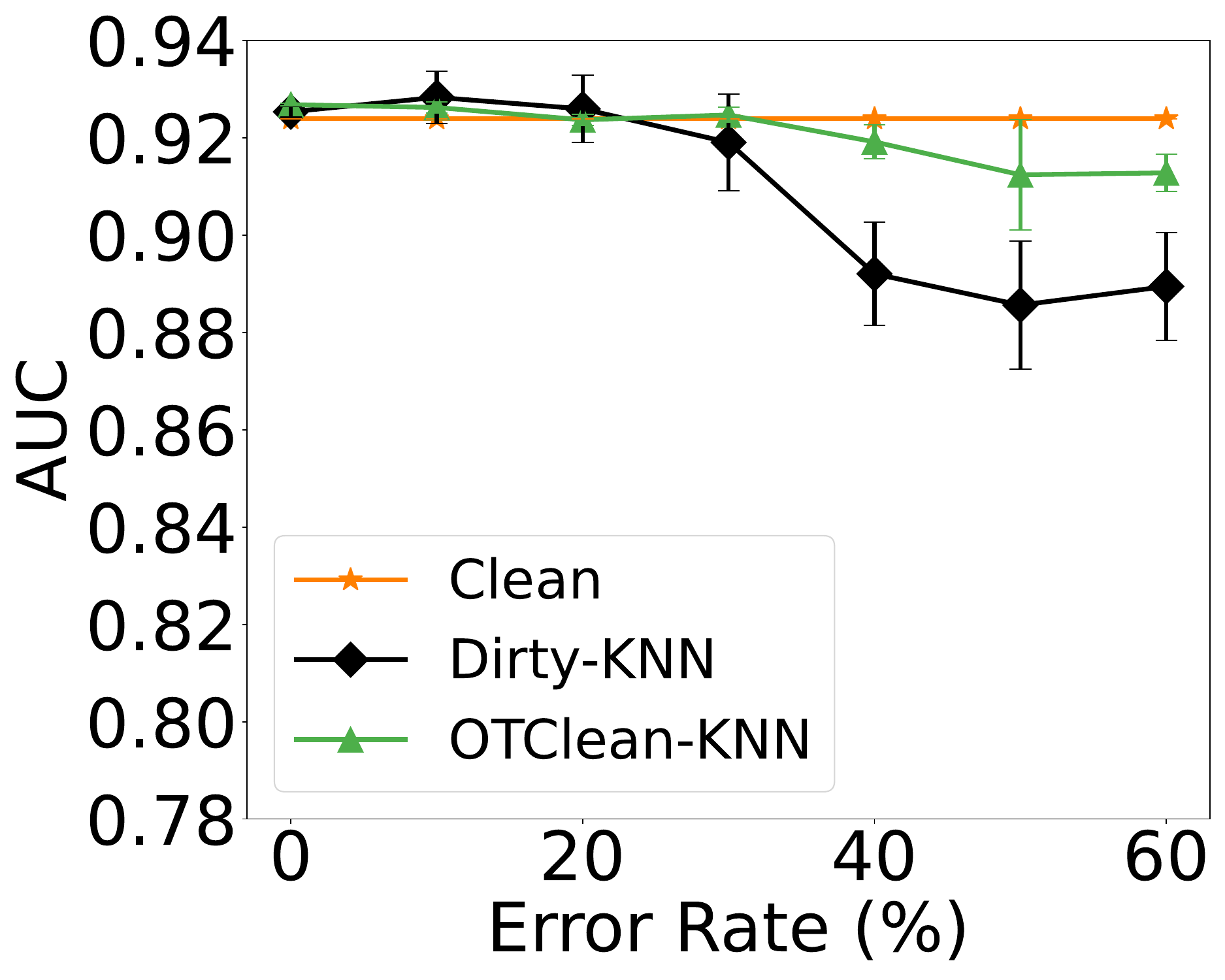}
        \caption{kNN Imputation}
        \label{fig:knn-mnar-car}
    \end{subfigure}
    \hfill
    \begin{subfigure}{0.22\textwidth}
        \centering
        \includegraphics[width=\linewidth]{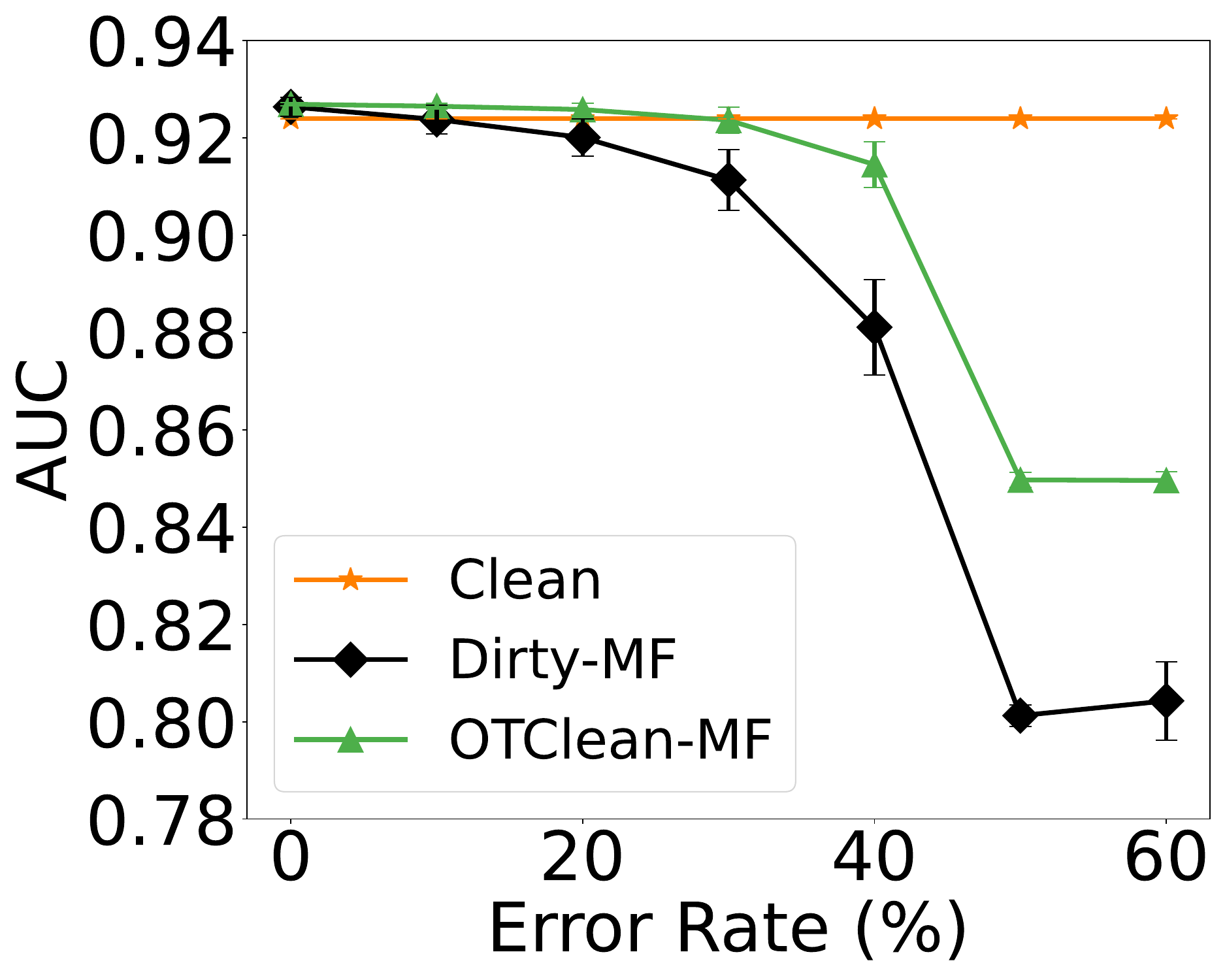}
        \caption{Most Frequent}
        \label{fig:mf-mnar-car}
    \end{subfigure}
    \hfill
    \begin{subfigure}{0.22\textwidth}
        \centering
        \includegraphics[width=\linewidth]{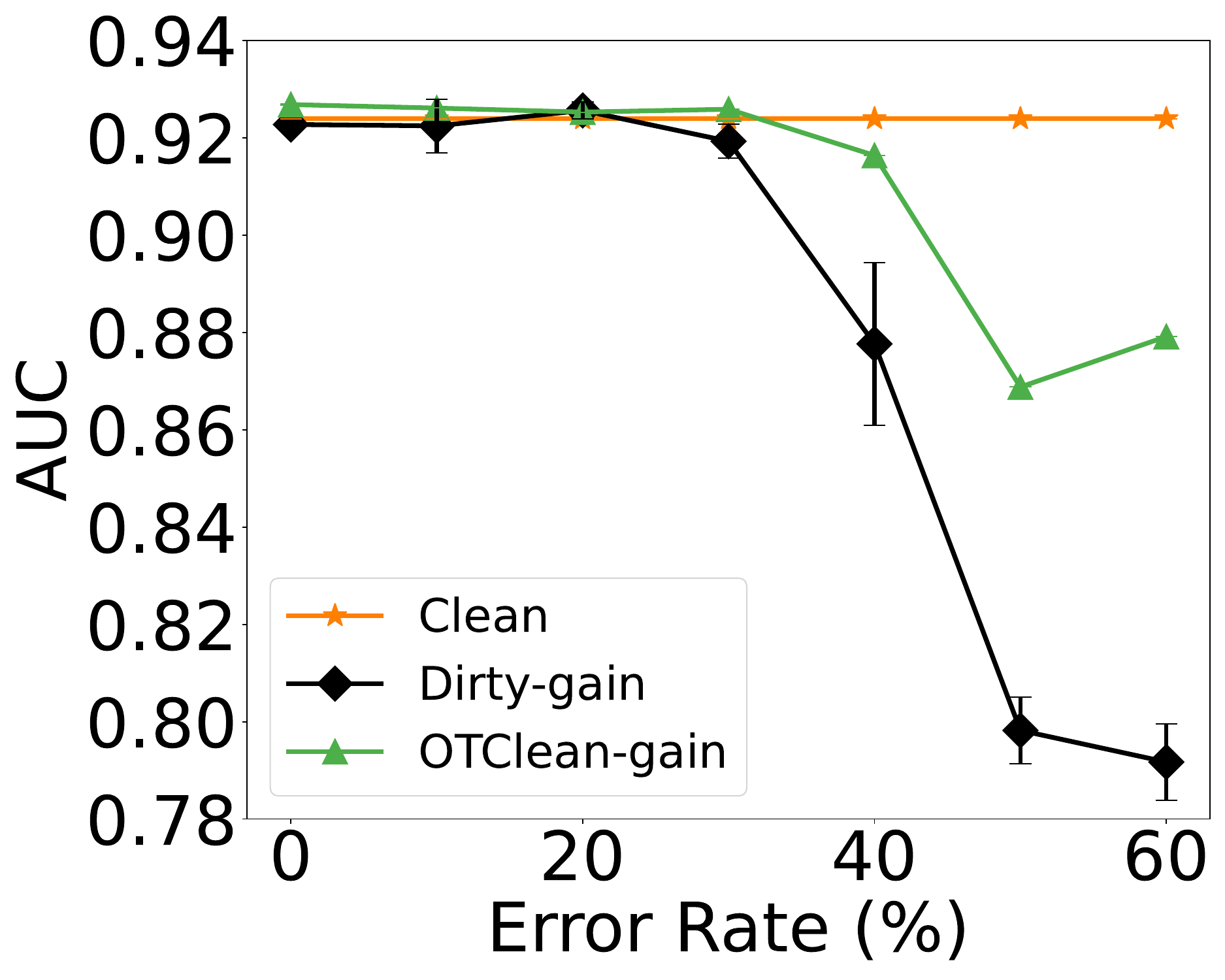}
        \caption{GAIN}
        \label{fig:gain-mnar-car}
    \end{subfigure}
    \hfill
    \begin{subfigure}{0.22\textwidth}
        \centering
        \includegraphics[width=\linewidth]{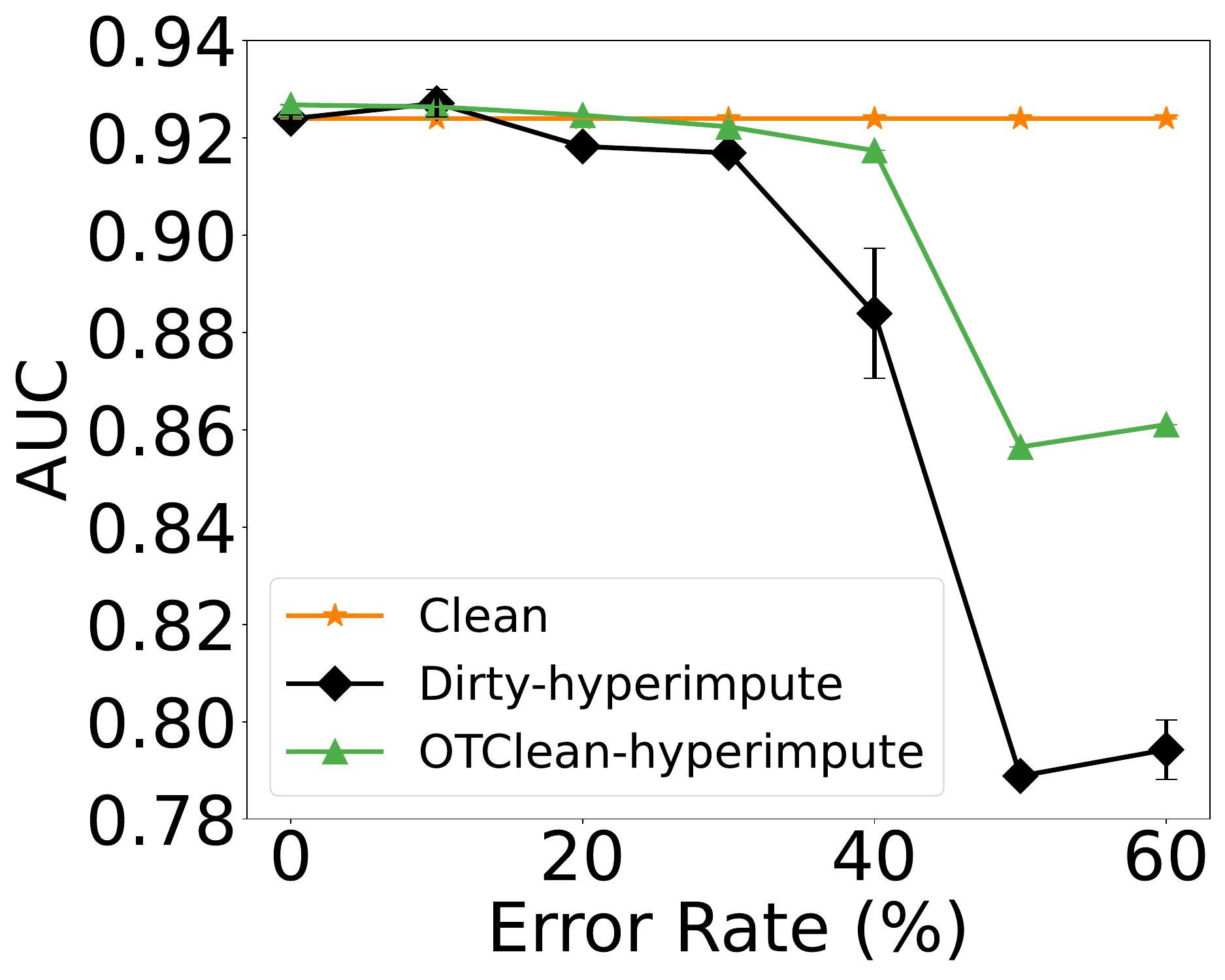}
        \caption{Hyperimpute}
        \label{fig:hyper-mnar-car}
    \end{subfigure}
    \vspace{-3mm}
    \caption{Missing Not at Random (MNAR) in \car dataset}
    \label{fig:mnar}
\end{figure*}

 \vspace{-0.3cm}
\paragraph{\bf Missing Values}  \reviewerone{In our missing value experiments (Figures~\ref{fig:mar} for MAR and \ref{fig:mnar} for MNAR), we tested model performance at different missing data levels. We compared ``Dirty'' models (trained with missing values filled using methods like MF, kNN, GAIN, and Hyperimpute) against \sys-enhanced models (\sys-MF, \sys-KNN, \sys-GAIN, and \sys-Hyperimpute). For MAR, all imputation methods struggled with high missing data rates, affecting performance. However, combining them with \sys improved results, closely matching the ground truth regardless of missing data amount. The slight advantage over ground truth models in Figure~\ref{fig:mar} is due to limited data size. For MNAR, as shown in Figure~\ref{fig:mnar}, our approach performed better than the baseline but declined as missing data increased. This is because MNAR issues are generally harder to address. While using \sys helps reduce false correlations, differences in training and test data distributions can still affect performance.}


\reviewerone{\subsection{Evaluation using Statistical Distortion}\label{sec:distortion}}

\reviewerone{Dasu et al.~\cite{dasu2012statistical} proposed a way to evaluate data cleaning methods, focusing on how they statistically distort data. They used measurements like the Earth Mover Distance (EMD) to see how much a method changes the original data distribution; less change is better. Their approach starts with a dirty dataset and its cleaned version. Using sampling, they generate pairs of these datasets, called replications, and clean the dirty ones. Using several replications instead of a single dataset pair ensures a more comprehensive and robust evaluation, avoiding biases that might arise from the unique characteristics of a single dataset. They then measure how much these strategies alter the data and improve error correction.}

\reviewerone{In our experiments, we applied this framework to test \sys as a data cleaning method. We compared its effect on data distortion to other methods. Instead of looking at repaired errors, we focused on the accuracy (AUC) using the cleaned data. We ran 100 replications with attribute noise. The results are in Figure~\ref{fig:distortion}, where each cluster represents a cleaning method (the black point shows the original dirty data). Each point shows the balance between data distortion and AUC improvement for a replication. The figure indicates that \sys generally improves performance more than Baran in most cases and is closer to the clean datasets, though with a bit more distortion. This increased distortion is due to moving the data closer to the ideal clean dataset, leading to better accuracy.}

\begin{figure}[h]
    \centering
    \includegraphics[width=0.50\linewidth]{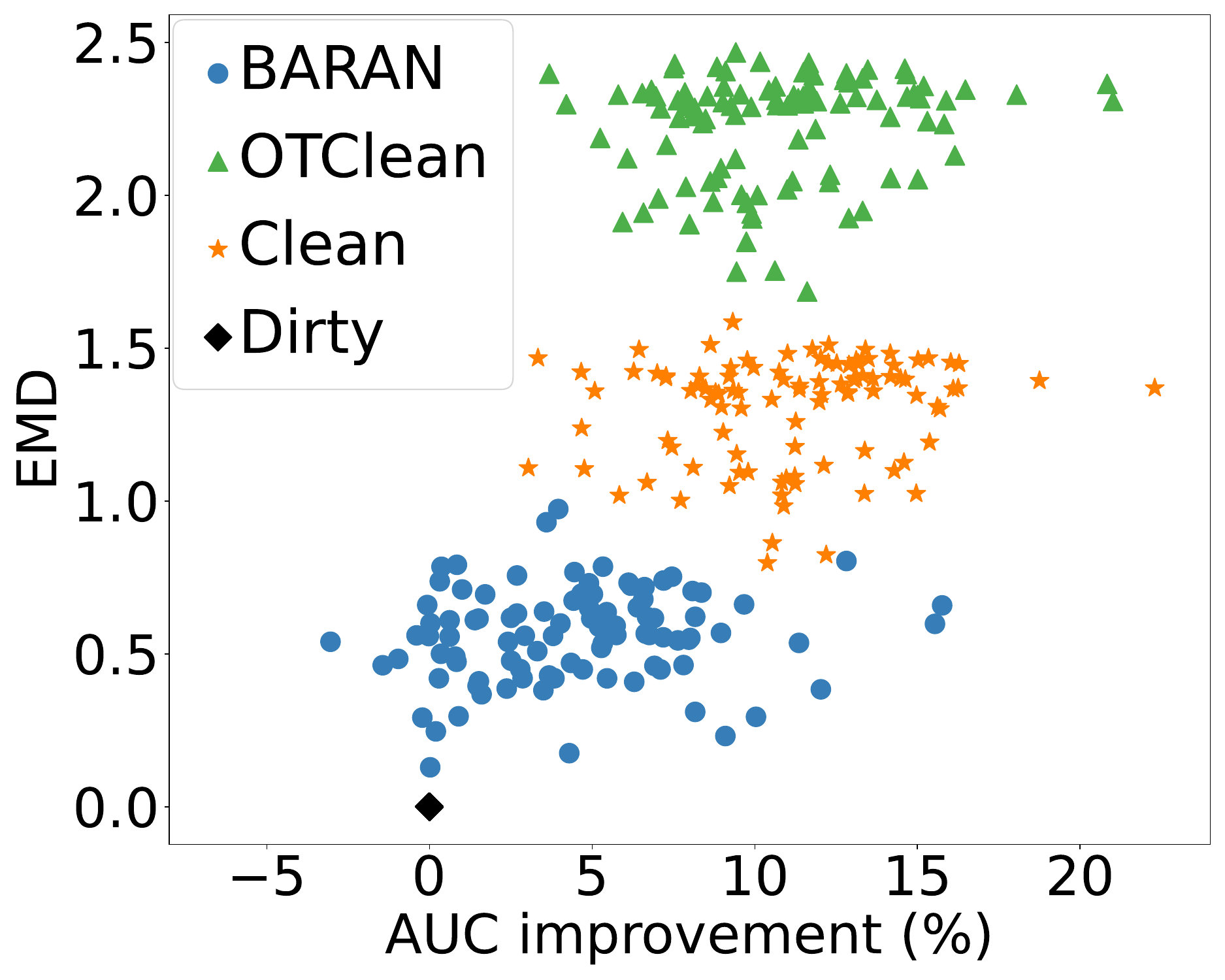}
 
    \caption{Comparing \sys and the competing cleaning methods based on their statistical distortion}
    \label{fig:distortion}
\end{figure}

\begin{table}[h]
    \centering
     \resizebox{0.45\textwidth}{!}{
    \begin{tabular}{*{7}{l}} 
    \toprule
        \textbf{Dataset} &
        \textbf{\mainAlg-C1} & 
        \textbf{\mainAlg-C2} & 
        \textbf{MF} & 
        \textbf{IC} & 
        \textbf{MS} & 
        \textbf{QCLP} \\
    \midrule
        \adult &
        1229   & 
        1279   & 
        66 & 66 & 700 & NA \\
        \compas &
        848   & 
        337  & 
        7 & 6 & 1227 & 2 \\
    \bottomrule
    \end{tabular}
     }
    \caption{Runtime \reviewerone{(sec)} for the fairness application}

    \label{tab:runtimes}
\end{table}

\subsection{\sys's Runtime and Performance}\label{sec:sys-exp}

\paragraph{\bf Runtime} 
In Table~\ref{tab:runtimes}, we provide the runtime results of \mainAlg for \adult and \compas datasets, comparing them with the baselines. While our algorithm's runtime is somewhat higher due to the complex nature of optimal transport, it remains reasonably fast and offers a practical means for employing optimal transport in data cleaning for CI constraints. \reviewerthree{Our algorithm's runtime is mainly influenced by the number of attributes in the CI constraints rather than the data size. This is because the size of the transport plan we use stays the same no matter how large the data is; it only changes based on the number of attributes. In our experiments, the key factor is the domain size, determined by the number of attributes in the CI constraints and the range of values these attributes can take. Figure~\ref{fig:perf-rt-mem} illustrates \mainAlg's runtime and memory usage for methods on the \adult for increasing domain size, showing that it can scale effectively to high-dimensional CIs. Furthermore, memory requirements and runtime can be further reduced by implementing a sparse representation for the transport plan.} 

\paragraph{\bf Convergence and Optimization} Figure~\ref{fig:converg} demonstrates the convergence behavior of our main \mainAlg, affirming the result presented in Theorem~\ref{th:correct}. It shows the monotonic decrease of the objective function, which represents the cost of the transport plan with the number of iterations. Additionally, the graph compares the convergence properties of \mainAlg with two different initializations: one with a random initialization of $\vt{q}$ and another using NMF. Notably, initializing with NMF reduces the total convergence iterations by nearly 30\%.  We also highlight optimizations aimed at reducing runtime. The first optimization involves updating 
$q$ slices in parallel, achieving a significant speedup of \ $\times7$ in our \adult data. Another optimization focuses on unsaturated CIs. Figure~\ref{fig:unsaturated-time} illustrates the substantial runtime improvement achieved by employing the proposed optimization for unsaturated CI constraints while maintaining the same outcome. In this scenario, we initiate with a CI constraint and construct \(\st{W}\) using attributes with varying domain sizes. We then evaluate the runtime of both the naive and saturation approaches. The saturation approach consistently solves the same problem, optimizing \(\pi_s\), regardless of growing \(\pi\)'s size, contributing to its stable performance.  In our final experiment, we investigate the impact of warm start optimization on Sinkhorn iteration numbers. Figure~\ref{fig:warm-start} shows warm start reduces the number of iterations by more than sevenfold.

\begin{figure}[h]
    \centering
    \begin{subfigure}{0.22\textwidth}
        \centering
        \includegraphics[width=\linewidth]{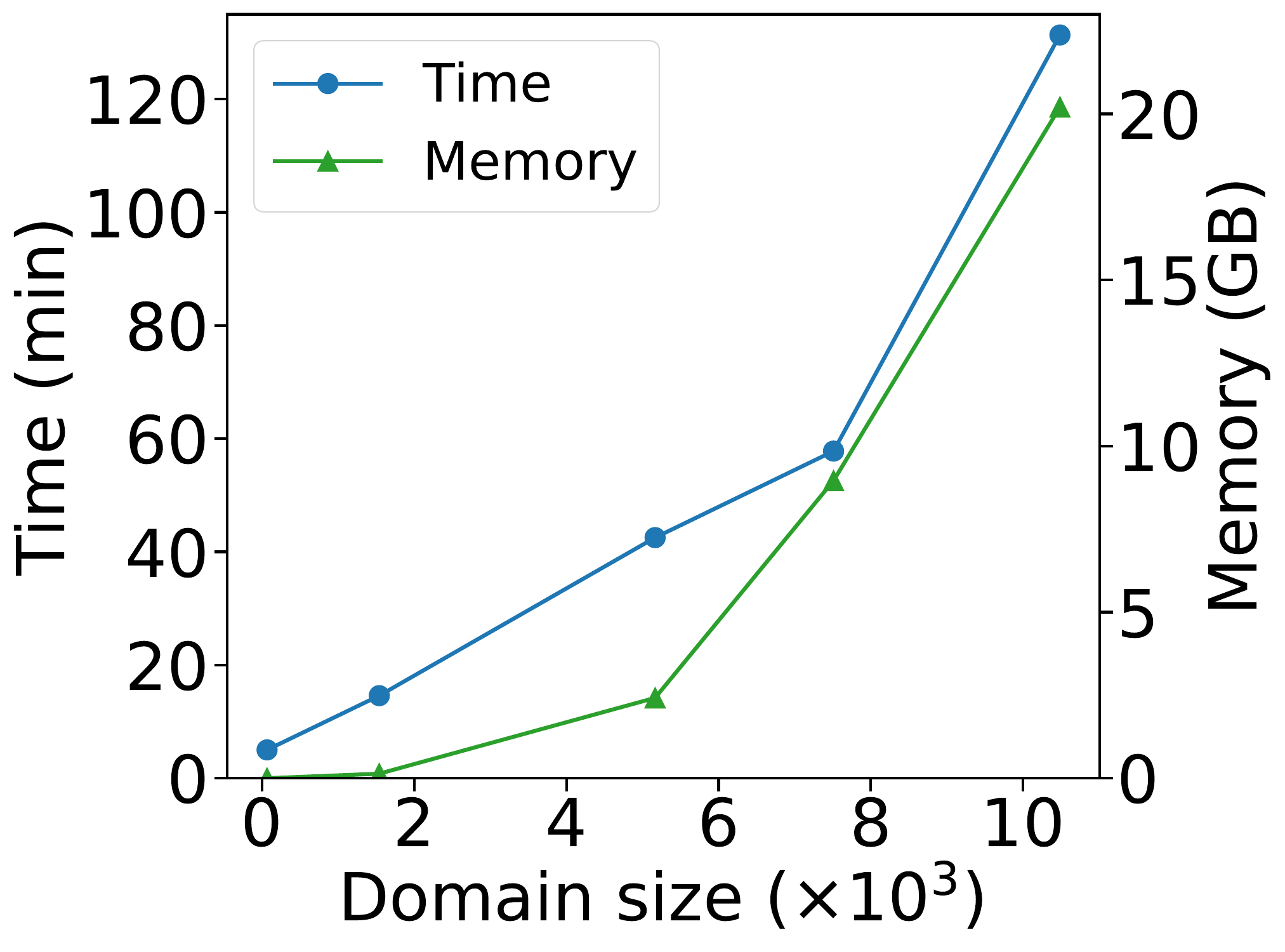}
        \caption{Runtime and Memory Usage}
        \label{fig:perf-rt-mem}
    \end{subfigure}
    \hfill
    \begin{subfigure}{0.22\textwidth}
        \centering
        \includegraphics[width=\linewidth]{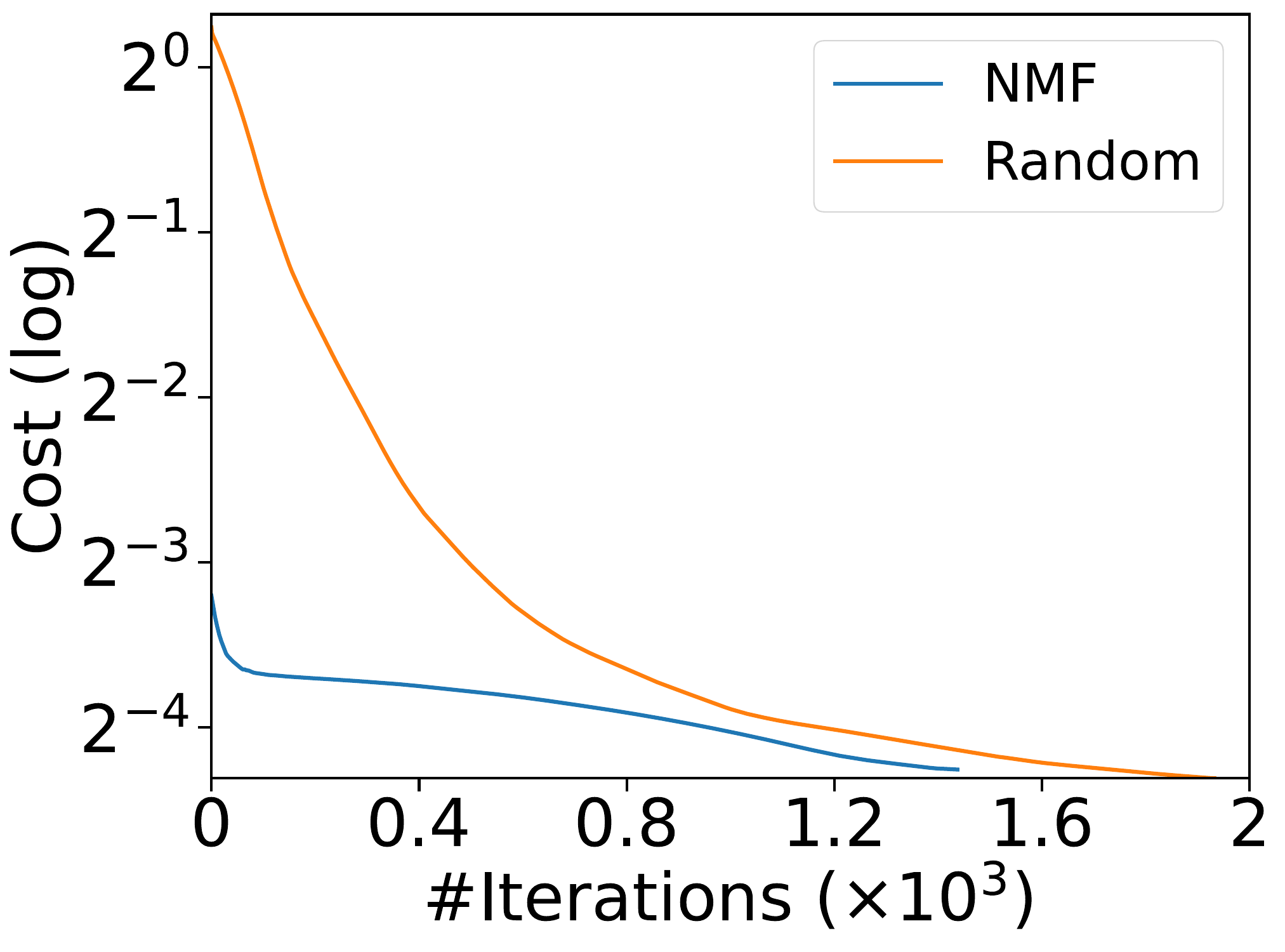}
        \caption{Convergence}
        \label{fig:converg}
    \end{subfigure}

    \caption{\sys' performance}
    \label{fig:perf}
\end{figure}


\begin{figure}[h]
    \centering
    \begin{subfigure}{0.22\textwidth}
        \centering
        \includegraphics[width=\linewidth]{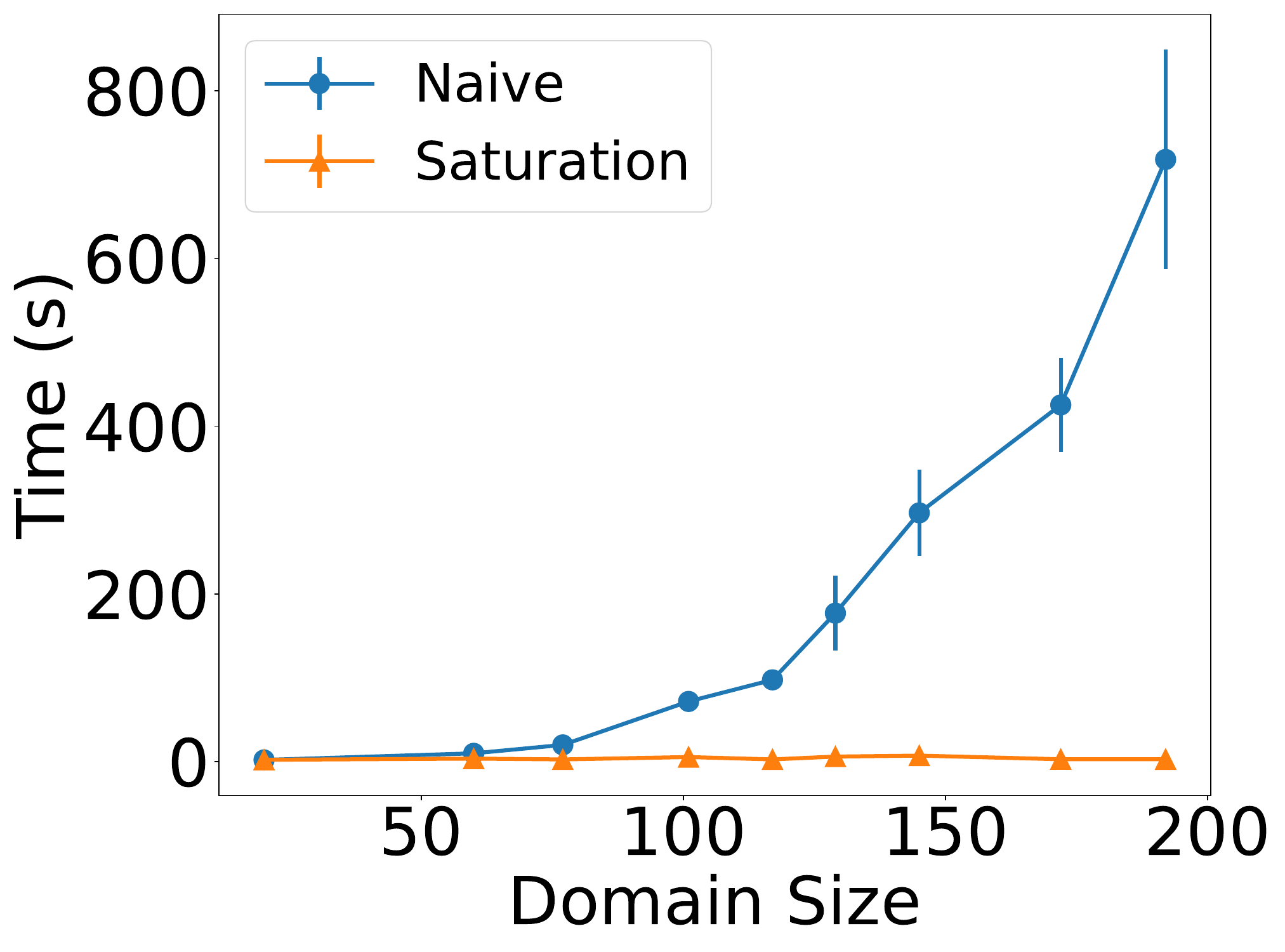}
        \caption{Saturation}
        \label{fig:unsaturated-time}
    \end{subfigure}
    \hfill
    \begin{subfigure}{0.22\textwidth}
        \centering
        \includegraphics[width=\linewidth]{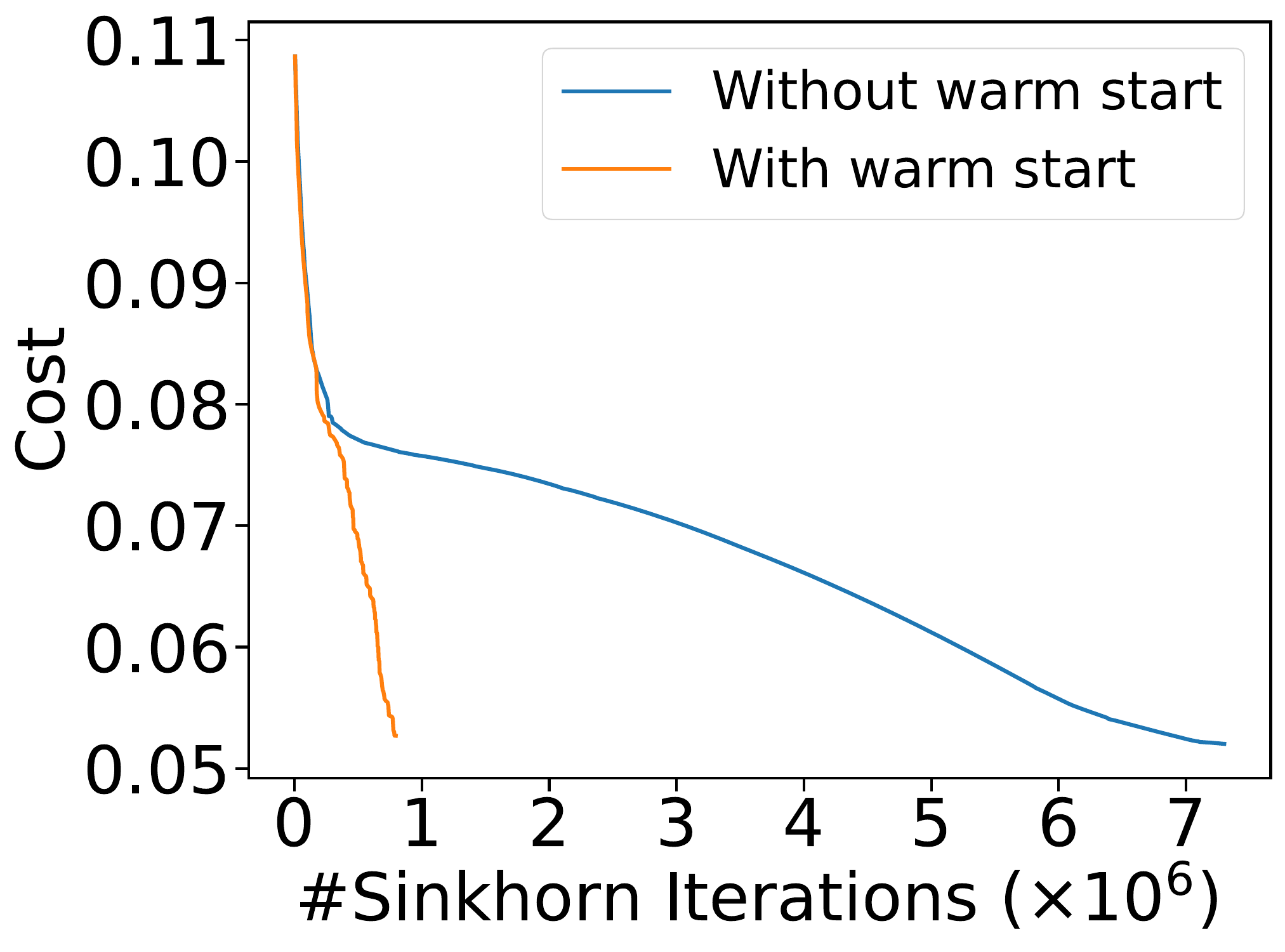}
        \caption{Warm start}
        \label{fig:warm-start}
    \end{subfigure}

    \caption{\sys's Optimizations}
    \label{fig:unsaturated}
\end{figure}
\section{Related Work}

Our research connects with two main areas of study.
\vspace{-2mm}
\paragraph{\bf Data Cleaning for Conditional Independence}
Data cleaning in the database domain traditionally revolves around enforcing integrity constraints, such as functional dependencies and conditional functional dependencies~\cite{livshits2020computing, kolahi2009approximating, bohannon2006conditional, DBLP:conf/icdt/LivshitsK21}. Nonetheless, the domain of data cleaning for conditional independence has only recently gained attention. Notable works in this emerging field include~\cite{salimi2019interventional} and \cite{yan2020scoded}. SCODED~\cite{yan2020scoded} employs statistical constraints to detect errors within datasets but primarily focuses on ranking individual data tuples based on their relevance to conditional independence violations, differing from our data-centric approach. On the other hand, \cite{salimi2019interventional} aims to find optimal repairs for conditional independence violations, involving the addition or removal of tuples to satisfy the constraint. However, their method lacks the application of specific statistical divergence or distance measures to assess the quality of the repaired data. In a somewhat distinct vein,~\cite{ahuja2021conditionally} utilizes generative adversarial networks (GANs) to generate data adhering to conditional independence constraints. Their primary objective is to train these generative models effectively, particularly emphasizing the minimization of Jensen–Shannon divergence in continuous data. However, their focus is on training generative models rather than cleaning existing data.
\vspace{-2mm}
\paragraph{\bf Fairness and Optimal Transport}
Algorithmic fairness research has primarily focused on detecting and mitigating biases in machine learning models, utilizing pre-, post-, and in-processing techniques. Pre-processing methods~\cite{Caton2020FairnessIM}, aim to eliminate bias from training data before model training. While model-agnostic approaches such as~\cite{NIPS2017_6988, feldman2015certifying, salimi2019interventional} exist, they often lack insights into the root causes of biases. These strategies typically address basic fairness criteria and may not delve into enforcing conditional independence tests or incorporating optimal transport methods. Notably,~\cite{gordaliza2019obtaining} employs the Wasserstein barycenter for pre-processing training data to achieve statistical parity but doesn't specifically address the complexities of conditional statistical inference in high-dimensional datasets, distinguishing it from our approach. \cite{silvia2020general} employs optimal transport as a regularizer during ML model training, focusing on a different aspect than our data cleaning objective. Additionally, studies like~\cite{si2021testing, black2020fliptest} use optimal transport to quantify unfairness, making them less aligned with our core research goal.

\section{Conclusion}\label{sec:fw} In this paper, we introduced a principled approach for data cleaning under conditional independence constraints, harnessing optimal transport theory. Our results underscore the importance of prioritizing conditional independence in data pipelines for enhancing ML model robustness, reliability, accuracy, and fairness. Our techniques have demonstrated potential with discrete data, and we aim to further optimize and extend their applicability to continuous and relational data. Additionally, we plan to explore methods for enforcing multiple conditional independence constraints and capturing interactions between CIs and other database dependencies.

\bibliographystyle{ACM-Reference-Format}
\bibliography{ref}

\section{Additional Experimental Results}

In this section, we present additional experimental findings that couldn't fit into the main paper due to space limitations.

\subsection{Impact of Cost Functions}

One of the key promises of \sys is that by using OT, we can incorporate a suitable cost function that allows our solution to tailor its data repair based on the types of errors. To validate this concept, we conducted experiments to examine the impact of different cost functions on the results of our data-cleaning application.

\begin{figure}[h]
    \centering
    \begin{subfigure}{0.22\textwidth}
        \centering
        \includegraphics[width=\linewidth]{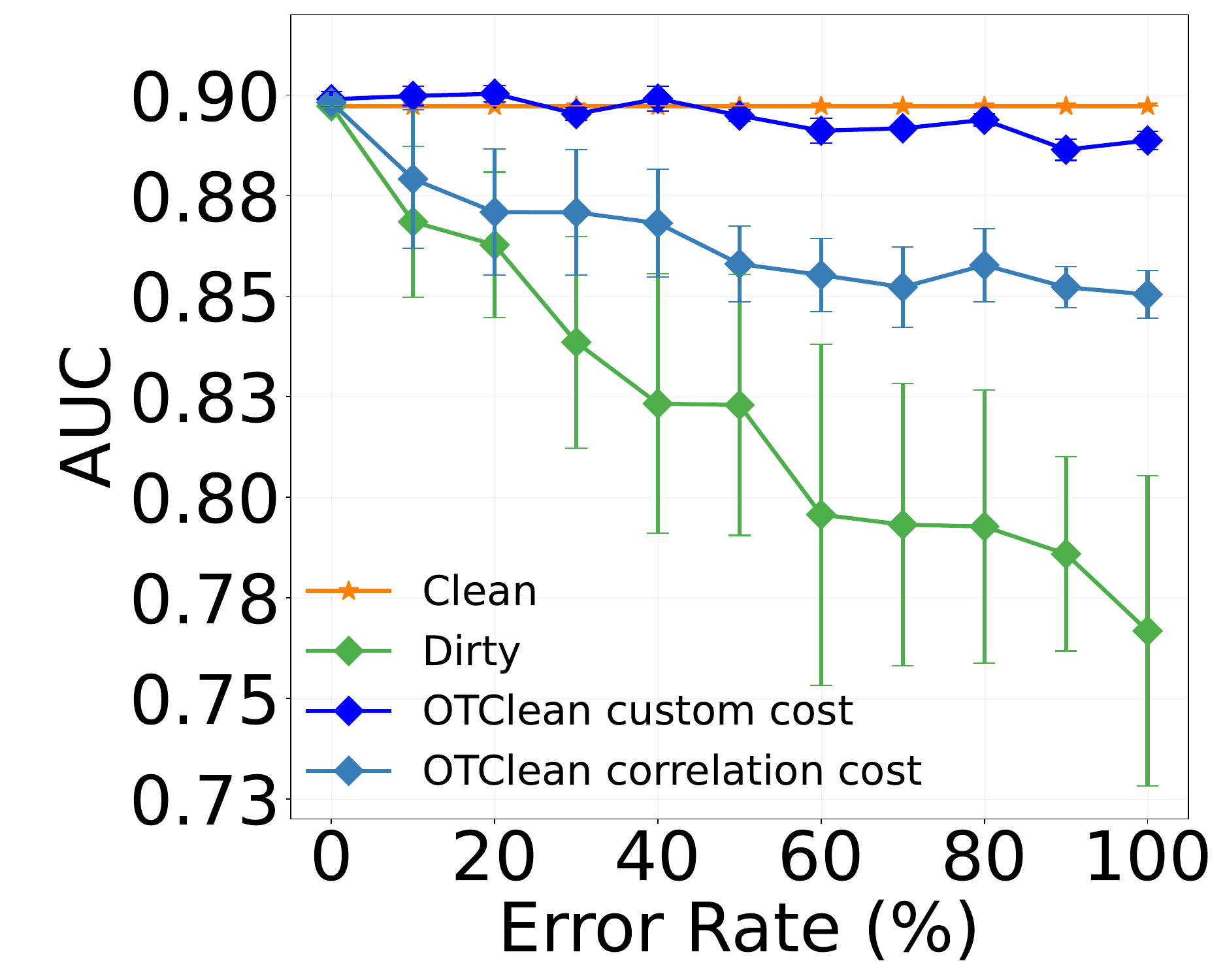}
        \caption{\boston}
        \label{fig:cost-boston}
    \end{subfigure}
    \hfill
    \begin{subfigure}{0.22\textwidth}
        \centering
        \includegraphics[width=\linewidth]{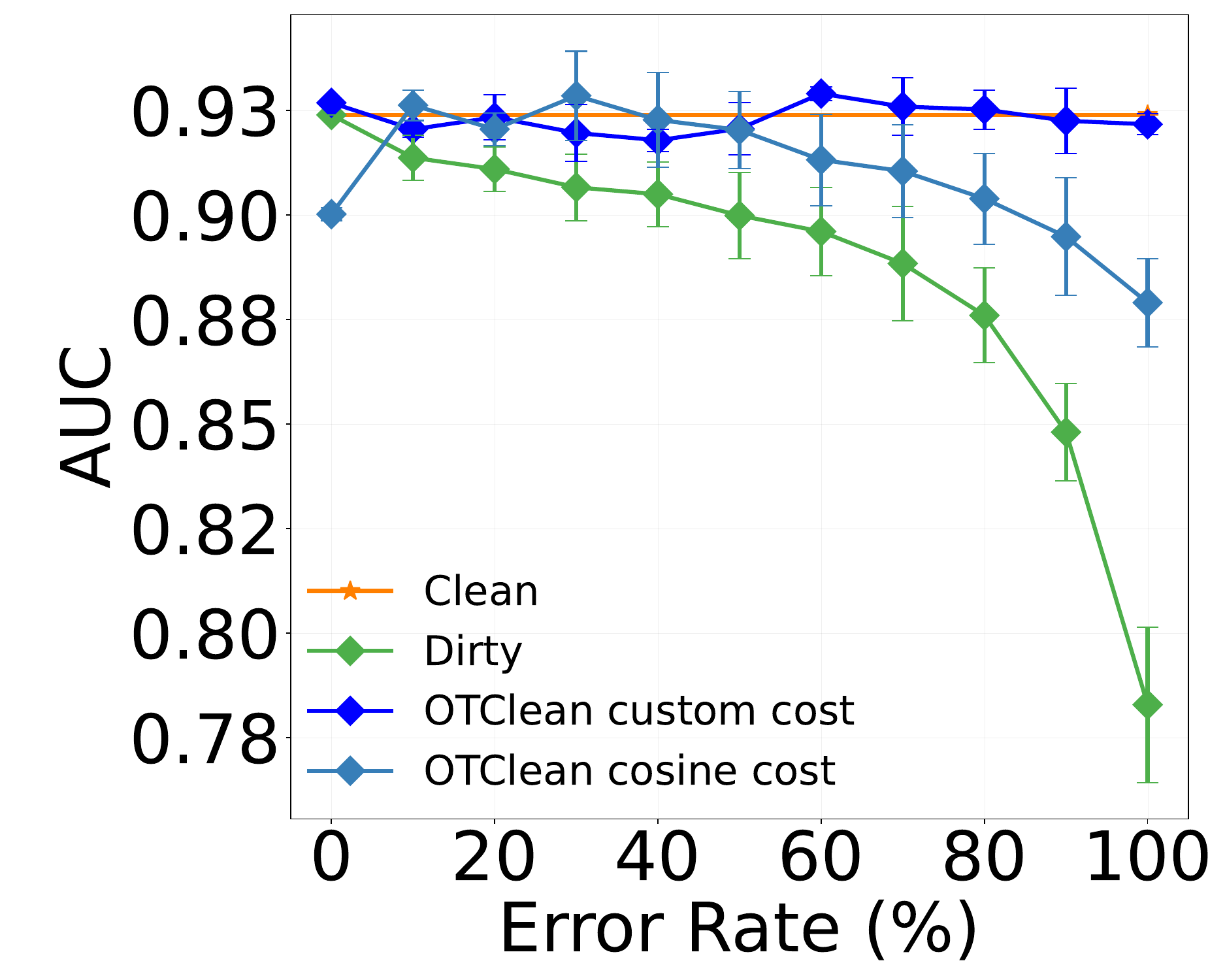}
        \caption{\car}
        \label{fig:cost-car}
    \end{subfigure}
    \centering
    \caption{Impact of cost on cleaning}
    \label{fig:cost-impact}
\end{figure}

In particular, we focused on attribute noises, and we illustrate the outcomes in Figures~\ref{fig:cost-impact}. We carried out the experiments for attribute noise, as discussed in Section~\ref{sec:exp-cleaning}, but using various cost functions. The outcomes are presented in Figures~\ref{fig:cost-boston} and~\ref{fig:cost-car}, for datasets \boston and \car. In these experiments, we explored a user-defined or ``custom'' cost function that aligns with the noise introduction process. This custom cost function assigns lower repair costs to noisy values when they are more likely to be corrected to their true values. This indicates that fixing values with higher probabilities of being correct is easier and incurs lower costs compared to other potential corrections. We compare this custom cost function, denoted as ``\sys custom cost'', with two alternative cost functions. One is based on cosine similarity, denoted by ``\sys cosine cost'', in the \boston dataset, while the other uses Pearson correlation, denoted by ``\sys correlation cost'', in the \car dataset. 

The results align with our expectations, demonstrating that a suitable custom cost function can come close to the clean data by effectively repairing noisy values to their true states. As shown in the figures, repairs made using the customer cost function result in a model that significantly outperforms the performance of the models using the other two general-purpose cost functions.

\begin{figure*}[h]
    \centering
    \begin{subfigure}{0.22\textwidth}
        \centering
        \includegraphics[width=\linewidth]{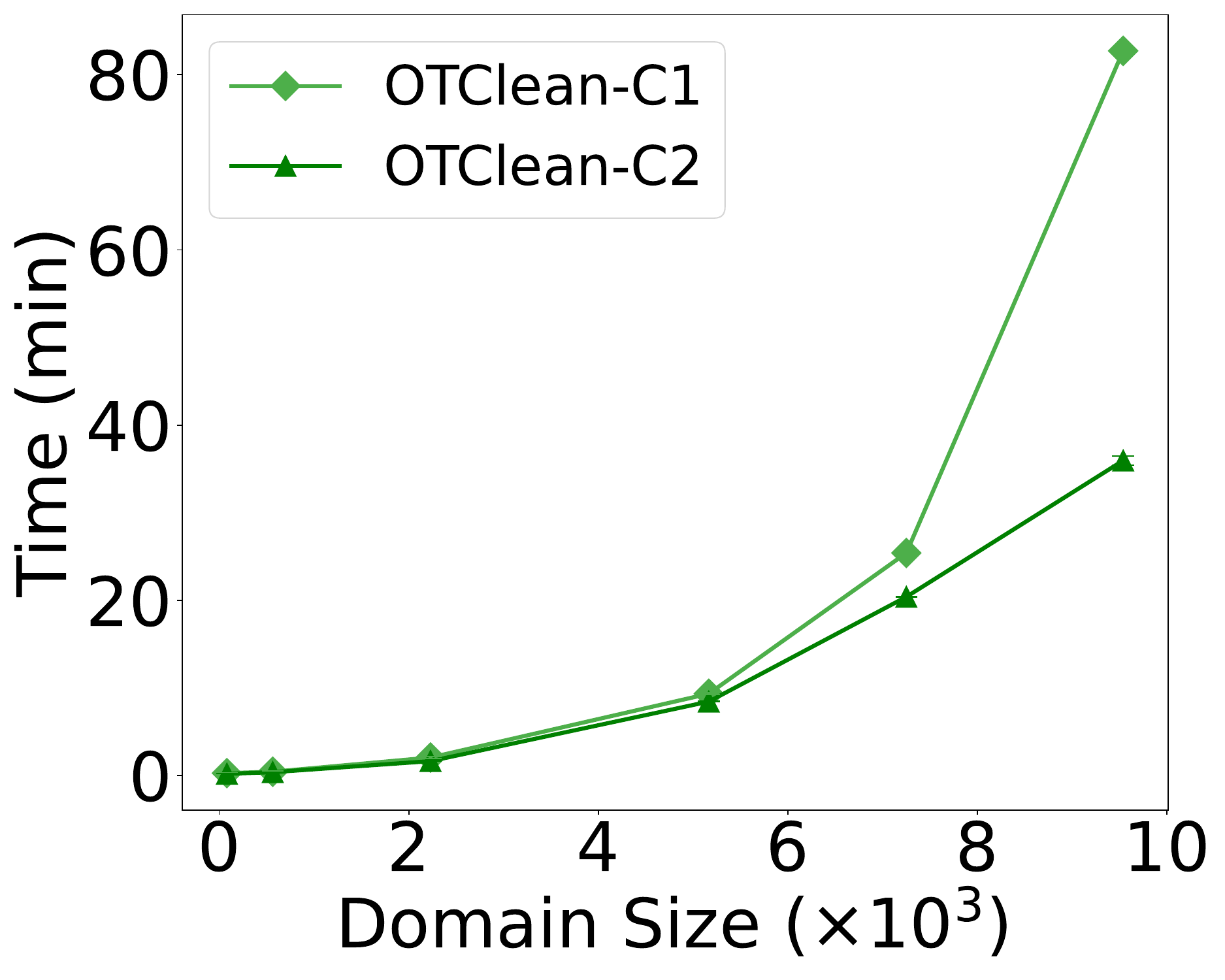}
        \caption{\adult}
        \label{fig:runtime-adult}
    \end{subfigure}
    \hspace{3mm}
    \begin{subfigure}{0.22\textwidth}
        \centering
        \includegraphics[width=\linewidth]{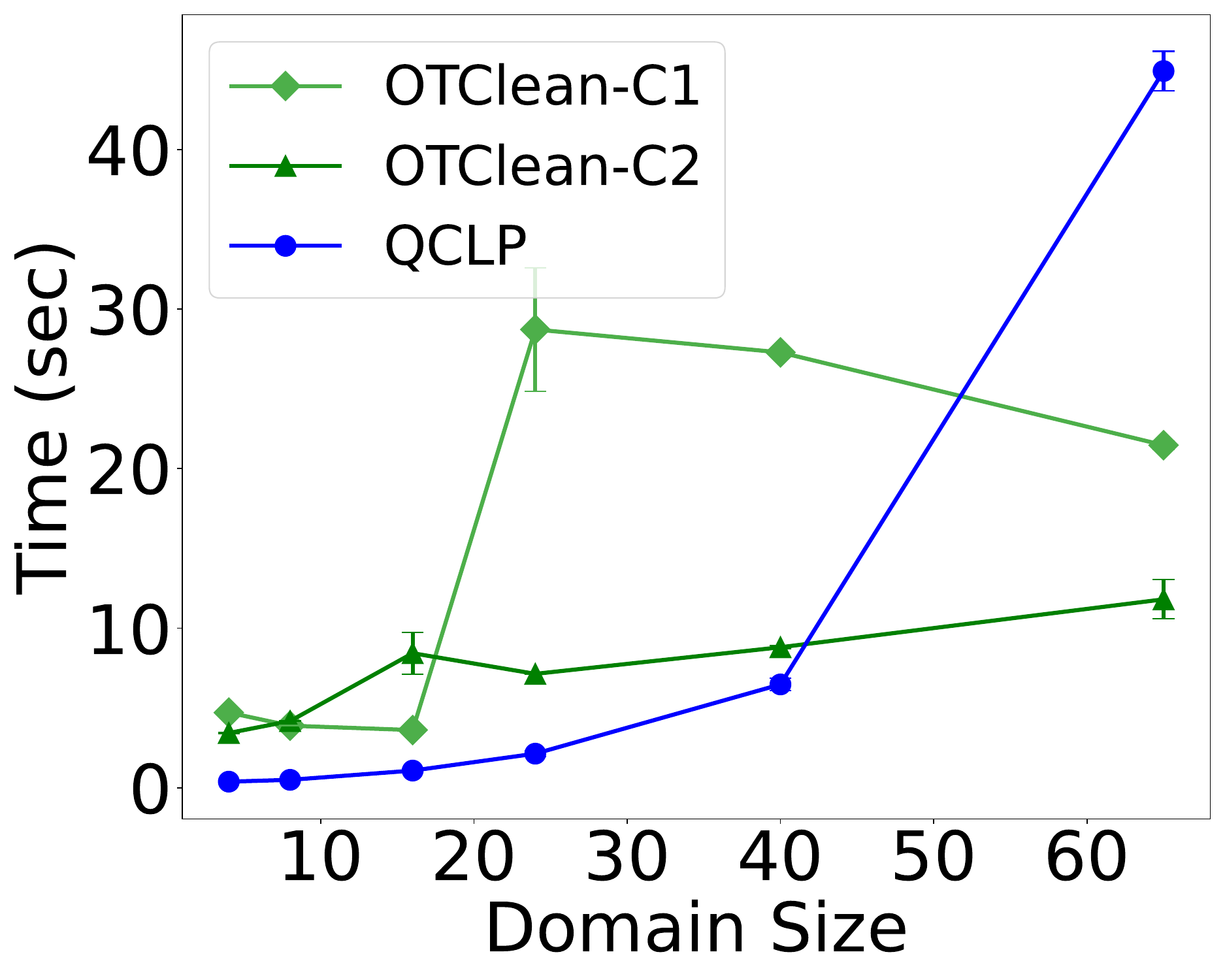}
        \caption{\compas with QCLP}
        \label{fig:runtime-compas-qclp}
    \end{subfigure}
    \hspace{3mm}
    \begin{subfigure}{0.22\textwidth}
        \centering
        \includegraphics[width=\linewidth]{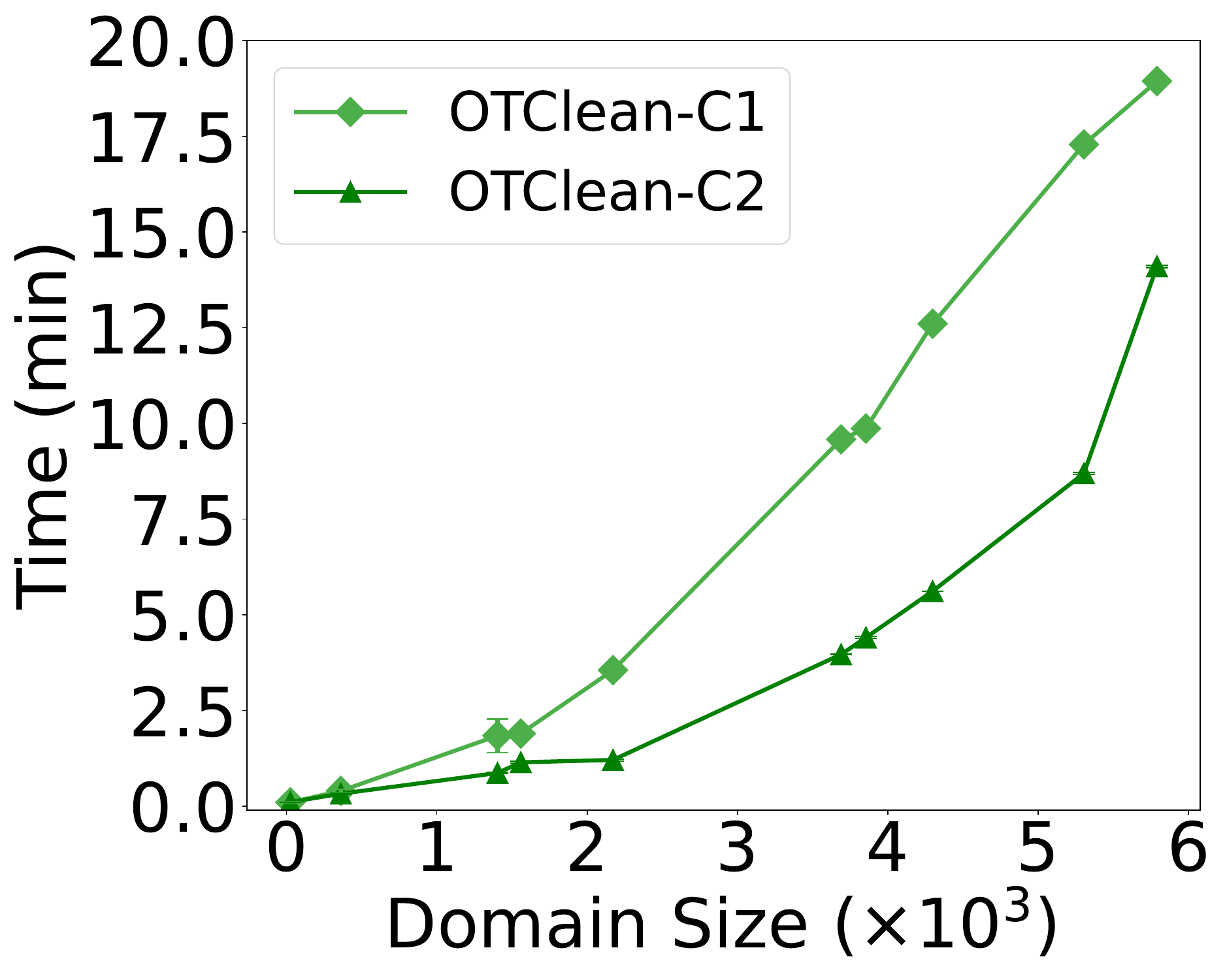}
        \caption{\compas}
        \label{fig:runtime-compas}
    \end{subfigure}
    \vspace{-3mm}
    \caption{Runtime}
    \label{fig:runtime}
\end{figure*}

\begin{figure*}[h]
    \centering
    \begin{subfigure}{0.22\textwidth}
        \centering
        \includegraphics[width=\linewidth]{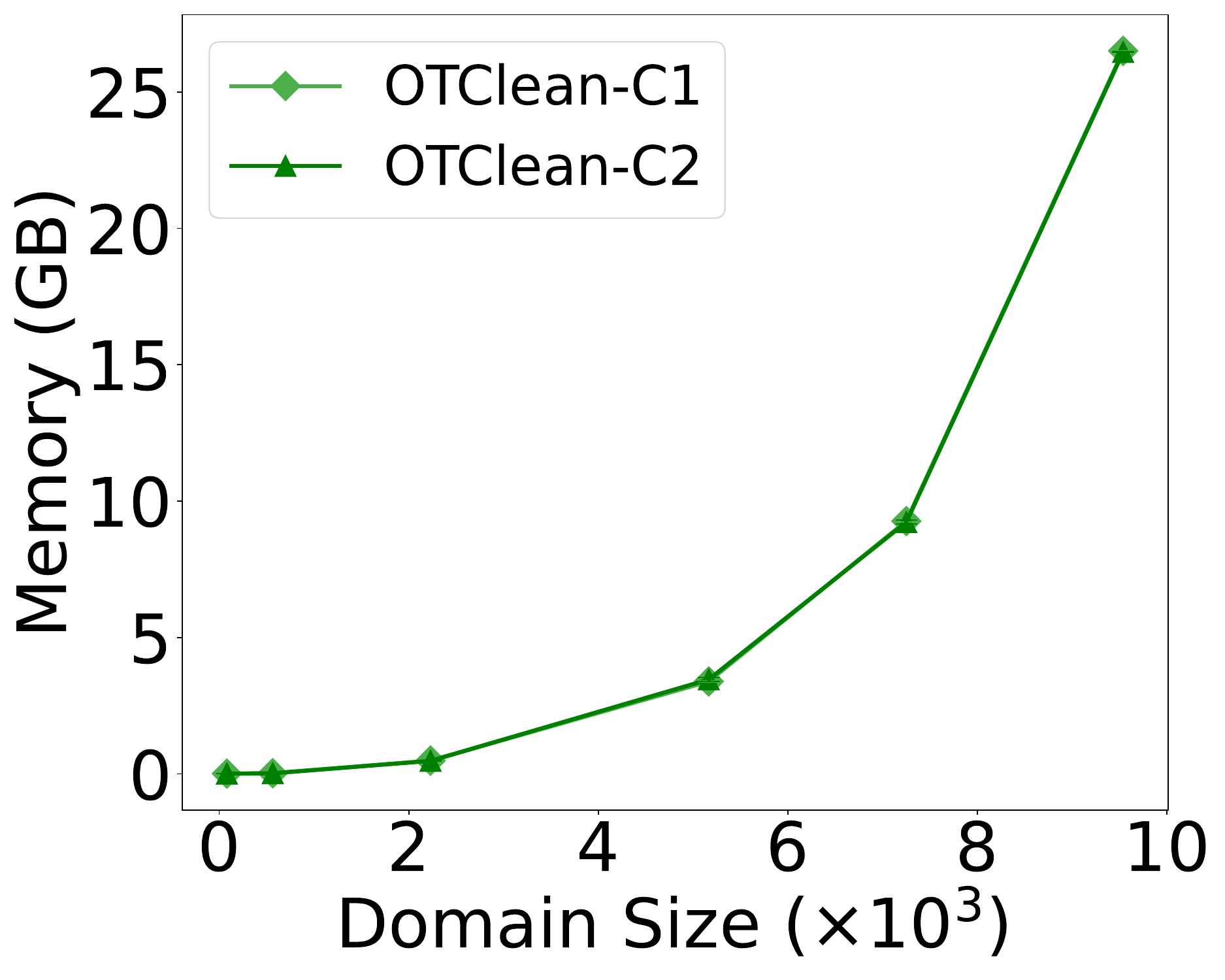}
        \caption{\adult}
        \label{fig:memory-adult}
    \end{subfigure}
    \hspace{3mm}
    \begin{subfigure}{0.22\textwidth}
        \centering
        \includegraphics[width=\linewidth]{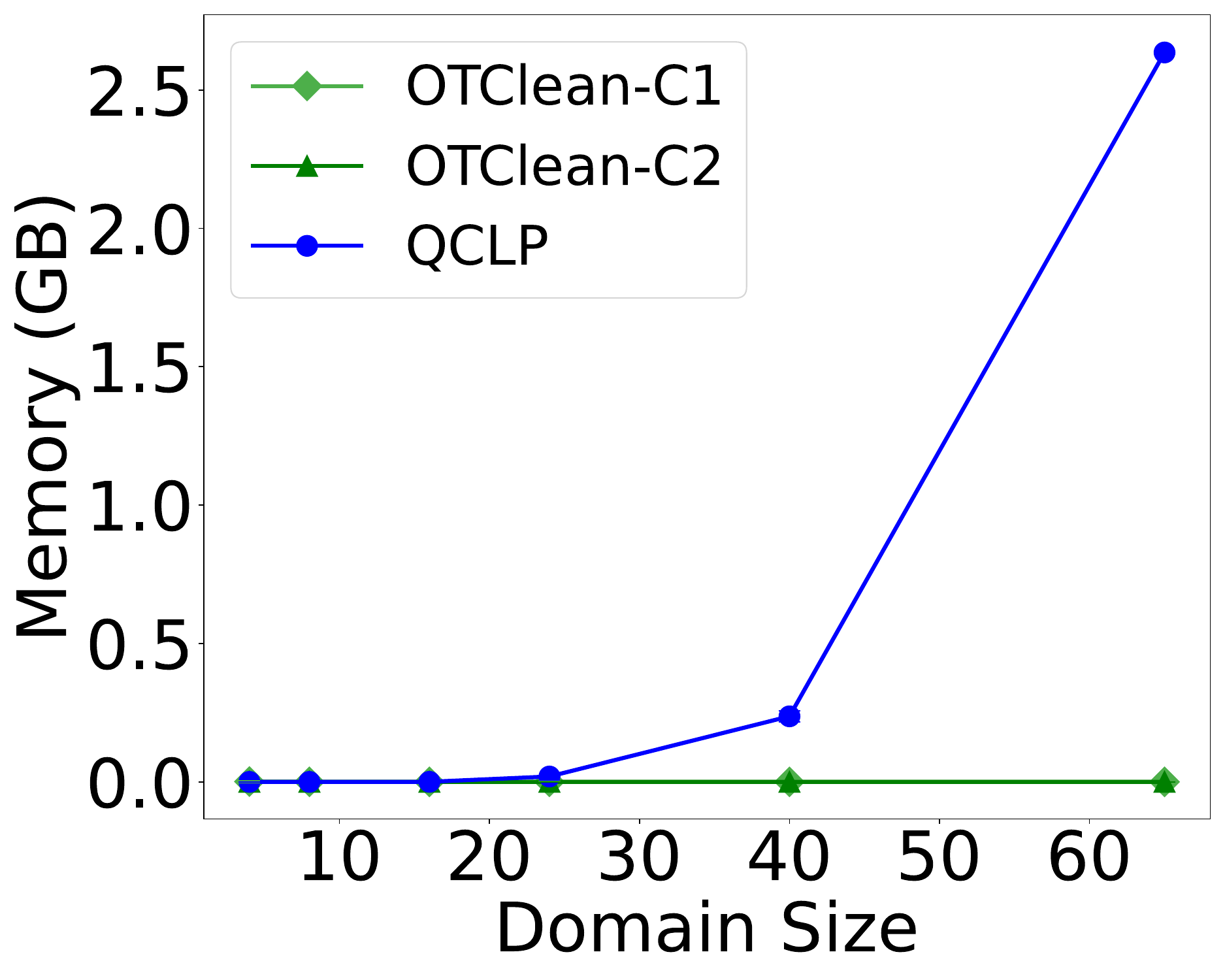}
        \caption{\compas with QCLP}
        \label{fig:memory-compas-qclp}
    \end{subfigure}
    \hspace{3mm}
    \begin{subfigure}{0.22\textwidth}
        \centering
        \includegraphics[width=\linewidth]{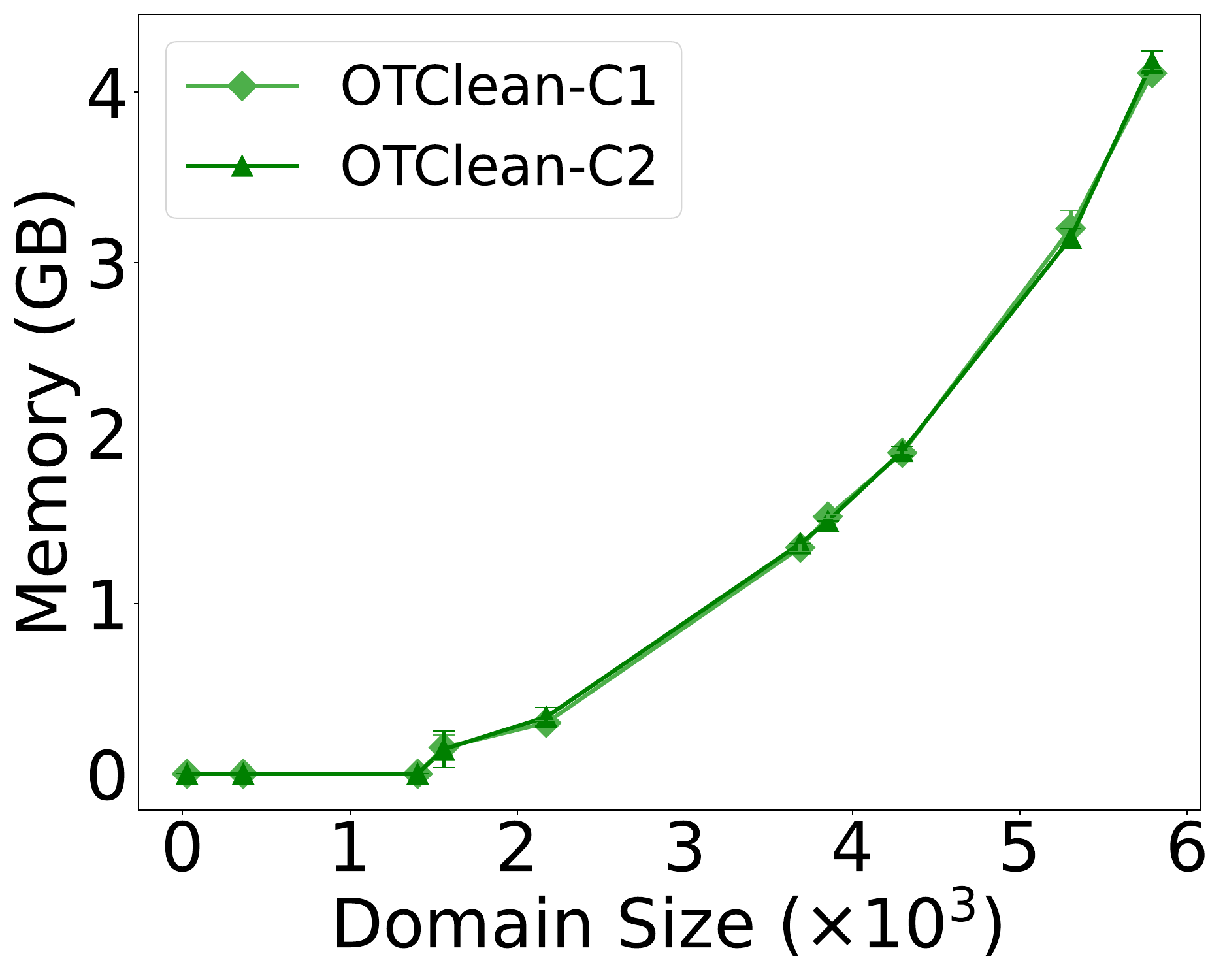}
        \caption{\compas}
        \label{fig:memory-compas}
    \end{subfigure}
    \vspace{-3mm}
    \caption{Memory Usage}
    \label{fig:memory}
\end{figure*}

\subsection{Runtime and Memory Analysis: \mainAlg vs QCLP} \label{sec:runtime-cmp} 

We conducted additional experiments to assess the performance of \mainAlg and compare it with QCLP. To assess the runtime performance and memory usage of \mainAlg and QCLP, we used the \adult and \compas datasets. The experiments are similar to those described in Section~\ref{sec:sys-exp}, which involves incrementally adding attributes to the CI constraints from these datasets to enlarge the domain size. This process allows us to analyze how increasing domain sizes affect both solutions' runtime and memory usage. The outcomes of this analysis are illustrated in Figures~\ref{fig:runtime} and~\ref{fig:memory}.

In Figure~\ref{fig:runtime-adult}, we present the runtime performance of \mainAlg on the \adult dataset across a range of domain sizes, using the two cost functions detailed in Section~\ref{sec:tunning}. We omitted QCLP from this figure due to its prohibitively high memory demands, which resulted in failure at even the smallest domain sizes for \adult. The figure demonstrates that \mainAlg efficiently manages CI constraints involving multiple attributes, even in scenarios with extensive domain sizes.

Figure~\ref{fig:runtime-compas-qclp} contrasts the runtime performance of \mainAlg and QCLP in the \compas dataset, focusing on smaller domain sizes where QCLP does not fail. For larger domain sizes, Figure~\ref{fig:runtime-compas} specifically examines the runtime changes in \mainAlg, as QCLP fails in these conditions. These findings corroborate the data presented in Table~\ref{tab:runtimes}, offering a broader perspective on how different domain sizes impact performance. Notably, the runtimes recorded in Table~\ref{tab:runtimes} for small domain sizes in \compas indicate QCLP's superior performance in these specific conditions, although \mainAlg exhibits better performance across other domain sizes, as shown in Figure~\ref{fig:runtime-compas-qclp}.

Finally, Figure~\ref{fig:memory} presents the memory usage of both \mainAlg and QCLP. Like the runtime analysis, the memory consumption was examined for both \adult and \compas datasets. In the case of QCLP, memory usage data is available only for smaller domain sizes in \compas. The key insight from these observations is that \mainAlg consistently requires less memory than QCLP, especially as the domain size increases.

\subsection{Integrating Background Knowledge}\label{sec:background}

Another set of experiments focused on understanding how considering prior background knowledge about erroneous attributes affects our data repair process, especially when dealing with attribute noise. Figure~\ref{fig:BGK} illustrates the difference in performance between models trained without knowledge of erroneous attributes (OTClean-Blind) and those trained with background knowledge (OTClean-BG). The results reveal that knowing which attributes to repair significantly improves our solution, almost matching the performance of a clean dataset. We provide results for the \boston dataset since the \car dataset already exhibited high performance with blind repair, leaving little room for improvement.

\begin{figure}[h]
    \centering
        \centering
        \includegraphics[width=0.5\linewidth]{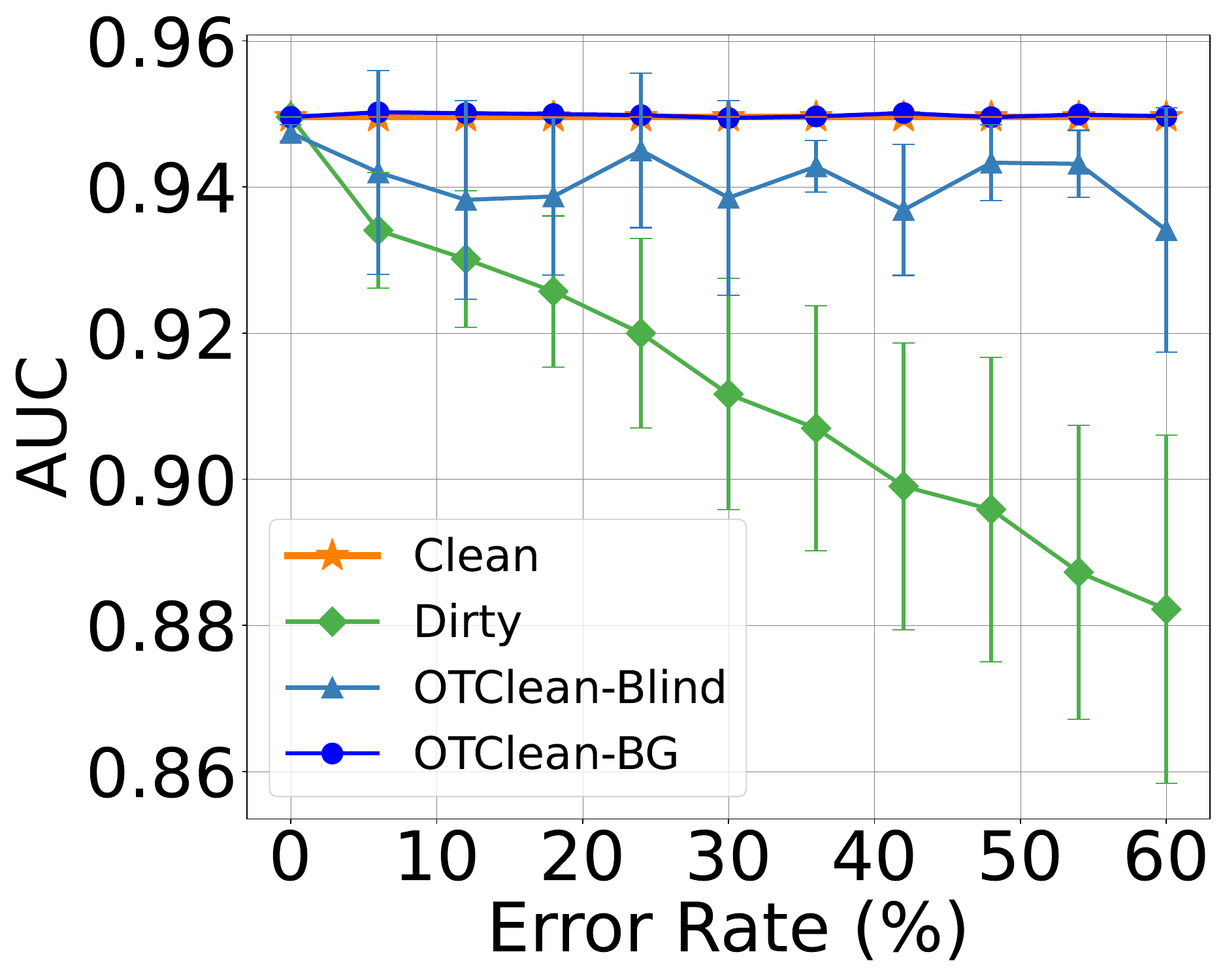}
        \label{fig:BGK}
    \caption{Blind Repair vs Repair with Background Knowledge}
    \label{fig:BGK}
\end{figure}

\subsection{Additional Results for Missing Value}

\begin{figure}[h]
    \centering
    \begin{subfigure}{0.22\textwidth}
        \centering
        \includegraphics[width=\linewidth]{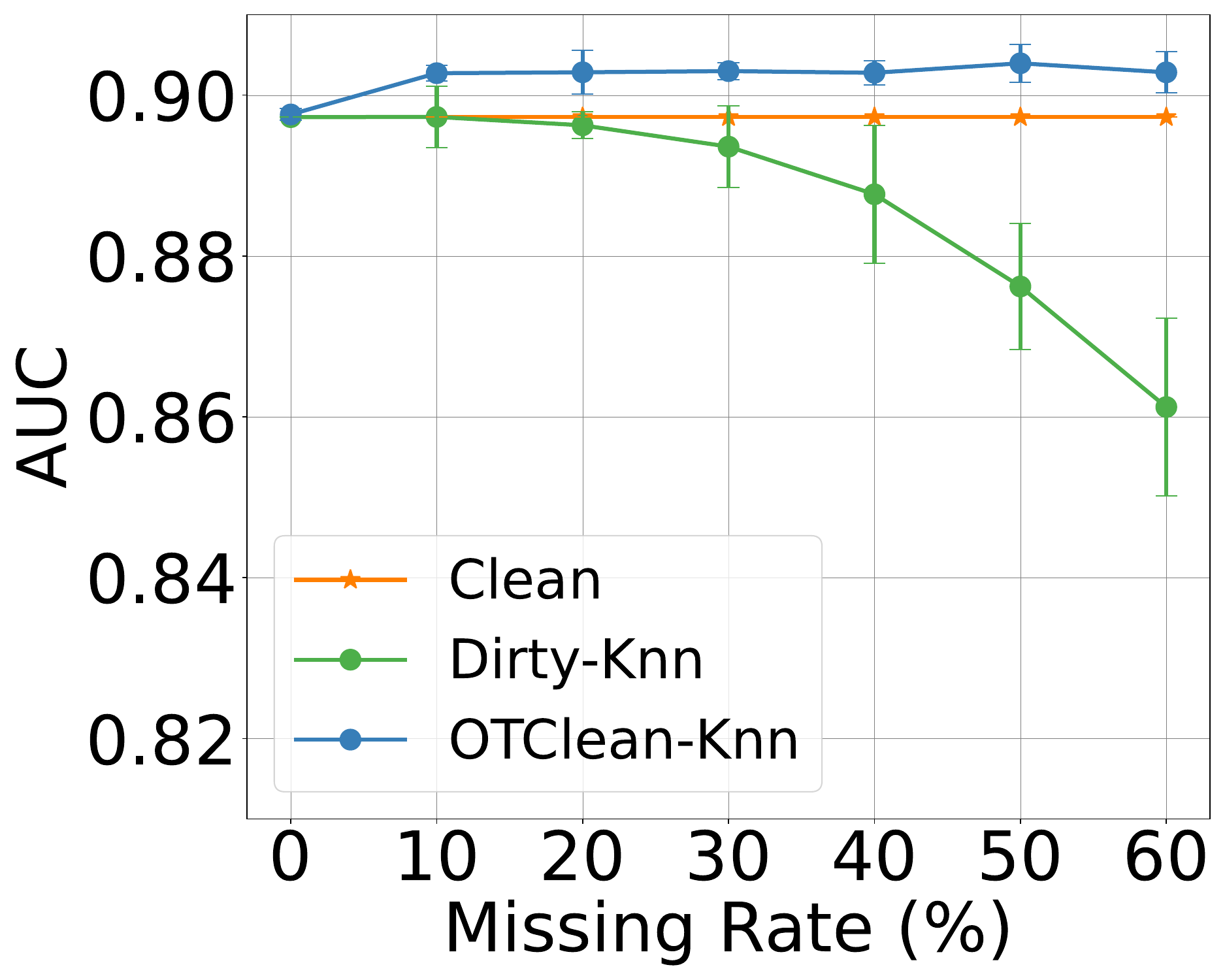}
        \caption{kNN Imputation (\boston)}
        \label{fig:knn-mnar-boston}
    \end{subfigure}
    \hfill
    \begin{subfigure}{0.22\textwidth}
        \centering
        \includegraphics[width=\linewidth]{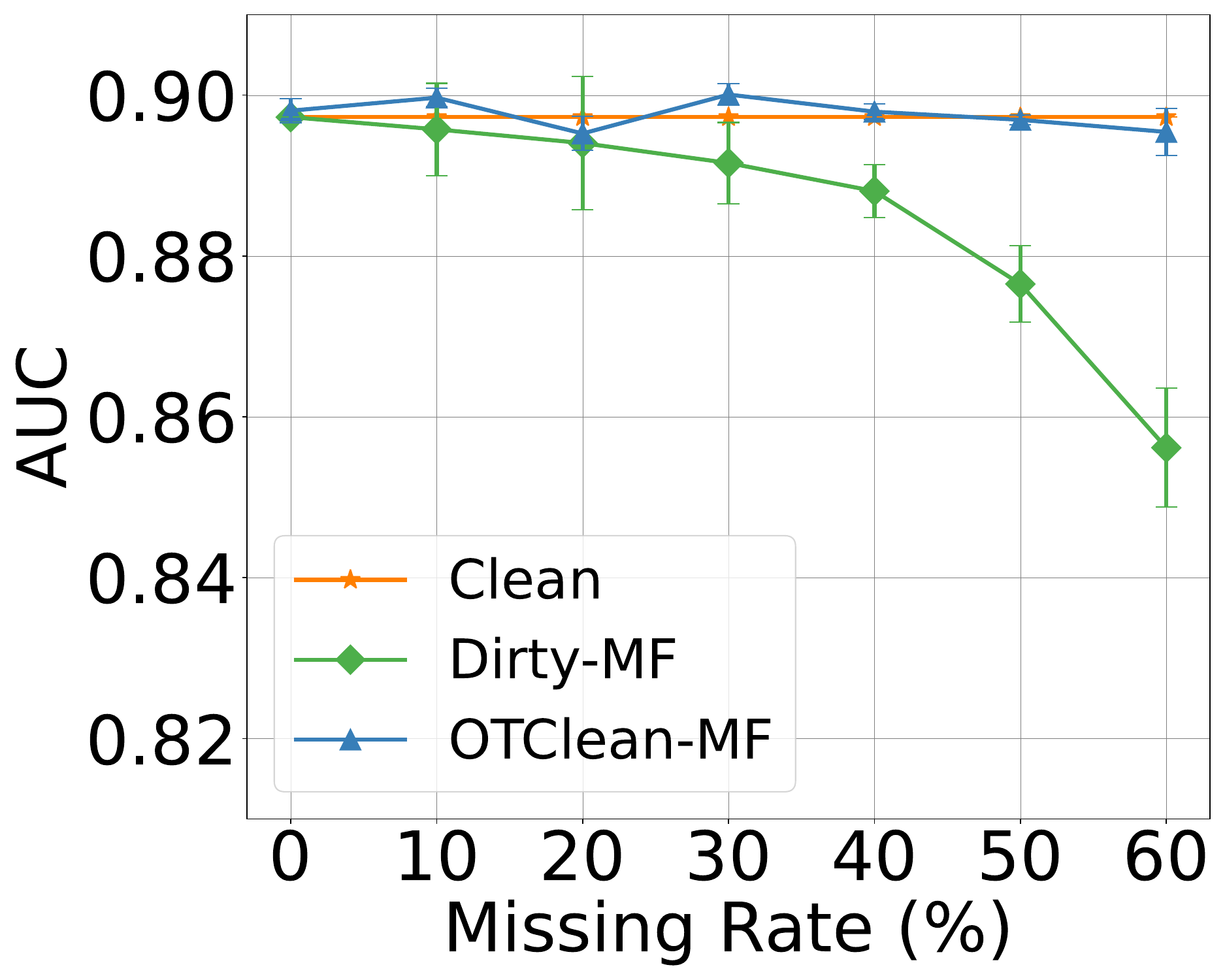}
        \caption{Most Frequent (\boston)}
        \label{fig:mf-mnar-boston}
    \end{subfigure}
    \vspace{-3mm}
    \caption{Missing Not at Random (MNAR).}
    \label{fig:mnar-extra}
\end{figure}

\begin{figure}[h]
    \centering
    \begin{subfigure}{0.22\textwidth}
        \centering
        \includegraphics[width=\linewidth]{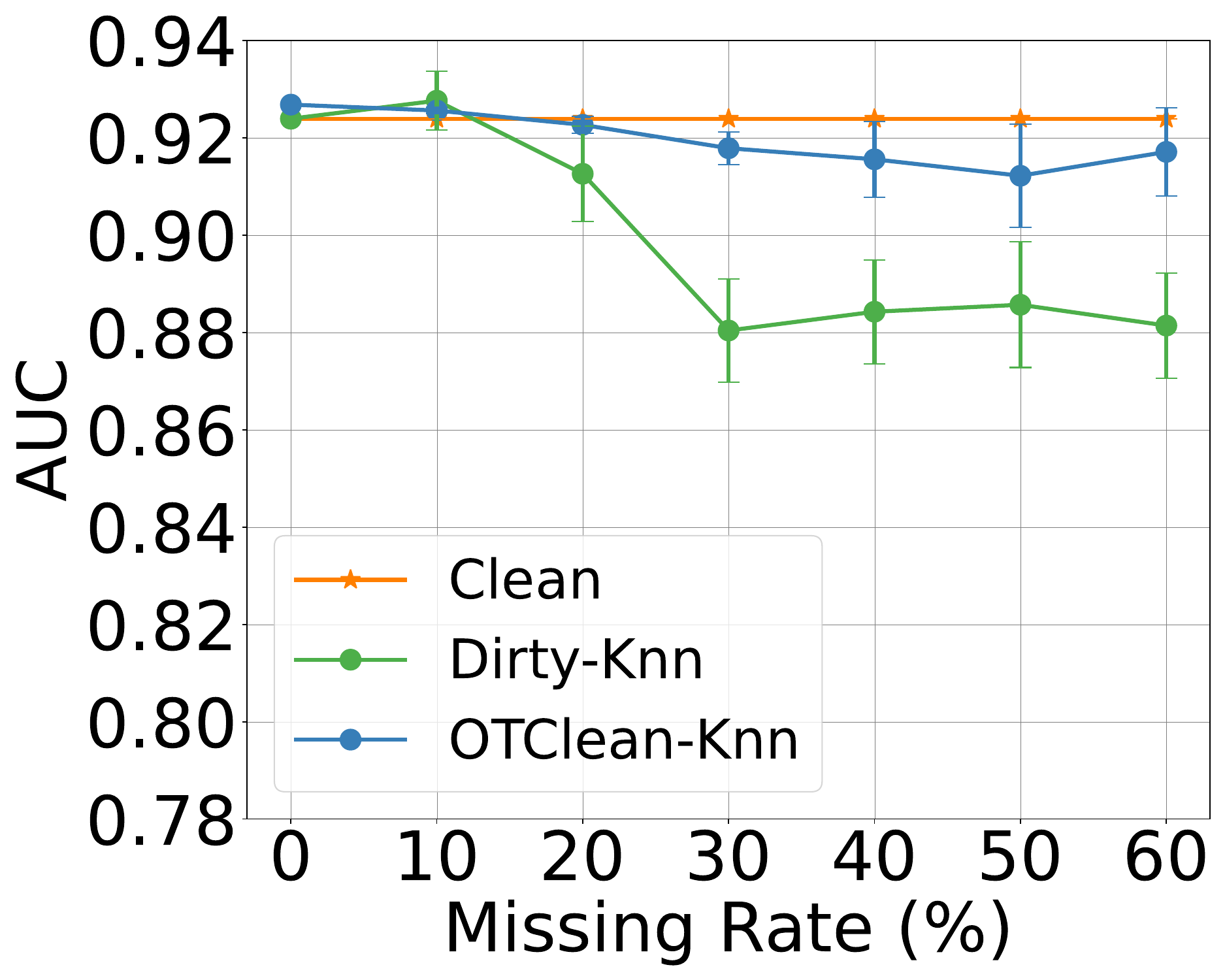}
        \caption{kNN Imputation (\car)}
        \label{fig:knn-mar-car}
    \end{subfigure}
    \hfill
    \begin{subfigure}{0.22\textwidth}
        \centering
        \includegraphics[width=\linewidth]{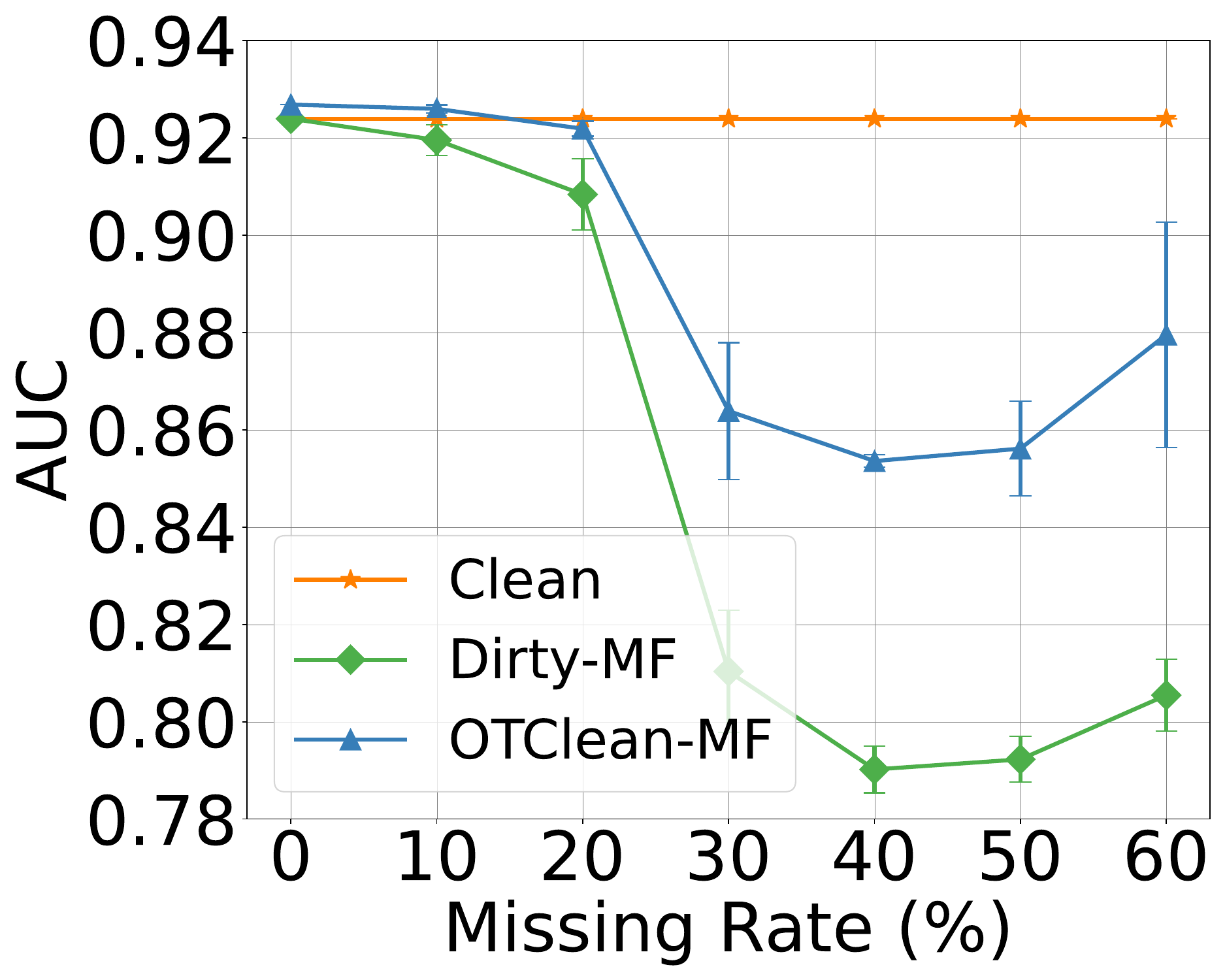}
        \caption{Most Frequent (\car)}
        \label{fig:mf-mar-car}
    \end{subfigure}
    \centering
    \caption{Missing at random (MAR)}
    \label{fig:mar-extra}
\end{figure}

We conducted extensive experiments on missing value imputation for both the \boston and \car datasets, including scenarios of MAR and MNAR. These additional experiments were not included in the main body of the paper but are presented in this section, as shown in Figures~\ref{fig:mnar-extra} and \ref{fig:mar-extra}.

The results reaffirm our earlier conclusions regarding the effectiveness of \sys in repairing data to mitigate the spurious correlations introduced by imputation methods employed to handle missing values. This is particularly more evident in Figures~\ref{fig:knn-mnar-boston}-\ref{fig:knn-mar-car}. As illustrated in Figure~\ref{fig:mf-mar-car}, the efficacy of \sys is still contingent on both the initial imputation technique used and the missing data rate. Higher rates of missing values and the use of a simplistic imputation method like naive most-frequent (MF) imputation can lead to reduced performance, even after applying \sys. Nevertheless, it is worth noting that \sys consistently delivers significant improvements compared to imputation methods without subsequent repair.

\end{document}